\documentclass{article}





\usepackage[preprint]{neurips_2020}
\usepackage[utf8]{inputenc} 
\usepackage[T1]{fontenc}    
\usepackage{url}            
\usepackage{booktabs}       
\usepackage{amsfonts}       
\usepackage{nicefrac}       
\usepackage{microtype}      
\usepackage{graphicx}
\raggedbottom
\usepackage{subfig}
\usepackage{amssymb,amsbsy,amsmath,amsfonts,amssymb,amscd}
\usepackage{amsthm}
\usepackage{comment}
\usepackage{verbatim}
\usepackage[dvipsnames]{xcolor}
\usepackage{enumitem}
\usepackage{algorithm,algpseudocode}
\usepackage{hyperref}       
\usepackage{setspace}
\newtheorem{theorem}{Theorem}[section]
\newtheorem{proposition}{Proposition}[section]
\newtheorem{definition}{Definition}[section]

\newtheorem{lemma}{Lemma}[section]
\newtheorem{assum}{Assumption}[section]
\newtheorem{cor}{Corollary}[section]

\newcommand{\EE}{\mathbb{E}}

\newcommand{\Cov}{\mathrm{Cov}}
\newcommand{\Var}{\mathrm{Var}}

\newcommand{\wx}{\tilde{x}}
\newcommand{\wv}{\tilde{v}}
\newcommand{\wrx}{\widetilde{\mathrm{X}}}
\newcommand{\wrv}{\widetilde{\mathrm{V}}}
\newcommand{\rx}{\mathrm{X}}

\newcommand{\wsx}{\widetilde{X}}
\newcommand{\wsv}{\widetilde{V}}

\newcommand{\rv}{\mathrm{V}}

\newcommand{\ww}{\tilde{w}}
\newcommand{\rd}{\,\mathrm{d}}

\title{Random Coordinate Underdamped Langevin Monte Carlo}


\author{%
  Zhiyan Ding\\
  Department of Mathematics\\
  University of Wisconsin-Madison\\
  Madison, WI 53706, USA \\
  \texttt{zding49@math.wisc.edu} \\
  \And
   Qin Li \\
  Department of Mathematics \\
  University of Wisconsin-Madison\\
  Madison, WI 53706, USA \\
  \texttt{qinli@math.wisc.edu} \\
  \And
  Jianfeng Lu\\
Mathematics Department\\
Duke University \\
Durham, NC 27708, USA \\
\texttt{jianfeng@math.duke.edu} \\
\AND
Stephen J. Wright\\
Computer Sciences Department\\
University of Wisconsin-Madison\\
Madison, WI 53706, USA \\
\texttt{swright@cs.wisc.edu}
}

\begin{document}

%

%
\maketitle
\begin{abstract}
The Underdamped Langevin Monte Carlo (ULMC) is a popular Markov chain Monte Carlo sampling method. It requires the computation of the full gradient of the log-density at each iteration, an expensive operation if the dimension of the problem is high. 
We propose a sampling method called Random Coordinate ULMC (RC-ULMC), which selects a single coordinate at each iteration to be updated and leaves the other coordinates untouched. 
We investigate the computational complexity of RC-ULMC and compare it with the classical ULMC for strongly log-concave probability distributions. We show that RC-ULMC is always cheaper than the classical ULMC, with a significant cost reduction when the problem is highly skewed and high dimensional. Our complexity bound for RC-ULMC is also tight in terms of dimension dependence. 
\end{abstract}

\section{Introduction}


Langevin Monte Carlo (LMC) is a popular Monte Carlo sampling method, widely used in Bayesian statistics and machine learning~\citep{MCMCforML}. 
The goal is to construct a Markov chain that approximately generates i.i.d. samples from a target distribution given by (with some abuse of notation, we do not distinguish a distribution with its density)
\begin{equation}\label{eqn:target distribution}
p_{\rx}(x)=\frac{1}{Z}e^{-f(x)}\,,
\end{equation}
where $Z$ is a normalizing constant that ensures $\int p_{\rx}(x)\rd x=1$. Throughout the paper we assume $f(x)$ is a convex function on $\mathbb{R}^d$, and thus $p_{\rx}(x)$ is a log-concave probability distribution.

Among the many Monte Carlo sampling methods, LMC~\citep{doi:10.1063/1.436415,PARISI1981378,roberts1996} stands out for its simplicity: For each iteration one updates the location of the particle by descending along the gradient and adding properly scaled Gaussian noise.
For strongly log-concave distributions, it has been established in recent years that the empirical distribution of the iterate in LMC converges exponentially fast to the target distribution, with total computational cost $\widetilde{O}(d^2/\epsilon^2)$ to achieve $\epsilon$ accuracy in Wasserstein distance~\citep{DALALYAN20195278,durmus2018analysis}. Here and throughout the paper, we measure ``cost" in terms of the total number of evaluations of a single element of the gradient, and assume that a full gradient evaluation requires about $d$ times as much computation as a single  component of the gradient.

To reduce the computational cost of sampling, the underdamped version of Langevin dynamics has recently been used to design the ULMC algorithm. 
By augmenting the state space with velocity variables, ULMC achieves faster convergence than LMC: To get $\epsilon$ accuracy, the computational complexity is $\widetilde{O}(d^{3/2} / \epsilon)$, improving dependence on both $d$ and $\epsilon$~\citep{Cheng2017UnderdampedLM,dalalyan2018sampling,eberle2018couplings}. 

This work aims at further improving the algorithm in terms of its dimension dependence, especially for the very high dimensional problems that arise often in practical applications (see \citep{ding2020random} for discussions of several examples). 
For these problems, ULMC requires a full evaluation of the gradient $\nabla f$ at each iteration, which often costs a factor of $O(d)$ greater than evaluating of a single component of the gradient. 
This factor arises when the expression of $\nabla f$ is not known explicitly, such as in  partial differential equation (PDE) based inverse problems, where $f$ is given implicitly by solving the forward problem given as a PDE, and finite-difference approximation to the full gradient would be $d$ times more expensive than a single component. While automatic differentiation techniques have been developed, the cost of evaluation of the gradient often still leads to formidable computational and memory cost. 

Other examples in which there is a factor-of-$d$ difference in evaluation cost between a full gradient and single component of the gradient come from problems with particular structures, such as graph-based problems. 
Given a graph with nodes ${\cal N} = \{1,2,\dotsc,d\}$ and directed edges ${\cal E} \subset \{ (i,j) : i,j \in {\cal N} \}$, suppose there is a scalar variable $x_i$ associated with each node $i=1,2,\dotsc,d$, and that the function $f$ has the form $f(x) = \sum_{(i,j) \in {\cal E}} f_{ij}(x_i,x_j)$. 
Then the partial derivative of $f$ with respect to $x_i$ is given by
\[
\frac{\partial f}{\partial x_i} = \sum_{j : (i,j) \in {\cal E}} \frac{\partial f_{ij}}{\partial x_i} (x_i,x_j) + \sum_{l : (l,i) \in {\cal E}} \frac{\partial f_{li}}{\partial x_i} (x_l,x_i)\,.
\]
Note that the number of terms in the summations in this expression equals the number of edges in the graph that touch node $i$, the expected value of which is about $2/d$ times the total number of edges in the graph. 
Meanwhile, evaluation of the full gradient would require evaluation of both partial derivatives of each component function $f_{ij}$ for {\em all} edges in the graph, leading to a factor-of-$d$ difference in evaluation cost.

In this work, we target these problems in which single components of the gradient are much less expensive than full gradients by incorporating the random coordinate descent (RCD) method from optimization into
underdamped Langevin sampling algorithm.
RCD  differs from gradient descent (GD) in that it updates just a single component, chosen at random, of the variable vector $x$ at each iteration. It takes a step in the negative gradient direction in just this component, leaving other components unchanged. (By contrast, gradient descent takes a step along the full negative gradient direction.) 
When there is a factor-of-$d$ difference in cost between evaluating the full gradient and a single component of the gradient, worst-case bounds for convex problems are better for RCD than for GD, the cost reduction being particularly significant when the dimension $d$ is high and $f$ is ``skewed'' in a sense to be defined later. 
Specifically, it was shown in~\cite{doi:10.1137/100802001} that when the coordinate is chosen from a distribution weighted according to the directional Lipschitz constants, the complexity is reduced from $d\kappa$ to $d\kappa_{\max}$, where $\kappa$ and $\kappa_{\max}$ are conditioning of $f$ and the maximum directional conditioning of $f$ respectively. 
Since $\kappa_{\max}\leq \kappa$ for all functions \citep{Ste-2015}, RCD is always cheaper than GD. 
Further, when the dimension of the problem is high and $f$ is skewed in the sense that $\kappa \approx d\kappa_{\max}$, the  reduction in cost approaches a factor of $d$.

In this paper, we propose the random coordinate underdamped Langevin Monte Carlo (RC-ULMC) algorithm.
We aim to improve the convergence of ULMC by utilizing cheaper steps, as in  RCD, so we establish non-asymptotic convergence rates for RC-ULMC and compare with classical ULMC. 
Our main results are as follows:
\begin{itemize}[itemsep=0pt]
\item[1.] The convergence rate of RC-ULMC depends on directional conditioning; see Theorem~\ref{thm:rculmc}.
\item[2.] Comparing with ULMC, RC-ULMC is always cheaper than the classical ULMC, the change being 
\[
d^{3/2}\kappa^{3/2} \to (d^{3/2}+\kappa)\kappa^{1/2}\,.
\]
This cost reduction is significant when $f$ is skewed and the dimension is high; see the discussion following Corollary~\ref{cor:rculmc}.
\item[3.] The complexity bound of the RC-ULMC we obtain is tight in both $d$ and $\epsilon$; see  Proposition~\ref{prop:badexampleW22}.
\end{itemize}

The remainder of the paper is as follows.
We review literature in Section~\ref{sec:references} and summarize basic notations and assumptions in Section~\ref{sec:notations}. 
In Section~\ref{sec:classical} we review  ULMC and its convergence properties. 
In Section~\ref{sec:algorithm} we present our new method RC-ULMC. 
Our main results are presented in Section~\ref{sec:result}, where we discuss the non-asymptotic convergence rate, the numerical cost, the cost saving compared to the classical ULMC, and the tightness of the result. 
Computational results are presented in  Section~\ref{sec:numerics}.
Technical derivations and proofs appear in Supplementary Materials.

\section{Related works}\label{sec:references}

The non-asymptotic analysis of LMC and ULMC sampling methods has been an active area \citep{Cheng2017UnderdampedLM,dalalyan2018sampling,DALALYAN20195278,durmus2018analysis}; and it has been established that ULMC gives a faster convergence under the same log-concavity and smoothness assumptions on the distribution. When ULMC is modified with a better discretization scheme, e.g., the random midpoint method, the computational complexity can be even further reduced \citep{Shen2019TheRM, YeH}. 

For ULMC, it was established in~\citep{dalalyan2018sampling} that it achieves $\epsilon$ error in Wasserstein metric within $\tilde{O}\left(d^{1/2}\kappa^{3/2}/\epsilon\right)$ iterations, where $\tilde{O}$ hides $\log$ factors.  The total cost of ULMC is therefore $\tilde{O}\left(d^{3/2}\kappa^{3/2}/\epsilon\right)$. In comparison, the RC-ULMC method proposed in this paper is always cheaper and the saving can be significant for highly skewed distributions in high dimension. 

The combination of RCD and LMC (based on overdamped Langevin dyanmics) has been recently explored in works \citep{shen2019nonasymptotic,ding2020random}. This algorithm will be referred to as RC-OLMC (where ``O" stands for overdamped). Compared with their result, the method in this paper converges faster both in terms of $d$ and $\epsilon$, similar to the saving obtained going from LMC to ULMC. This will be discussed further in~Section \ref{sec:result}.

Alternative sampling strategies have been developed without using the full gradient $\nabla f$ at each step. A standard approach is the Random Walk Metropolis algorithm, which combines a random walk proposal with Metropolis-Hastings acceptance-rejection step \citep{MCSH}, and thus only uses $f$ at each iteration. However, they are less efficient in high dimensions compared with gradient based methods \citep{mattingly2012diffusion,pillai2012optimal}. There have been recent interests in ensemble based sampling methods, in particular in the context of data assimilation, inspired by the ensemble Kalman filter \citep{Evensen:2006:DAE:1206873}, such as~\citep{EKS,Iglesias_2013}. Unfortunately, none of these methods can be completely ``gradient-free'' and at the same time consistent for non-Gaussian distributions~\citep{ding2019ensemble,ding2019ensemble2}. To achieve consistency, one can try to incorporate weights to particles, as is done in importance sampling \citep{IM1989} or sequential Monte Carlo \citep{SMCBOOK}, however such methods often face the difficulty of high variance~\citep{ding2020ensemble}. 

When the log-density $f(x)$ has the form of $f(x)=\sum^N_{i=1}f_i(x)$, one can randomly select a representative $\nabla f_r$ as a stochastic approximation to the full gradient, where $r$ is uniformly chosen from $\{1\,,\cdots\,,N\}$. This leads to the stochastic gradient Langevin Monte Carlo method~\citep{welling2011bayesian}. Note that in general we can write the full gradient as $\nabla f = \sum_{i=1}^d\partial_if \boldsymbol{e}_i$ (where $\boldsymbol{e}_i$ is the unit vector in $i$-th direction), and thus RCD and stochastic gradient, while used for different setups, share some similarity in reducing the cost of gradient evaluation.


\section{Notations and assumptions}\label{sec:notations}
Throughout the paper we assume convexity and gradient Lipschitz continuity of $f$.
\begin{assum}\label{assum:Cov}
The function $f$ is second-order differentiable and $\mu$-strongly convex for some $\mu>0$ and the gradient $\nabla f$ is $L$-Lipschitz. Specifically, we have: for all $x,x'\in\mathbb{R}^d$
\begin{equation}\label{Convexity}
f(x)-f(x')-\nabla f(x')^\top (x-x')\geq \frac{\mu}{2} |x-x'|^2\, 
\end{equation}
and
\begin{equation}\label{GradientLip}
|\nabla f(x)-\nabla f(x')|\leq L|x-x'|\,. 
\end{equation}
\end{assum}

Since the full gradient is Lipschitz continuous, so is its directional derivative. 
We denote directional Lipschitz constants by $L_i$, $i = 1, 2, \ldots, d$, meaning that
\begin{equation}\label{GradientLipcoord}
|\partial_i f(x+t\boldsymbol{e}_i)-\partial_i f(x)|\leq L_i |t|\,,
\end{equation}
for any $i=1,2,\dots,d$, any $x\in\mathbb{R}^d$, and any $t \in \mathbb{R}$. 

Denote $\nabla^2 f$ the Hessian, then the assumption implies that for all $x$
\[
\mu{I}_d\preceq \nabla^2 f(x) \preceq L{I}_d,\quad \left|\partial_{ii} f(x)\right|\leq L_i\,,
\]
where $\partial_{ii} f(x)$ is the $(i,i)$-element of $\nabla^2 f(x)$.
We also define condition numbers:
\begin{equation}\label{eqn:R}
\kappa=L/\mu\geq 1,\  \kappa_i=L_i/\mu\geq1,\  \kappa_{\max} = \max_i\kappa_i\,.
\end{equation}
As shown in~\cite{Ste-2015}, we have
\begin{equation} \label{eq:LiL}
\kappa_i \le \kappa_{\max} \le \kappa\le d\kappa_{\max}\,.
\end{equation}
We note that both inequalities, $\kappa_{\max} \leq \kappa$ and $\kappa\leq d\kappa_{\max}$ are sharp. 
If $\nabla^2 f$ is a diagonal matrix, then $L_{\max}=L$, both being the largest eigenvalue of $\nabla^2 f$, so that $\kappa_{\max} = \kappa$. 
This is the case when all coordinates are independent of each other, for example $f = \sum_i\lambda_ix_i^2$. 
On the other hand, if $f$ is highly skewed, such as  $f = (\sum_i x_i)^2$, so $\nabla^2f = \mathsf{e}\cdot\mathsf{e}^\top$ (where $\mathsf{e} = [1,1,\dots,1]^\top$), then $L = dL_{\max}$ and $\kappa = d\kappa_{\max}$.

Furthermore, since $\kappa_i\geq 1$, we have for $p>0$ that
\[
   (d-1)+\kappa^p_{\max}\leq \sum^d_{i=1} \kappa^p_i\leq d\kappa^p_{\max}\leq d\kappa^p\,. 
\]
Both bounds are tight. In the case when $f=\kappa_1|x_1|^2+\sum^d_{i=1}|x_2|^2$ with $\kappa_1>1$, $\sum^d_{i=1} \kappa^p_i=(d-1)+\kappa^p_{1}=(d-1)+\kappa^p_{\max}$. On the other hand, when $f=\sum_ix^2_i$, we have $\kappa_i=\kappa_{\max}=1$, then $\sum^d_{i=1} \kappa^p_i=d\kappa^p_{\max} = d\kappa^p$. And we say $f$ is highly skewed if 
\begin{equation}\label{fschew}
\sum^d_{i=1} \kappa^p_i\approx (d-1)+\kappa^p_{\max}\,.
\end{equation}

To measure the distance between two probability distributions, we use the Wasserstein distance.
\begin{definition}\label{def:W}
The Wasserstein distance $W_p$ (for any $p \ge 1$) between probability measures $\mu$ and $\nu$ is defined as
\[
W_p(\mu,\nu) = \left(\inf_{(X,Y)\in \Gamma(\mu,\nu)} \mathbb{E}|X -Y|^p\right)^{1/p}\,,
\]
where $\Gamma(\mu,\nu)$ is the set of distribution of $(X,Y)\in\mathbb{R}^{2d}$ whose marginal distributions, for $X$ and $Y$ respectively, are $\mu$ and $\nu$.
\end{definition}

In this paper, we will use the $2$-Wasserstein metric $W_2$.

\section{Classical ULMC}\label{sec:classical}
Underdamped Langevin dynamics is characterized by the following SDE:
\begin{equation}\label{eqn:underdampedLangevin}
\left\{\begin{aligned}
&\rd X_t = V_t\rd t\,,\\
&\rd V_t = -2 V_t\rd t-\gamma\nabla f(X_t)\rd t+\sqrt{4 \gamma}\rd B_t\,, 
\end{aligned}\right.
\end{equation}
where $\gamma>0$ is a parameter to be tuned, and $B_t$ is the Brownian motion. 
Here we use the parametrization form of \cite{Cheng2017UnderdampedLM} (alternative parametrizations are used in \cite{dalalyan2018sampling,Shen2019TheRM}). 
Denoting by $q(x,v,t)$ the probability density function of $(X_t,V_t)$, we have that $q$ satisfies the Fokker-Planck equation
\[
\partial_tq=\nabla\cdot \left(\begin{bmatrix}
-v\\
2v+\gamma\nabla f
\end{bmatrix}q+\begin{bmatrix}
0 & 0\\
0 & 2\gamma
\end{bmatrix}\nabla q\right)\,.
\]
It is well known that under mild conditions, $q$ converges to $p(x,v)\propto\exp(-(f(x)+|v|^2/2\gamma))$ (see e.g., \cite{Villani2006,DOLBEAULT2009511,Baudoin2016WassersteinCP,cao2019explicit}), and thus the marginal density function for $x$ becomes the target distribution $p_{\rx}(x)$.

Denoting by $h>0$ the time stepsize, we have that for $t\in[mh,(m+1)h]$, ~\eqref{eqn:underdampedLangevin} is equivalent to
\begin{equation}\label{eqn:langevin_integral}
\begin{aligned}
X(t)=&X(mh)+\frac{1-e^{-2t}}{2}V(mh)-\frac{\gamma}{2}\int^t_{mh}\left(1-e^{-2(t-s)}\right)\nabla f\left(X(s)\right)\rd s\\
&+\sqrt{\gamma}\int^t_{mh}\left(1-e^{-2(t-s)}\right)\rd B_s\,,
\end{aligned}
\end{equation}
\begin{equation}\label{eqn:langevin_integral_v}
\begin{aligned}
V(t)=&V(mh)e^{-2t}-\gamma\int^t_{mh}e^{-2(t-s)}\nabla f\left(X(s)\right)\rd s+\sqrt{4 \gamma}e^{-2 t}\int^t_{mh}e^{2 s}\rd B_s\,.
\end{aligned}
\end{equation}
The sampling method, ULMC, can be viewed as a numerical solver for  \eqref{eqn:langevin_integral}-\eqref{eqn:langevin_integral_v} based on the Euler approximation. 
Denoting by $(x^m,v^m)$ the numerical approximation to $(X(mh),V(mh))$, and replacing $X(s)$ in~\eqref{eqn:langevin_integral}-\eqref{eqn:langevin_integral_v} by $x^m$,  Euler approximation yields that $(x^{m+1},v^{m+1})\in\mathbb{R}^{2d}$ are two Gaussian random vectors with the following expectation and covariance:
\begin{align}
&\EE, x^{m+1}=x^m+\frac{1}{2}\left(1-e^{-2h}\right)v^m\nonumber-\frac{\gamma}{2}\left(h-\frac{1}{2}\left(1-e^{-2h}\right)\right)\nabla f(x^m)\,,\nonumber\\
&\EE \, v^{m+1}=v^me^{-2h}-\frac{\gamma}{2}\left(1-e^{-2h}\right)\nabla f(x^m)\,,\nonumber\\
&\Cov\left(x^{m+1}\right)=\gamma\left[h-\frac{3}{4}-\frac{1}{4}e^{-4h}+e^{-2h}\right]\cdot I_d\,,\nonumber\\
&\Cov\left(v^{m+1}\right)=\gamma\left[1-e^{-4h}\right]\cdot I_d\,,\nonumber\\
&\Cov\left(x^{m+1}\,,v^{m+1}\right)=\frac{\gamma}{2}\left[1+e^{-4h}-2e^{-2h}\right]\cdot I_d\,\label{distributionofZ}.
\end{align}
Here $\EE$ denotes the expectation, and $\Cov(a,b)$ denotes the covariance of $a$ and $b$ (abbreviated to $\Cov(a)$ when $b=a$), and  $I_d$ is the identity matrix in $\mathbb{R}^d$. 
We thus draw $x^{m+1},v^{m+1}$ from this Gaussian distribution numerically to update the iteration. 
We summarize ULMC in Algorithm~\ref{alg:OULMC}.
\begin{algorithm}[htb]
\caption{\textbf{Underdamped Langevin Monte Carlo (ULMC)}}\label{alg:OULMC}
\begin{algorithmic}
\State {\bf Input:} $h$ (time stepsize); $\gamma$ (parameter); $d$ (dimension); $\nabla f(x)$; $M$ (stopping index).
\State {\bf Initial:} $(x^0,v^0)$ i.i.d. sampled from the initial distribution $q_0(x,v)$.
\For{$m=0\,,1\,,\cdots,M$}
\State 1. Compute the expectation and the covariance as in~\eqref{distributionofZ}.
\State 2. Sample $(x^{m+1},v^{m+1})$ from the associated Gaussian distribution.
\EndFor
\State \textbf{Output:} $\{x^m\}$.
\end{algorithmic}
\end{algorithm}

The algorithm converges exponentially when $f$ is strongly convex with Lipschitz continuous gradient; see~\cite{dalalyan2018sampling}. 
The original statement uses a different parametrization. We translate the result to the current one in Supp.~\ref{sec:proofofthm:convergenceulmc1} and restate the result here.
\begin{theorem}\label{thm:convergenceulmc}[\cite[Theorem~2]{dalalyan2018sampling}]
Assume $f$ satisfies Assumption~\ref{assum:Cov} and  that 
\[
\gamma\leq \frac{4}{\mu+L}, \quad h\leq\frac{\gamma^{1/2}\mu}{8L}.
\]
Then we have
\begin{equation}\label{eqn:convergenceulmc}
W_m\leq \sqrt{2}\exp(-0.375\mu h\gamma^{1/2} m)W_0+(2d)^{1/2}\kappa h\,.
\end{equation}
Here $W_m := W_2(q_m,p)$ and $q_m(x,v)$ denotes the probability density function of iteration $m$ of ULMC.
Moreover, suppose the initial $W_0$ is $O(1)$, then the total number of iterations to achieve $\epsilon$ accuracy is $\tilde{O}\left(\frac{d^{1/2}\kappa^{3/2}}{\mu^{1/2}\epsilon}\right)$, and the cost is $\tilde{O}\left(\frac{d^{3/2}\kappa^{3/2}}{\mu^{1/2}\epsilon}\right)$.
\end{theorem}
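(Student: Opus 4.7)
The plan is to recognize that this is a restatement, under a different parametrization, of Theorem~2 of Dalalyan--Riou-Durand. The first step is therefore to set up a dictionary between the two parametrizations (the scaling of the friction coefficient, and the factor of $\gamma$ multiplying $\nabla f$), and to verify that the conditions $\gamma\leq 4/(\mu+L)$, $h\leq \gamma^{1/2}\mu/(8L)$ and the contraction rate $0.375\,\mu h\gamma^{1/2}$ in~\eqref{eqn:convergenceulmc} are exactly what the original statement gives after this change of variables. This translation is the job of the supplementary section referenced immediately before the theorem.

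For the underlying convergence proof, I would follow the standard synchronous coupling approach. Let $(X(t),V(t))$ solve the SDE~\eqref{eqn:underdampedLangevin} with initial law $p$, so it stays at stationarity, and let $(x^m,v^m)$ denote the ULMC iterates driven by the same Brownian motion, started from an optimal $W_2$-coupling with $p$ at time $0$. To handle the lack of dissipation in the $x$-component, introduce a twisted quadratic Lyapunov functional $\Psi(X,V,x,v)=a|X-x|^2+2b\langle X-x,V-v\rangle+c|V-v|^2$ with coefficients $a,b,c>0$ chosen so that $\Psi$ is equivalent, up to universal constants, to $|X-x|^2+|V-v|^2$, and so that along the joint continuous flow one gets an exponential contraction of $\Psi$ at rate $\asymp \gamma^{1/2}\mu$, using $\nabla^2 f\succeq \mu I_d$ together with $\nabla^2 f\preceq LI_d$ and the stated bound on $\gamma$.

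Next I would bound the one-step discretization error. Since ULMC is the Euler-type approximation of~\eqref{eqn:langevin_integral}--\eqref{eqn:langevin_integral_v} in which $\nabla f(X(s))$ is replaced by $\nabla f(x^m)$ throughout, the local error in $(x,v)$ is driven by $\gamma\int_{mh}^{(m+1)h}(1-e^{-2((m+1)h-s)})\bigl(\nabla f(X(s))-\nabla f(x^m)\bigr)\rd s$ and its $v$ analogue. Bounding these using the Lipschitz estimate $|\nabla f(X(s))-\nabla f(x^m)|\leq L|X(s)-x^m|$ and the standard SDE-regularity bounds $\mathbb{E}|X(s)-X(mh)|^2 = O(dh^2)$, $\mathbb{E}|V(s)-V(mh)|^2=O(dh)$, one obtains a local $W_2$ increment of order $\sqrt{d}\,\kappa\, h^{3/2}\gamma^{1/2}$ after the twist. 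Combining this with the continuous-time contraction from the previous step by a discrete Gronwall argument gives the two-term bound in~\eqref{eqn:convergenceulmc}.

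The main obstacle is not the structure of the argument but the bookkeeping: getting the precise constants $\sqrt{2}$, $0.375$ and $\sqrt{2d}\,\kappa$ requires optimizing the coefficients $a,b,c$ of the twisted metric and tracking the Lipschitz and regularity constants tightly, which is where the requirement $h\leq \gamma^{1/2}\mu/(8L)$ enters. The complexity count is then mechanical: choose $h\asymp \epsilon/(\sqrt{d}\,\kappa)$ so that the discretization term is $O(\epsilon)$, take $\gamma\asymp 1/L$ to maximize the contraction rate subject to $\gamma\leq 4/(\mu+L)$, and solve $0.375\,\mu h\gamma^{1/2} m\gtrsim \log(W_0/\epsilon)$ for $m$. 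This yields $m=\widetilde{O}(d^{1/2}\kappa^{3/2}/(\mu^{1/2}\epsilon))$ iterations and, since each iteration queries a full gradient at cost $O(d)$, a total cost of $\widetilde{O}(d^{3/2}\kappa^{3/2}/(\mu^{1/2}\epsilon))$.
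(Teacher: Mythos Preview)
Your first paragraph is exactly the paper's proof: the paper defines $Y(t)=X(\sqrt{\gamma}\,t)$, $Z(t)=\gamma^{-1/2}V(\sqrt{\gamma}\,t)$, checks that $(Y,Z)$ solves the standard underdamped SDE $\rd Z_t=-\gamma_1 Z_t\rd t-\nabla f(Y_t)\rd t+\sqrt{2\gamma_1}\rd B_t$ with $\gamma_1=2/\sqrt{\gamma}$, and then simply invokes \cite[Theorem~2]{dalalyan2018sampling} as a black box.

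Everything from your second paragraph onward (the twisted quadratic Lyapunov functional, the synchronous coupling, the local error and discrete Gronwall) is a correct sketch of how one proves the Dalalyan--Riou-Durand result itself, but the paper does none of this; it treats the cited theorem as given and only supplies the change-of-variables dictionary. So your proposal is not wrong, just strictly more than what the paper does: the paper's entire argument is your first paragraph.
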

The cost depends on both the dimensionality $d$ and condition number $\kappa$ with $3/2$ power for both.

\section{Randomized Coordinate Underdamped Langevin Monte Carlo}\label{sec:algorithm}



We integrate the RCD idea into ULMC to yield our method RC-ULMC. Instead of updating every entry of the process as is done in~\eqref{eqn:langevin_integral}--\eqref{eqn:langevin_integral_v}, we randomly select one direction $r^m \in \{1,2,\dotsc,d\}$ and evolve only $(X_{r^m}\,,V_{r^m})(t)$. 
Correspondingly, we would only change one single entry  $(x^m_{r^m},v^m_{r^m})$ according to expectation and covariance, analogous to~\eqref{distributionofZ}.

We denote the discrete distribution from which $r^m$ is chosen by $\Phi$, with $\phi_i$ being the probability of component $i$ being chosen, that is,
\begin{equation} \label{eq:def.Phi}
\Phi := \{\phi_1,\phi_2,\dotsc, \phi_d\}\,,
\end{equation}
where $\phi_i > 0$ for all $i$ and $\sum_{i=1}^d \phi_i=1$. 
Denoting by $h_i$ the stepsize when $i$-th direction is chosen, we choose $h_i$ to be inversely proportional to $\phi_i$, as follows:
\begin{equation}\label{condition:pranh}
h_i=\frac{h}{\phi_i}\,, \quad i=1,2,\dotsc,d\,,
\end{equation}
where $h>0$ is a parameter that can be viewed as the expected stepsize. 
We also define the total elapsed time after $m$ steps as
\[
T^m=\sum^{m-1}_{n=0} h_{r^n}\,.
\]
The initial iterate $(x^0,v^0)$ is drawn from a distribution $q_0$, which  can be any distribution that is easy to draw from (e.g., a normal distribution).

Because only component $r^m$ is updated at iteration $m$ of  RC-ULMC, we have for  $t \in [T^m, T^{m+1}]$ that 
\begin{align}
X_{r^m}(t)&=X_{r^m}(T^m)+\frac{1-e^{-2t}}{2}V_{r^m}(T^m)-\frac{\gamma}{2}\int^t_{T^m}\left(1-e^{-2(t-s)}\right)\partial_{r^m} f(X(s))\rd s\nonumber\\
&\quad +\sqrt{\gamma}\int^t_{T^m}\left(1-e^{-2(t-s)}\right)\rd B_s\,,\label{eqn:ULDSDE2continummx}\\
V_{r^m}(t)&=V_{r^m}(T^m)e^{-2(t-T^m)}-\gamma\int^{t}_{T^m}e^{-2(t-s)}\partial_{r^m}f(X(s)) \rd s+\sqrt{4\gamma}\int^{t}_{T^m}e^{-2(t-s)}\rd B_s\,,\label{eqn:ULDSDE2continummv}\\
X_{i}(t)& =X_i(T^m),\; V_{i}(t)=V_i(T^m)\,,\quad i\neq r^m\label{eqn:ULDSDE2continummxvfixed}\,.
\end{align}
%
To obtain a practical algorithm, we apply the Euler approximation to these dynamics. 
Denoting by $(x^{m}_{r^m},v^{m}_{r^m})$ the numerical approximation to $\left(X(T^m)\,,V(T^m)\right)$, we replace $\partial_{r^m} f(X(s))$ in~\eqref{eqn:ULDSDE2continummx}-\eqref{eqn:ULDSDE2continummxvfixed} by $\partial_{r^m} f\left(x^m\right)$, so that  $x^{m+1}_i=x^m_i$, $v^{m+1}_i=v^m_i$ for $i\neq r^m$, and $(x^{m+1}_{r^m}\,,v^{m+1}_{r^m})$ are two Gaussian random variables with the following expectation and covariance:
\begin{align}
&\EE x^{m+1}_{r^m}=x^m_{r^m}+\frac{1}{2}\left(1-e^{-2h_{r^m}}\right)v^m_{r^m}-\frac{\gamma}{2}\left(h_{r^m}-\frac{1}{2}\left(1-e^{-2h_{r^m}}\right)\right)\partial_{r^m} f(x^m)\,,\nonumber\\
&\EE v^{m+1}_{r^m}=v^m_{r^m}e^{-2h_{r^m}}-\frac{\gamma}{2}\left(1-e^{-2h_{r^m}}\right)\partial_{r^m} f(x^m)\,,\nonumber\\
&\mathrm{Cov}\left(x^{m+1}_{r^m}\right)=\gamma\left[h_{r^m}-\frac{3}{4}-\frac{1}{4}e^{-4h_{r^m}}+e^{-2h_{r^m}}\right]\,,\nonumber\\
&\mathrm{Cov}\left(v^{m+1}_{r^m}\right)=\gamma\left[1-e^{-4h_{r^m}}\right]\,,\nonumber\\
&\Cov\left(x^{m+1}_{r^m}\,,v^{m+1}_{r^m}\right)=\frac{\gamma}{2}\left[1+e^{-4h_{r^m}}-2e^{-2h_{r^m}}\right]\,\label{distributionofZ3}.
\end{align}
Then, $(x^{m+1}_{r^m}\,,v^{m+1}_{r^m})$ is drawn according to this Gaussian distribution for the update. 
We summarize the RC-ULMC approach in Algorithm~\ref{alg:RCD-OULMC}.

\begin{algorithm}[htb]
\caption{\textbf{Random Coordinate Underdamped Langevin Monte Carlo} (RC-ULMC)}\label{alg:RCD-OULMC}
\begin{algorithmic}
\State {\bf Input:} $h$ (time stepsize); $\gamma$ (parameter); $d$ (dimension); $\nabla f(x)$; probability set $\Phi := \{\phi_1,\phi_2,\dotsc, \phi_d\}$; $M$ (stopping index).
\State {\bf Initial:} $(x^0,v^0)$ i.i.d. sampled from the initial distribution induced by $q_0(x,v)$.
\For{$m=0\,,1\,,\cdots,M$}
\State 1. Draw $r$ randomly from $1,2,\dotsc,d$ according to $\Phi$;
\State 2. Update $(x^{m+1},v^{m+1})$ as follows:
\begin{itemize}
\item $x^{m+1}_i=x^m_i,v^{m+1}_i=v^{m}_i$ for $i\neq r$;
\item sample $(x^{m+1}_r,v^{m+1}_r)$ as Gaussian variables according to~\eqref{distributionofZ3}.
\end{itemize}
\EndFor
\State \textbf{Output:} $\{x^m\}$.
\end{algorithmic}
\end{algorithm}

%
%
%

\section{Main results}\label{sec:result}
We have three main results regarding the underlying dynamics (the SDE), and the RC-ULMC algorithm.
In Section~\ref{sec:continuumofrculmc}, we discuss convergence  of the SDE~\eqref{eqn:ULDSDE2continummx}-\eqref{eqn:ULDSDE2continummxvfixed}. 
This SDE can be viewed as  the continuum version of the RC-ULMC algorithm. 
Only with the convergence of this SDE can we hope for the convergence of RC-ULMC.
In Section~\ref{sec:rculmc}, we describe the non-asymptotic convergence properties of RC-ULMC. 
From this result, we can determine an optimal strategy for selecting the coordinate $r^m$ at each iteration.
We will also compare our results with those for classical ULMC, showing that the bounds for RC-ULMC are always better. 
Moreover, when $f$ is highly skewed --- for example when $\kappa_1=\kappa_{\max}\gg 1$ and $\kappa_i\approx 1$ for $i\geq2$ --- the total cost is $\widetilde{O}\left(\left(d^{3/2}+\kappa_{\max}\right)\kappa^{1/2}/\epsilon\right)$, as compared to $\widetilde{O}\left(d^{3/2}\kappa^{3/2}/\epsilon\right)$ for ULMC.

We provide an example in Section~\ref{sec:tight} to show that our bounds are tight.



\subsection{Convergence of SDEs}\label{sec:continuumofrculmc}
Our first result is on the convergence of the SDE~\eqref{eqn:ULDSDE2continummx}-\eqref{eqn:ULDSDE2continummxvfixed}, the underdamped Langevin dynamics that incorporates random coordinate selection.

Denote $X^m=X(T^m)$, $V^m=V(T^m)$ and denote the probability filtration by $\mathcal{F}^m=\left\{ x^0, v^0, r^{n\leq m}, B_{s\leq T^m}\right\}$. Then we have the following result about its geometric ergodicity.
\begin{theorem}\label{thm:contiummRCULMC}
Suppose that $f$ satisfies Assumption~\ref{assum:Cov} and
\[
\gamma\leq \frac{1}{L},\quad h\leq \frac{\gamma \mu \min\{\phi_i\}}{312+12\gamma+8L+432L^2}\,,
\]
then $\left\{(X^m,V^m)\right\}^\infty_{m=0}$ is a Markov chain. 
Denoting by $q_m(x,v)$ the probability density function of $(X^m,V^m)$, we have the following:
\begin{itemize}
\item The stationary distribution has density $p(x,v)\propto\exp(-(f(x)+|v|^2/2\gamma))$.
\item When the initial distribution $q_0$ has finite second moments, there exist constants $R>0$ and $r>1$ independent of $m$ such that
\begin{equation}\label{eqn:converge3}
\int_{\mathbb{R}^{2d}} |q_m(x,v)-p(x,v)|\rd x\rd v\leq Rr^{-m}\,.
\end{equation}
\end{itemize}
\end{theorem}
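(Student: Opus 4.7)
The plan is to verify the three pieces of the conclusion — Markov property, stationarity of $p$, and geometric convergence in total variation — in sequence; the first two are essentially built into the construction, while the third is the substantive part and I would approach it through the Harris / Meyn--Tweedie framework.

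\textbf{Markov property and stationarity.} Given $\mathcal{F}^m$, the update $(X^{m+1},V^{m+1})$ prescribed by \eqref{eqn:ULDSDE2continummx}--\eqref{eqn:ULDSDE2continummxvfixed} is a measurable function of $(X^m,V^m)$, the fresh coordinate sample $r^m\sim\Phi$, and the Brownian increments on $[T^m,T^{m+1}]$, with the latter two independent of $\mathcal{F}^m$; averaging over them yields a time-homogeneous transition kernel $P$. For stationarity, conditional on $r^m=i$ the dynamics is precisely the one-dimensional underdamped Langevin SDE in coordinate $i$ with potential $x_i\mapsto f(x_i,x_{-i})$ and all other $(x_j,v_j)$ frozen. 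Its invariant density at fixed $x_{-i}$ is $\propto\exp(-f(x_i,x_{-i})-v_i^2/(2\gamma))$, which coincides with the conditional of $p$ given $(x_{-i},v_{-i})$. Hence each coordinate-wise transition preserves $p$, and so does the $\phi$-weighted mixture.

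\textbf{Geometric convergence via Harris.} It suffices to establish (i) a Lyapunov drift $(PW)(x,v)\le\lambda W(x,v)+K$ for some $W\ge 1$, $\lambda\in(0,1)$, $K<\infty$, and (ii) a minorization $P^n(z;\cdot)\ge\eta\,\nu(\cdot)$ on every sublevel set $\{W\le R\}$. For (i), I would take a modified energy in the standard ULMC style, $W(x,v)=A(f(x)-f^*)+B|v|^2+C\,v^\top(x-x^*)+D$, with positive weights tuned so that $W$ is coercive, and compute the one-step expected change using \eqref{eqn:ULDSDE2continummx}--\eqref{eqn:ULDSDE2continummxvfixed}. The Euler discretization errors are controlled by the gradient-Lipschitz bound, while the strong convexity of $f$ supplies the negative drift. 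The key mechanism is that with $h_i=h/\phi_i$ each coordinate contributes to the drift with weight $\phi_ih_i=h$, so the average over $r^m\sim\Phi$ reconstructs an effective full-dimensional step of size $h$ and the usual ULMC-style contraction follows. The explicit bound $h\le\gamma\mu\min_i\phi_i/(312+12\gamma+8L+432L^2)$ is exactly what is needed to absorb the discretization and cross-coordinate error terms into the strong-convexity contribution. For (ii), the update in the $r^m$-th coordinate is Gaussian with strictly positive covariance, and every coordinate is selected with probability at least $\min_i\phi_i>0$, so $P^d$ has a strictly positive Gaussian-type density on compact sets, giving the local minorization. Harris' theorem then yields geometric convergence in total variation, which is exactly \eqref{eqn:converge3}.

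\textbf{Main obstacle.} The technical heart is the drift condition (i): a single-coordinate update only dissipates energy in one direction, so the Lyapunov function must be chosen so that contributions from the non-updated coordinates — in particular the cross term $v^\top(x-x^*)$ and the accumulated Euler/It\^o errors — do not overwhelm the contraction in the selected direction. Tuning the weights $A,B,C$ so that the drift averaged over $r^m$ is strictly negative in every mode, once $h$ is sufficiently small, is what forces both the modified-energy form of $W$ and the precise numerical constants in the stepsize restriction; once those estimates are in hand, the minorization condition is a routine consequence of the nondegeneracy of the Gaussian noise.
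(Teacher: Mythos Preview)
Your approach via the Harris/Meyn--Tweedie framework is exactly what the paper does (it invokes \cite[Theorem~2.5]{MATTINGLY2002185}), and your observation that $\phi_ih_i=h$ reconstructs an effective full-dimensional drift after averaging over $r^m$ is precisely the mechanism the paper exploits. Two points deserve comment.

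First, the paper uses the purely quadratic Lyapunov function $L(x,v)=|x-x^*|^2+|x-x^*+v|^2+1$ rather than your modified-energy form $A(f(x)-f^*)+B|v|^2+C\,v^\top(x-x^*)+D$. Under strong convexity the two are equivalent up to constants, but the quadratic choice makes the single-coordinate drift computation considerably cleaner: one only needs to track $|X_i-x_i^*|^2$ and $|X_i-x_i^*+V_i|^2$ under the one-dimensional SDE, and after summing in $i$ the cross term $\langle x-x^*,v\rangle$ is handled via an explicit $2d\times 2d$ block-matrix eigenvalue estimate (the paper's Lemma~\ref{lem:ACrculmc} style argument). By contrast, your choice would require controlling $\EE[f(X^{m+1})-f(X^m)]$ by It\^o's formula, which drags in $\partial_{ii}f$ and is messier. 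The specific constants $312+12\gamma+8L+432L^2$ in the stepsize condition are artifacts of the quadratic calculation.

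Second, your claim that ``the update in the $r^m$-th coordinate is Gaussian with strictly positive covariance'' is incorrect for this theorem: the statement concerns the continuous SDE \eqref{eqn:ULDSDE2continummx}--\eqref{eqn:ULDSDE2continummv}, in which the drift $\partial_{r^m}f(X(s))$ is evaluated along the evolving trajectory, not frozen at $x^m$. The transition is Gaussian only for the discretized RC-ULMC algorithm. The minorization still holds because a one-dimensional SDE with smooth drift and nondegenerate diffusion has a strictly positive continuous transition density, but you must argue via that route rather than via Gaussianity. The paper does this by lower-bounding the $d$-step kernel $\Xi^d$ by $\bigl(\prod_i\phi_i\bigr)$ times the kernel $\Xi_2$ of the auxiliary chain that updates coordinates $1,\ldots,d$ in deterministic order, and then invoking that $\Xi_2$ has a continuous positive density on compacts.
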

The proof, which  can be found in Supp.~\ref{sec:proofofthm:convergenceulmc}, uses the convergence analysis framework of \citep{MATTINGLY2002185}, based on construction of a special Lyapunov function. 
This theorem suggests that the TV distance between $q_m$ and $p$ decays exponentially, meaning that $(X^m,V^m)$ can be seen to be drawn from the target distribution $p$ as $m\to\infty$. 
Since the RC-ULMC algorithm is its Euler approximation, the samples generated by this algorithm are drawn from $p$ as well --- approximately, up to a discretization error.

Note that $R$ and $r$ are independent of $m$ in Theorem~\ref{thm:contiummRCULMC}, but we do not have explicit control on its dependence on parameters such as $h$, $d$, and $L$. 
This is worse in comparison with the results in~\citep{Cheng2017UnderdampedLM,cao2019explicit} for the underdamped Langevin dynamics, where the convergence rate is characterized explicitly in terms of all parameters. 
The difficulty of our case comes mainly from the complicated process of coordinate selection, which prevents us from applying the synchronous coupling approach of \cite{Cheng2017UnderdampedLM, DALALYAN20195278} directly to the dynamics~\eqref{eqn:ULDSDE2continummx}--\eqref{eqn:ULDSDE2continummxvfixed} to establish contraction.
Whether the hypocoercity estimate of \cite{cao2019explicit} can be applied remains an interesting future research direction.

\subsection{Convergence of RC-ULMC}\label{sec:rculmc} 
Regarding the non-asymptotic error analysis of RC-ULMC, we have the following result (cf.~Theorem~\ref{thm:convergenceulmc}).
\begin{theorem}\label{thm:rculmc}
Suppose that $f$ satisfies Assumption~\ref{assum:Cov} and that
\begin{equation}\label{eqn:cond_dis}
\gamma\leq \frac{1}{L},\quad h\leq\min\left\{\frac{\gamma\mu \min\{\phi_i\}}{240}\right\}\,.
\end{equation}
Denote $q_m(x,v)$ the probability density function of iteration $m$ of RC-ULMC and define $W_m=W_2(q_m,p)$.
Then we have
\begin{equation}\label{thm:iterationulmc}
W_m\leq 4\exp\left(-\frac{\mu\gamma mh}{8}\right)W_0+40\gamma^{1/2} h\sqrt{\sum^d_{i=1}\frac{\kappa^2_i}{\phi^2_i}}\,.
\end{equation}
\end{theorem}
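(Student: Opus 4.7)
The plan is to establish~\eqref{thm:iterationulmc} via a synchronous coupling between the RC-ULMC iterates $(x^m,v^m)$ and an auxiliary continuum process $(\wsx^m,\wsv^m)$, where $(\wsx(t),\wsv(t))$ evolves according to the SDE~\eqref{eqn:ULDSDE2continummx}--\eqref{eqn:ULDSDE2continummxvfixed} with initial condition $(\wsx^0,\wsv^0)\sim p$. Theorem~\ref{thm:contiummRCULMC} guarantees that $(\wsx^m,\wsv^m)\sim p$ for every $m$, so controlling the $L^2$ distance between the two chains in the $x$-marginal gives an upper bound on $W_m$. I would couple the chains by sharing the random coordinate $r^m$ and the Brownian increments on $[T^m,T^{m+1}]$, so that in the unselected coordinates $x^{m+1}_j-\wsx^{m+1}_j=x^m_j-\wsx^m_j$, while in the selected coordinate $r^m=i$ the Brownian contributions cancel and the evolution of the single-coordinate difference becomes a deterministic recursion driven only by the gap between the Euler and exact gradient evaluations.

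Next I would carry out the one-step analysis using the integral formulas~\eqref{distributionofZ3} and~\eqref{eqn:ULDSDE2continummx}--\eqref{eqn:ULDSDE2continummv}. The residual term takes the form $\gamma\int_{T^m}^{T^{m+1}}\bigl(1-e^{-2(t-s)}\bigr)\bigl[\partial_i f(x^m)-\partial_i f(\wsx(s))\bigr]\rd s$, which I would split into $\partial_i f(x^m)-\partial_i f(\wsx^m)$ (a ``convergence-to-target'' term, controlled through the full Lipschitz bound~\eqref{GradientLip}) and $\partial_i f(\wsx^m)-\partial_i f(\wsx(s))$ (a pure discretization term). Because only coordinate $i$ of $\wsx$ moves on $[T^m,T^{m+1}]$, the directional Lipschitz inequality~\eqref{GradientLipcoord} gives the clean bound $L_i\lvert\wsx_i(s)-\wsx_i(T^m)\rvert$, and standard displacement estimates for the continuum SDE over a time interval of length $h_i=h/\phi_i$ yield a typical per-step contribution of size $\gamma^{3/2}L_i h_i^{3/2}$ in each of the $x$- and $v$-components.

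To convert these one-step bounds into geometric decay I would work with a twisted Euclidean metric of the form $\|(x,v)\|_\star^2=\|x\|^2+2\eta\langle x,v\rangle+\eta^2\|v\|^2$, choosing $\eta$ so that for each possible choice $r^m=i$ the deterministic single-coordinate update with driving force $-\tfrac{\gamma}{2}(1-e^{-2h_i})\partial_i f$ contracts $(x_i^m-\wsx_i^m,v_i^m-\wsv_i^m)$ by a factor of $1-c\gamma\mu h_i$ in $\|\cdot\|_\star$, while all other coordinates are unchanged. The crucial identity $\phi_i h_i=h$ then makes the expectation over $r^m$ collapse into a \emph{uniform} one-step contraction factor of $1-c\gamma\mu h$ on $\EE\|(x^m-\wsx^m,v^m-\wsv^m)\|_\star^2$. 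Squaring the per-step discretization error and averaging over $r^m$ gives a diffusion contribution of order $\gamma^3 h^3\sum_i L_i^2/\phi_i^2$ per step, and summing the resulting geometric series and taking the square root produces the stated bias $40\gamma^{1/2} h\sqrt{\sum_i\kappa_i^2/\phi_i^2}$ after absorbing $\mu$.

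The main obstacle will be Step 3: making the twisted-norm contraction work uniformly across single-coordinate updates. The classical analyses~\citep{Cheng2017UnderdampedLM,dalalyan2018sampling} rely on the full gradient providing a matrix inequality $\nabla^2 f\succeq\mu I_d$ that yields contraction in a well-chosen quadratic form in one stroke; in our setting a single step only perturbs one coordinate using the scalar second derivative $\partial_{ii}f$, so extracting contraction requires carefully mixing the coordinate-wise contractions against the $x$--$v$ cross coupling in $\|\cdot\|_\star$ and verifying that the off-diagonal Hessian contributions generated by the convergence term $\partial_i f(x^m)-\partial_i f(\wsx^m)$ are dominated by the chosen twist under the stepsize restriction~\eqref{eqn:cond_dis}. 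Once this step is in place, the remainder follows by a discrete Grönwall-type iteration.
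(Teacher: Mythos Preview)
Your overall architecture matches the paper: a synchronous coupling with a reference chain at stationarity, a change to $(x,x+v)$-type coordinates, and discretization-error bounds via the directional Lipschitz constants $L_i$. The paper in fact uses exactly the norm $|\wx^m-x^m|^2+|\ww^m-w^m|^2$ with $w=x+v$; note incidentally that your proposed form $\|x\|^2+2\eta\langle x,v\rangle+\eta^2\|v\|^2=\|x+\eta v\|^2$ is degenerate---you want something like $\|x\|^2+\|x+\eta v\|^2$.

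The genuine gap is in your contraction mechanism. You assert that, conditional on $r^m=i$, the pair $(x_i^m-\wsx_i^m,\,v_i^m-\wsv_i^m)$ contracts by a factor $1-c\gamma\mu h_i$ in the twisted norm. This cannot hold: the driving term $\partial_i f(x^m)-\partial_i f(\wsx^m)=\sum_j\partial_{ij}f(\xi)(x_j^m-\wsx_j^m)$ depends on \emph{all} coordinates of $x^m-\wsx^m$, and the off-diagonal contribution is of order $L\,h_i\,|x^m-\wsx^m|$, a factor $\kappa$ larger than the putative contraction $\mu h_i$. It therefore cannot be ``dominated by the chosen twist under the stepsize restriction''~\eqref{eqn:cond_dis}, which only makes $h$ small relative to $\gamma\mu$, not relative to $\gamma\mu/\kappa$.

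The paper's resolution is different in an essential way. Its per-coordinate one-step bound actually \emph{expands}:
\[
\EE|\Delta^{m+1}_i|^2\le\Bigl(1+\tfrac{\gamma\mu h}{2}+O(h^2)\Bigr)\EE|\Delta^m_i|^2+2\phi_i\,\EE K_i+\text{(discretization error)},
\]
where $K_i$ is a cross term containing the product $[\partial_i f(\wx^m)-\partial_i f(x^m)]\,(\ww_i^m-w_i^m)$. Contraction appears only after summing over $i$: the identity $\phi_i h_i=h$ makes $\sum_i\phi_i K_i$ reconstruct, at leading order in $h$, the \emph{full} inner product $-\gamma h\langle\nabla f(\wx^m)-\nabla f(x^m),\,\ww^m-w^m\rangle$ together with the damping terms, and then the matrix bound $\mu I_d\preceq\mathcal{H}_f\preceq L I_d$ on the full Hessian yields $\sum_i\phi_i\EE K_i\le -\tfrac{\gamma\mu h}{2}\EE|\Delta^m|^2+O(h^2)$. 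So the mechanism is not ``each coordinate contracts, then average'' but ``each coordinate contributes one component of the gradient, and the sum reproduces the full-gradient contraction''. Your identification of Step~3 as the main obstacle is correct; the fix is to abandon per-coordinate contraction and instead isolate the first-order-in-$h$ cross term so that summation over $i$ recovers the global quadratic form.
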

The proof can be found in Supp.~\ref{sec:proofofthm:rculmc}. 
This result indicates that the Wasserstein distance between $q_m$ and the target distribution decays exponentially except for an error term of size $O(h)$. 
The convergence rate is given by $\mu\gamma$, and with the choice $\gamma=1/L$, this quantity is  the inverse  condition number $1/\kappa$ of the objective function (see \eqref{eqn:R}). 
The second term in~\eqref{thm:iterationulmc} reflects the discretization error, with its size being determined by the directional condition number $\kappa_i$ (see \eqref{eqn:R}) and the random selection probability distribution $\Phi$.

This theorem not only  allows us to estimate the number of iterations required to achieve a preset accuracy, but also suggests that we choose $\{\phi_i\}$ in a way that minimizes the bound. %
\begin{cor}\label{cor:rculmc}
Suppose that the conditions of Theorem \ref{thm:rculmc} hold and $\gamma=1/L$. We have the following estimates. 
\begin{itemize}
\item For any $\epsilon>0$, the number of needed iterations $M$ to attain $W_M\leq \epsilon$ is
\begin{equation}\label{eqn:boundM}
M=\Theta\left(\frac{\kappa^{1/2}\sqrt{\sum^d_{i=1}\kappa^2_i/\phi^2_i}}{\mu^{1/2}\epsilon}\log \left(\frac{W_0}{\epsilon}\right) \right)\,.
\end{equation}

\item The optimal choice of $\phi_i$ is
\begin{equation}\label{eqn:phiLi}
\phi_i=\frac{L^{2/3}_i}{\sum^d_{j=1}L^{2/3}_j},\quad i=1,2,\dotsc,d.
\end{equation}
In this case, the number of iterations required is 
\begin{equation}\label{eqn:boundM2}
M=\Theta\left(\frac{\kappa^{1/2}\left(\sum^d_{j=1}\kappa^{2/3}_j\right)^{3/2}}{\mu^{1/2}\epsilon}\log \left(\frac{W_0}{\epsilon}\right) \right)\,.
\end{equation}
\end{itemize}
\end{cor}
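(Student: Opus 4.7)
The plan is to read off both bullet points directly from the non-asymptotic bound in Theorem~\ref{thm:rculmc},
\[
W_m \le 4\exp\bigl(-\mu\gamma m h/8\bigr)\,W_0 + 40\,\gamma^{1/2}h\sqrt{\sum_{i=1}^d \kappa_i^2/\phi_i^2}\,,
\]
by the standard device for schemes with an $O(h)$ bias: balance the contraction term and the discretization term against $\epsilon/2$, read off $h$ and then $M$, and finally optimize the leading coefficient over the simplex of coordinate probabilities.

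For the first bullet, I would fix $\gamma=1/L$ and choose $h$ so that the discretization term is exactly $\epsilon/2$, namely
\[
h \;=\; \frac{\epsilon}{80\,\gamma^{1/2}\sqrt{\sum_{i=1}^d \kappa_i^2/\phi_i^2}}\,.
\]
Requiring $4\exp(-\mu\gamma M h/8)W_0 \le \epsilon/2$ then forces $M \ge (8/(\mu\gamma h))\log(8W_0/\epsilon)$. Substituting the above $h$ and using $\gamma=1/L$, together with the identity $1/(\mu\gamma^{1/2}) = L^{1/2}/\mu = \kappa^{1/2}/\mu^{1/2}$, yields the expression~\eqref{eqn:boundM}. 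The $\Theta$ rather than $O$ reflects that within the bound of Theorem~\ref{thm:rculmc} the bias term alone already forces $h$ to be of the above order, and hence $M$ to be of at least the stated order. A side check is that for small enough $\epsilon$ this $h$ automatically satisfies the step-size restriction $h \le \gamma\mu\min_i\phi_i/240$ of Theorem~\ref{thm:rculmc}, so the argument is self-consistent and the hidden logarithm absorbs the accounting.

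For the second bullet, I would minimize the coefficient $g(\phi) := \sum_{i=1}^d \kappa_i^2/\phi_i^2$ over the simplex $\{\phi_i>0,\ \sum_i\phi_i = 1\}$. A Lagrange multiplier computation (or H\"older's inequality with exponents $3$ and $3/2$) gives the stationarity condition $\kappa_i^2/\phi_i^3 = \lambda/2$, hence $\phi_i \propto \kappa_i^{2/3}$; normalizing and using $\kappa_i = L_i/\mu$ produces exactly~\eqref{eqn:phiLi}. Plugging this optimal $\phi$ back into $g$, a short telescoping computation gives
\[
\sum_{i=1}^d \kappa_i^2/\phi_i^2 \;=\; \sum_{i=1}^d \kappa_i^{2/3}\Bigl(\sum_{j=1}^d \kappa_j^{2/3}\Bigr)^{2} \;=\; \Bigl(\sum_{j=1}^d \kappa_j^{2/3}\Bigr)^{3},
\]
so $\sqrt{g(\phi)} = \bigl(\sum_{j=1}^d \kappa_j^{2/3}\bigr)^{3/2}$, and inserting this into~\eqref{eqn:boundM} yields~\eqref{eqn:boundM2}.

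The entire argument is essentially routine, and I do not expect any real obstacle: the only nontrivial piece is the one-line Lagrange optimization, and the only thing one must be careful with is bookkeeping the conversion between the directional quantities $L_i,\kappa_i$ and the global $L,\kappa,\mu$ after setting $\gamma = 1/L$, together with verifying that the $h$ forced by the error balance does not violate the a priori step-size constraint inherited from Theorem~\ref{thm:rculmc}.
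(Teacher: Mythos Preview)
Your proposal is correct and follows essentially the same approach as the paper: both arguments set each of the two terms in the bound of Theorem~\ref{thm:rculmc} to at most $\epsilon/2$, read off the resulting constraints on $h$ and $M$ (with the same bookkeeping $1/(\mu\gamma^{1/2})=\kappa^{1/2}/\mu^{1/2}$ after setting $\gamma=1/L$), and then minimize $\sum_i \kappa_i^2/\phi_i^2$ over the probability simplex via a Lagrange multiplier to obtain~\eqref{eqn:phiLi} and hence~\eqref{eqn:boundM2}. Your added remarks on the step-size constraint and the explicit evaluation $\sqrt{g(\phi^\star)}=(\sum_j\kappa_j^{2/3})^{3/2}$ are consistent with, and slightly more detailed than, the paper's terse treatment.
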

\begin{proof}
According to Theorem \ref{thm:rculmc}, we achieve $W_m \le \epsilon$ by ensuring that  both terms in~\eqref{thm:iterationulmc} are less than $\epsilon/2$. 
Thus, in addition to \eqref{eqn:cond_dis}, we need
\begin{equation}\label{boundh}
h\leq\frac{\epsilon L^{1/2}}{80\sqrt{\sum^d_{i=1}\frac{\kappa^2_i}{\phi^2_i}}},\quad m\geq \frac{8\kappa}{h}\log \frac{8W_0}{\epsilon}\,.
\end{equation}
We thus obtain~\eqref{eqn:boundM}. Finding the optimal choice of $\phi_i$ amounts to minimizing the error term:
\[
\min_{\Phi} \, \sum^d_{i=1}\frac{\kappa^2_i}{\phi^2_i}\,,\quad\text{s.t.}\quad \sum^d_i\phi_i=1\,,\phi_i>0\,.
\]
An elementary argument based on constrained optimization theory leads to \eqref{eqn:phiLi}.
\eqref{eqn:boundM2} is then obtained by substituting \eqref{eqn:phiLi} into \eqref{eqn:boundM}.
\end{proof}

Suppose the objective function $f$ is well-conditioned in every direction, so that $\kappa_i = O(1)$ for all $i$. 
Then according to both~\eqref{eqn:boundM} and~\eqref{eqn:boundM2}, we see that the cost is roughly $\widetilde{O}(d^{3/2}/\epsilon)$. 
This order is the same as for the classical ULMC shown in Theorem~\ref{thm:convergenceulmc}. 

When $f$ is not as well-conditioned, meaning $\kappa_i$ are not uniformly small in every direction, then RC-ULMC can have a significant advantage over the classical ULMC. In practice, if we have some a priori estimate of $\kappa_i$, we can choose the optimal $\Phi$ and the cost estimate will be given by~\eqref{eqn:boundM2}. Of course, such a priori information might not be available, in such case, we can choose uniformly the coordinate at each iteration: $\phi_i = 1/d$. We compare below the cost of RC-ULMC in these two scenarios with  $\widetilde{O}(d^{3/2}\kappa^{3/2}/\mu^{1/2} \epsilon)$, the cost of the classical ULMC, as shown in Theorem~\ref{thm:convergenceulmc}.
\begin{enumerate}[wide, labelwidth=!, labelindent=0pt]
\item[Case 1:] Uniform sampling, where we choose $\phi_i=1/d$. From \eqref{eqn:boundM}, we found that the cost is 
\begin{equation}\label{eqn:cost_RC-LMC}
\widetilde{O}\left(\frac{d\kappa^{1/2}\sqrt{\sum^d_{i=1}\kappa^2_i}}{\mu^{1/2}\epsilon}\right)\,,
\end{equation}
where the $\widetilde{O}$ ignores log terms. Since $\sum^d_{i=1}\kappa^2_i\leq d\kappa_{\max}^2\leq d\kappa^2$, we observe that RC-ULMC is always cheaper than ULMC. Furthermore, when $f$ is highly skewed \eqref{fschew} with $p=2$, then~\eqref{eqn:cost_RC-LMC} is reduced to 
\[
\tilde{O}\left(\frac{d(\sqrt{d}+\kappa_{\max})\kappa^{1/2}}{\mu^{1/2}\epsilon}\right)\,.
\]
This bound indicates that RC-ULMC is significantly cheaper than  ULMC when both $d$ and $\kappa$ are large.
\item[Case 2:] With the optimal choice of $\Phi$ (using~\eqref{eqn:phiLi}), the cost of RC-ULMC is equivalent to $M$ in~\eqref{eqn:boundM2}. 
We still have that RC-ULMC is always cheaper than ULMC. 
Furthermore, when $f$ is highly skewed \eqref{fschew} with $p=3/2$, then we have upper bound 
\[
\left( \sum^d_{j=1}\kappa^{2/3}_j \right)^{3/2}\approx \left((d-1)+\kappa^{2/3}_{\max}\right)^{3/2} \leq 2(d^{3/2}+\kappa_{\max})\,.
\]
By substituting this bound into \eqref{eqn:boundM2},  we get the following bound:
\begin{equation}\label{rate:optimal}
    \tilde{O}\left(\frac{(d^{3/2}+\kappa_{\max})\kappa^{1/2}}{\mu^{1/2}\epsilon}\right)\,.
\end{equation}
The reduction over ULMC is significant when either $d$ or $\kappa$ is large. 
\end{enumerate}

We also compare RC-ULMC with RC-OLMC discussed in~\citep{shen2019nonasymptotic,ding2020random}. To achieve $\epsilon$-accuracy, the total cost of RC-OLMC is
\[
\widetilde{O}\left(\frac{\sum^d_{i=1}\kappa^2_i/\phi_i}{\mu\epsilon^2}\right)\,.
\]
Noting that $\sqrt{\sum^d_{i=1}\kappa^2_i/\phi^2_i}\leq \sum^d_{i=1}\kappa_i/\phi_i\leq \sum^d_{i=1}\kappa^2_i/\phi_i$,~\eqref{eqn:boundM} is always smaller, meaning RC-ULMC is always cheaper than RC-OLMC when $\epsilon<1/L^{1/2}$. Furthermore, if we choose uniform sampling $(\phi_i=1/d)$ and assume $\gamma,\mu,\kappa_i$ are $O(1)$, then the cost of RC-ULMC is $\widetilde{O}\left(d^{3/2}/\epsilon\right)$ while the cost of RC-OLMC is $\widetilde{O}\left(d^2/\epsilon^2\right)$. We have a significant improvement in both $d$ and $\epsilon$.

Finally, we note that in \citep{Shen2019TheRM}, randomzied midpoint method (RMM) is used to discretize SDE \eqref{eqn:underdampedLangevin}. 
According to \citep{YeH},  RMM needs $\widetilde{O}(d^{1/3}\kappa/\epsilon^{2/3})$ iteration steps to achieve $\epsilon$-accuracy, which equates to a cost of $\widetilde{O}(d^{4/3}\kappa/\epsilon^{2/3})$.
By comparison, \eqref{rate:optimal} is smaller in some extreme regimes, such as $d^{2/9}\epsilon^{-2/3}<\kappa<\epsilon^{2/3}d^{8/3}$ . We note that the comparison between RC-ULMC and RMM is not entirely fair since RMM uses a better discretization scheme than the Euler approximation \eqref{distributionofZ} used in RC-ULMC. It is possible  to include the RCD idea to RMM on ULMC as well, for a potentially better convergence rate. We leave that topic to future investigation.


\subsection{Tightness of the bound}\label{sec:tight}
Corollary~\ref{cor:rculmc} shows that the numerical cost is roughly $\widetilde{O}(d^{3/2}/\epsilon)$ when the problem is well conditioned. 
We show by use of an example that this bound is tight with respect to $d$ and $\epsilon$.
\begin{proposition}\label{prop:badexampleW22}
Let the target distribution be a standard Gaussian
\begin{equation} \label{eq:hs}
p_{\rx}(x)=\frac{1}{(2\pi)^{d/2}}\exp(-|x|^2/2)\,,
\end{equation}
which is the marginal distribution of the target distribution $p(x,v)=\frac{1}{(2\pi)^{d}}\exp(-|x|^2/2-|v|^2/2)$. Suppose the initial distribution $q_0$ is chosen to be 
\begin{equation} \label{eq:initial}
q_0(x,v)=\frac{1}{(2\pi)^{d}}\exp(-|x-{u}|^2/2-|v|^2/2)\,,
\end{equation}
with ${u}\in\mathbb{R}^d$ and ${u}_i=1/400$ for all $i$. Then if we choose $h$ so that $h<10^{-8}/d$, we have
\begin{equation}\label{eqn:badexampleW2bound2}
W_m\geq \frac{\exp\left(-4hm\right)}{800^2}\frac{d}{8}+\frac{d^{3/2}h}{320-464dh}\,,
\end{equation}
where $W_m := W_2(q_m,p)$, and $q_m$ is the probability distribution of $(x^m,v^m)$ generated by Algorithm~\ref{alg:RCD-OULMC} with $\gamma = 1/L=1$  using uniform coordinate sampling. Furthermore, to have $W_m\leq \epsilon$, one needs at least $M=\widetilde{O}(d^{3/2}/\epsilon)$ iterations.
\end{proposition}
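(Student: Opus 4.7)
The plan exploits that $f(x) = |x|^2/2$ is a separable quadratic, so $\partial_i f(x) = x_i$ and the update rule \eqref{distributionofZ3} acts linearly on a single coordinate at a time. Conditional on the sequence $r^0,\dotsc,r^{m-1}$, the iterate $(x^m,v^m)$ is a Gaussian vector whose coordinates decouple. Using the full symmetry of the setup ($u_i = 1/400$ for all $i$ and $\phi_i = 1/d$), the marginal law of every pair $(x^m_i, v^m_i)$ is the same, so it suffices to analyze a single coordinate and then assemble the bound across dimensions.

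I would start by deriving the mean recursion. With $\bar{x}^m := \EE[x^m_1]$ and $\bar{v}^m := \EE[v^m_1]$, integrating \eqref{distributionofZ3} against the uniform choice of $r^m$ gives a two-dimensional linear system $(\bar{x}^{m+1},\bar{v}^{m+1})^\top = A (\bar{x}^{m},\bar{v}^{m})^\top$ for an explicit matrix $A = I_2 + d^{-1} M$ whose entries depend on $y := dh$. Starting from $(1/400, 0)^\top$, expanding $A$ in the regime $y < 10^{-8}$, and bounding the operator norm of $A^m$ from below, I would show $|\bar{x}^m_1| \geq (1/400)\,e^{-4hm}$; the factor $4$ is deliberately generous so that all higher-order terms in the expansion can be absorbed uniformly. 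The Wasserstein mean-mismatch bound $W_2^2(q_m,p) \geq |\EE_{q_m}[(X,V)] - \EE_p[(X,V)]|^2$ together with the symmetry identity $|\bar{x}^m|^2 = d(\bar{x}^m_1)^2$ then delivers the first term in \eqref{eqn:badexampleW2bound2}.

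Next I would address the bias term $d^{3/2}h/(320-464dh)$. Because the effective coordinate step is $h_r = dh$, the Euler discretization leaves a persistent bias in the second moments of order $dh$ per chosen coordinate. Writing the covariance recurrence induced by \eqref{distributionofZ3} for $\Var_{q_m}(x_1)$ and summing the resulting geometric series yields a stationary variance that differs from the target value $1$ by an amount of order $\sqrt{d}\,h$; the denominator $320-464dh$ is the explicit partial sum arising from the geometric contraction, whose coefficients are read off from \eqref{distributionofZ3}. Combining the per-coordinate variance mismatch with the $d$-fold symmetry via the Gelbrich (Gaussian) lower bound $W_2^2(\mu,\nu) \geq (m_\mu - m_\nu)^2 + (\sigma_\mu - \sigma_\nu)^2$ applied to projections produces the claimed $d^{3/2}h$ dependence. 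Summing the two contributions establishes \eqref{eqn:badexampleW2bound2}, and the iteration lower bound $M = \widetilde{O}(d^{3/2}/\epsilon)$ follows by requiring both terms to be at most $\epsilon/2$: the bias forces $h = O(\epsilon/d^{3/2})$, and the decaying term then forces $hm = \Omega(\log(d/\epsilon))$.

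The main obstacle is tracking the exact constants through these calculations. An eigenvalue computation for $A$ gives a spectral radius only $\approx 1-h$, so the generous exponent $4$ in $e^{-4hm}$ is chosen precisely to absorb the Taylor remainders of $\alpha = \tfrac{1}{2}(1-e^{-2dh})$ and related quantities uniformly in the regime $dh < 10^{-8}$. A subtler difficulty is that $q_m$ is a \emph{mixture} of Gaussians (because the random coordinate sequence randomizes the conditional means), not itself Gaussian, so the Bures-metric lower bound on $W_2$ cannot be applied directly; I plan to circumvent this by projecting onto a single coordinate or onto the symmetric direction $(1,\dotsc,1)/\sqrt{d}$, applying the Gelbrich inequality (which holds for arbitrary $\mu,\nu$), and lifting the bound back to $\mathbb{R}^{2d}$ through the $d$-fold symmetry.
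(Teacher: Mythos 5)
Your route is genuinely different from the paper's. The paper never touches the mean or the per\-/coordinate covariances separately: it works with the single scalar $\EE|w^m|^2$ where $w=x+v$, uses the elementary bound $W_2(u_m,u^*)\ge \sqrt{\EE|w^m|^2}-\sqrt{2d}$ (valid for arbitrary, non-Gaussian $u_m$, which is how it sidesteps the mixture issue you correctly flag), and derives one exact linear recursion $\EE|w^{m+1}|^2\ge (1-2h+2.9dh^2)\EE|w^m|^2+4dh-5.7d^2h^2$. Both terms of \eqref{eqn:badexampleW2bound2} then fall out of the same recursion: the decaying term is the excess of $\EE|w^0|^2$ over the numerical stationary value, and the bias term is the excess of that stationary value over $2d$. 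Your decomposition into a mean\-/mismatch term plus a variance\-/mismatch term, combined through the Gelbrich inequality on coordinate marginals and the fact that $W_2^2$ of a product structure dominates the sum of squared marginal $W_2$'s, is a legitimate alternative and arguably more transparent; because $f$ is exactly quadratic and separable you also avoid the a priori moment bound (imported from Theorem~\ref{thm:rculmc}) that the paper needs to control the cross term $\EE\langle w^m,x^m\rangle$.

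Two concrete gaps remain. First, your bookkeeping for the bias term is inconsistent: the per\-/coordinate stationary variance excess is $\Theta(dh)$ (it is $\Theta(h_i)$ with effective stepsize $h_i=dh$; a direct computation of the $2\times2$ stationary covariance gives $\Var(x_1)\approx 1+dh/4$), not the ``order $\sqrt d\,h$'' you state. Lifting a per\-/coordinate $W_2$ deficit of $\delta$ over $d$ coordinates via $W_2^2(q_m,p)\ge\sum_i W_2^2\bigl((q_m)_i,\mathcal N(0,I_2)\bigr)$ yields $\sqrt d\,\delta$; you need $\delta=\Theta(dh)$ to reach $d^{3/2}h$, and with your stated $\delta=\sqrt d\,h$ you would only get $dh$. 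Second, the variance bias does not exist at time $m=0$: it builds up to its stationary value only after $\Omega(1/h)$ iterations, so ``summing the two contributions'' does not give \eqref{eqn:badexampleW2bound2} uniformly in $m$ — for small $m$ your variance term is $\Theta(d^{3/2}h(1-(1-2h)^m))$, which is far below $d^{3/2}h/320$. The paper's single recursion interpolates between the two regimes automatically; in your framework you would need either a case split on $m$ (using the slack in the initial mean displacement to cover the missing bias for small $m$) or a transient lower bound on the variance recursion. Neither repair is difficult, and the final conclusion $M=\widetilde\Omega(d^{3/2}/\epsilon)$ only needs the large-$m$ regime, but as written the plan does not establish the displayed inequality for all $m$.
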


The proof can be found in Supp.~\ref{sec:proofofprop:badexampleW22}. For this particular initialization, we have $W_0 = \sqrt{d}/400$. 
According to~\eqref{eqn:badexampleW2bound2}, we can guarantee $W_m\leq \epsilon$ only if both terms are smaller than $\epsilon$, meaning that (ignoring a $\log$ term)
\[
h\lesssim \frac{320\epsilon}{d^{3/2}}\,, \quad m\gtrsim \frac{1}{4h}\,
\]
which implies a cost of $\widetilde{O}(d^{3/2}/\epsilon)$. 

\section{Numerical experiments}\label{sec:numerics}

We give one example that demonstrates the improvement of RC-ULMC over the classical ULMC.

In the example, we repeat the Markov chain for $N$ independent trials and denote $\{x^{(i),M}\}_{i=1}^N$ the list of $N$ samples at $M$-th iteration. Since Wasserstein distance is difficult to measure directly numerically, especially when the underlying distribution function is presented by a list of particles, we evaluate the following error as a surrogate: 
\begin{equation}\label{MSEerror}
\textrm{Error}=\left\|\frac{1}{N}\sum^N_{i=1}\psi(x^{(i),M})-\EE_{p_\rx}(\psi)\right\|_2\,,
\end{equation}
where $\psi(x)$ is a matrix-valued function and referred to as the test function, $\|\cdot\|_2$ means the spectral norm of the matrix, and $\EE_{p_\rx}(\psi)$ is the expected value of $\psi$ with respect to the target distribution $p_\rx$.

In the example, we set the target distribution function to be
\[
p_{\rx}(x) \propto p_1(x)p_2(x)\,,
\]
with $p_2(x) = \exp\left(-\sum^{d}_{i=11}\frac{|x_i|^2}{2}\right)$ and
\[
p_1(x)=\exp\left(-\frac{1}{2}\mathsf{x}^\top\Gamma^\top\Gamma\mathsf{x}\right)\,,
\]
where $\mathsf{x}=\left(x_1,x_2,\dots,x_{10}\right)^\top$ is the list of first $10$ entries, and $\Gamma=\mathsf{T}+\frac{d}{10}I$. Here $I$ is the $10 \times 10$ identity matrix and $\mathsf{T}$ is a random matrix whose entries are i.i.d. standard Gaussian random variables.

In the simulation we set $d=100$, $N=10^5$, and let $\psi(x)=\mathsf{x}\mathsf{x}^\top\in\mathbb{R}^{10\times 10}$.


Initially, all particles are drawn from the density distribution $p_0(x,v)\propto p_1(\mathsf{x}-0.5\mathsf{e}_{10})p_2(x)\exp\left(-|v|^2/2\right)$, where $\mathsf{e}_{10}$ is a vector in $\mathbb{R}^{10}$ and all entries equal to $1$.
It is expected that the density of the target distribution is $p(x,v)\propto p_{\rx}(x)\exp\left(-|v|^2/2\right)$, making $p_{\rx}$ the marginal probability density. 

The result is plotted in Figure \ref{Figure3}. To run RC-ULMC, we use time stepsize $h=10^{-4}$. For comparison we also run ULMC, however, due to the cost difference per iteration, there is no standard choice of $h$ for ULMC for a fair comparison. Since $d=100$ in this example, per iteration, the cost of ULMC is $100$ times of that of RC-ULMC, we first experiment ULMC with $h = 10^{-2}$. It is clear that RC-ULMC, presented by the purple line achieves a lower error than ULMC with the same amount of cost.

We then test ULMC with different choices of $h$, hoping to find its best performance. As one increases $h$, the decay rate of error with respect to the cost increases too, but the error plateau is also higher, as one can see by comparing the yellow, red and blue lines in Figure~\ref{Figure3}, all produced by ULMC with different values of $h$. None of them, however, can compete with RC-ULMC regarding the level of error at the same cost.



\begin{figure}[htbp]
     \centering
      \includegraphics[height = 0.2\textheight, width = 0.6\textwidth]{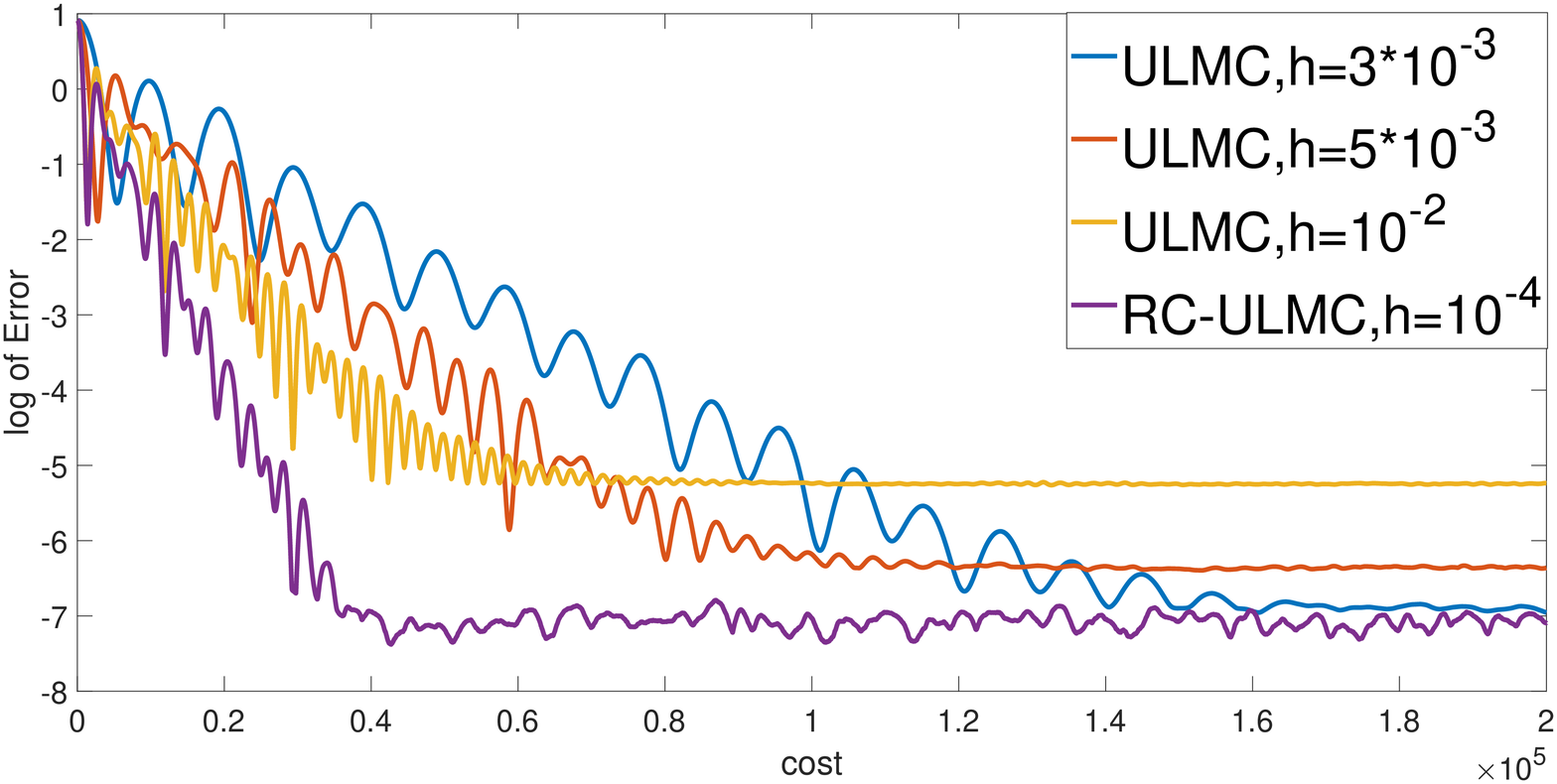}
     \caption{The decay of error with respect to the cost (the number $\partial f$ calculations). }
     \label{Figure3}
\end{figure}

\bibliographystyle{apalike}
\bibliography{enkf}

\begin{thebibliography}{}

\bibitem[Andrieu et~al., 2003]{MCMCforML}
Andrieu, C., Freitas, N., Doucet, A., and Jordan, M. (2003).
\newblock An introduction to {MCMC} for {Machine} {Learning}.
\newblock {\em Machine Learning}, 50:5--43.

\bibitem[Baudoin, 2016]{Baudoin2016WassersteinCP}
Baudoin, F. (2016).
\newblock Wasserstein contraction properties for hypoelliptic diffusions.
\newblock {\em arXiv: 1602.04177}.

\bibitem[Cao et~al., 2019]{cao2019explicit}
Cao, Y., Lu, J., and Wang, L. (2019).
\newblock On explicit {$L^2$}-convergence rate estimate for underdamped
  {Langevin} dynamics.
\newblock {\em arXiv:1908.04746}.

\bibitem[Cheng et~al., 2018]{Cheng2017UnderdampedLM}
Cheng, X., Chatterji, N., Bartlett, P., and Jordan, M. (2018).
\newblock Underdamped {{Langevin} MCMC}: a non-asymptotic analysis.
\newblock In {\em Proceedings of the 31st Conference On Learning Theory},
  volume~75, pages 300--323.

\bibitem[Dalalyan and Karagulyan, 2019]{DALALYAN20195278}
Dalalyan, A. and Karagulyan, A. (2019).
\newblock User-friendly guarantees for the {{Langevin} {Monte Carlo}} with
  inaccurate gradient.
\newblock {\em Stochastic Processes and their Applications}, 129(12):5278 --
  5311.

\bibitem[Dalalyan and Riou-Durand, 2018]{dalalyan2018sampling}
Dalalyan, A.~S. and Riou-Durand, L. (2018).
\newblock On sampling from a log-concave density using kinetic {Langevin}
  diffusions.
\newblock {\em arXiv:1807.09382}.

\bibitem[Ding and Li, 2019a]{ding2019ensemble}
Ding, Z. and Li, Q. (2019a).
\newblock Ensemble {Kalman} inversion: mean-field limit and convergence
  analysis.
\newblock {\em arXiv:1908.05575}.

\bibitem[Ding and Li, 2019b]{ding2019ensemble2}
Ding, Z. and Li, Q. (2019b).
\newblock {Ensemble Kalman} sampler: mean-field limit and convergence analysis.
\newblock {\em arXiv:1910.12923}.

\bibitem[Ding et~al., 2020a]{ding2020ensemble}
Ding, Z., Li, Q., and Lu, J. (2020a).
\newblock {Ensemble Kalman} inversion for nonlinear problems: weights,
  consistency, and variance bounds.
\newblock {\em Found. Data Sci.}
\newblock arXiv:2003.02316.

\bibitem[Ding et~al., 2020b]{ding2020random}
Ding, Z., Li, Q., Lu, J., and Wright, S.~J. (2020b).
\newblock Random coordinate langevin monte carlo.
\newblock {\em arXiv}, abs/2010.01405.

\bibitem[Dolbeault et~al., 2009]{DOLBEAULT2009511}
Dolbeault, J., Mouhot, C., and Schmeiser, C. (2009).
\newblock Hypocoercivity for kinetic equations with linear relaxation terms.
\newblock {\em Comptes Rendus Mathematique}, 347(9):511 -- 516.

\bibitem[Doucet et~al., 2001]{SMCBOOK}
Doucet, A., Freitas, N., and Gordon, N. (2001).
\newblock {\em Sequential {Monte Carlo} methods in practice}.
\newblock Springer New York ; London.

\bibitem[Durmus et~al., 2019]{durmus2018analysis}
Durmus, A., Majewski, S., and Miasojedow, B. (2019).
\newblock Analysis of {Langevin} {Monte Carlo} via convex optimization.
\newblock {\em Journal of Machine Learning Research}, 20:73:1--73:46.

\bibitem[Eberle et~al., 2018]{eberle2018couplings}
Eberle, A., Guillin, A., and Zimmer, R. (2018).
\newblock Couplings and quantitative contraction rates for langevin dynamics.
\newblock {\em arXiv}, abs/1703.01617.

\bibitem[Evensen, 2006]{Evensen:2006:DAE:1206873}
Evensen, G. (2006).
\newblock {\em Data Assimilation: The ensemble {Kalman} filter}.
\newblock Springer-Verlag.

\bibitem[Garbuno-Inigo et~al., 2020]{EKS}
Garbuno-Inigo, A., Hoffmann, F., Li, W., and Stuart, A. (2020).
\newblock Interacting {{Langevin}} diffusions: Gradient structure and {Ensemble
  Kalman} sampler.
\newblock {\em SIAM Journal on Applied Dynamical Systems}, 19(1):412--441.

\bibitem[Geweke, 1989]{IM1989}
Geweke, J. (1989).
\newblock Bayesian inference in econometric models using {{Monte Carlo}}
  integration.
\newblock {\em Econometrica}, 57(6):1317--1339.

\bibitem[Hastings, 1970]{MCSH}
Hastings, W. (1970).
\newblock {{Monte Carlo}} sampling methods using {Markov chains} and their
  applications.
\newblock {\em Biometrika}, 57(1):97--109.

\bibitem[He et~al., 2020]{YeH}
He, Y., Erdogdu, M., and Balasubramanian, K. (2020).
\newblock On the ergodicity, bias and asymptotic normality of randomized
  midpoint sampling method.
\newblock {\em To appear in NeurIPS 2020}.

\bibitem[Iglesias et~al., 2013]{Iglesias_2013}
Iglesias, M., Law, K., and Stuart, A. (2013).
\newblock Ensemble {Kalman} methods for inverse problems.
\newblock {\em Inverse Problems}, 29(4):045001.

\bibitem[Mattingly et~al., 2002]{MATTINGLY2002185}
Mattingly, J., Stuart, A., and Higham, D. (2002).
\newblock Ergodicity for sdes and approximations: locally {Lipschitz} vector
  fields and degenerate noise.
\newblock {\em Stochastic Processes and their Applications}, 101(2):185 -- 232.

\bibitem[Mattingly et~al., 2012]{mattingly2012diffusion}
Mattingly, J.~C., Pillai, N.~S., Stuart, A.~M., et~al. (2012).
\newblock Diffusion limits of the random walk metropolis algorithm in high
  dimensions.
\newblock {\em The Annals of Applied Probability}, 22(3):881--930.

\bibitem[Nesterov, 2012]{doi:10.1137/100802001}
Nesterov, Y. (2012).
\newblock Efficiency of coordinate descent methods on huge-scale optimization
  problems.
\newblock {\em SIAM Journal on Optimization}, 22(2):341--362.

\bibitem[Parisi, 1981]{PARISI1981378}
Parisi, G. (1981).
\newblock Correlation functions and computer simulations.
\newblock {\em Nuclear Physics B}, 180(3):378 -- 384.

\bibitem[Pillai et~al., 2012]{pillai2012optimal}
Pillai, N.~S., Stuart, A.~M., Thi{\'e}ry, A.~H., et~al. (2012).
\newblock Optimal scaling and diffusion limits for the langevin algorithm in
  high dimensions.
\newblock {\em The Annals of Applied Probability}, 22(6):2320--2356.

\bibitem[Roberts and Tweedie, 1996]{roberts1996}
Roberts, G. and Tweedie, R. (1996).
\newblock Exponential convergence of {{Langevin}} distributions and their
  discrete approximations.
\newblock {\em Bernoulli}, 2(4):341--363.

\bibitem[Rossky et~al., 1978]{doi:10.1063/1.436415}
Rossky, P.~J., Doll, J.~D., and Friedman, H.~L. (1978).
\newblock Brownian dynamics as smart {Monte Carlo} simulation.
\newblock {\em The Journal of Chemical Physics}, 69(10):4628--4633.

\bibitem[Shen et~al., 2019]{shen2019nonasymptotic}
Shen, L., Balasubramanian, K., and Ghadimi, S. (2019).
\newblock Non-asymptotic results for langevin monte carlo: Coordinate-wise and
  black-box sampling.
\newblock {\em arXiv}, abs/1902.01373.

\bibitem[Shen and Lee, 2019]{Shen2019TheRM}
Shen, R. and Lee, Y.~T. (2019).
\newblock The randomized midpoint method for log-concave sampling.
\newblock In {\em Advances in Neural Information Processing Systems 32, NeurIPS
  2019}, pages 2098--2109.

\bibitem[Villani, 2006]{Villani2006}
Villani, C. (2006).
\newblock Hypocoercivity.
\newblock {\em Memoirs of the AMS - American Mathematical Society}, 202.

\bibitem[Welling and Teh, 2011]{welling2011bayesian}
Welling, M. and Teh, Y.~W. (2011).
\newblock Bayesian learning via stochastic gradient langevin dynamics.
\newblock In {\em Proceedings of the 28th international conference on machine
  learning (ICML-11)}, pages 681--688.

\bibitem[Wright, 2015]{Ste-2015}
Wright, S. (2015).
\newblock Coordinate descent algorithms.
\newblock {\em Mathematical Programming}, 151:3--34.

\end{thebibliography}

\appendix
\section{Proof of Theorem \ref{thm:convergenceulmc}}\label{sec:proofofthm:convergenceulmc1}
Recall 
\[
\left\{\begin{aligned}
&\rd X_t = V_t\rd t\,,\\
&\rd V_t = -2 V_t\rd t-\gamma\nabla f(X_t)\rd t+\sqrt{4 \gamma}\rd B_t\,.
\end{aligned}\right.
\]
Fixed $\gamma>0$, we define 
\[
    Y(t)=X\left(\sqrt{\gamma}t\right),\ Z(t)=\sqrt{\frac{1}{\gamma}}V\left(\sqrt{\gamma}t\right)\,,
\]
then we have
\begin{equation}\label{eqn:underdampedLangevin2}
   \left\{\begin{aligned}
&\rd Y_t = Z_t\rd t\,,\\
&\rd Z_t = -\gamma_1Z_t\rd t-\nabla f(Y_t)\rd t+\sqrt{2\gamma_1}\rd B_t\,,
\end{aligned}\right.
\end{equation}
where $\gamma_1=\frac{2}{\sqrt{\gamma}}$. This implies \eqref{eqn:underdampedLangevin} is equivalent to \eqref{eqn:underdampedLangevin2} with $\gamma_1=\frac{2}{\sqrt{\gamma}}$. 
Theorem~\ref{thm:convergenceulmc} is then a direct translation of  \cite[Theorem 2]{dalalyan2018sampling}.

\section{Proof of Theorem \ref{thm:contiummRCULMC}}\label{sec:proofofthm:convergenceulmc}

First, we introduce some notations. Denote the transition kernel by $\Xi$, meaning 
\[
(X^{m+1},V^{m+1})\stackrel{d}{=} \Xi((X^m,V^m), \cdot)\,.
\]
Moreover, we denote $\Xi^n$ the $n$-step transition kernel. The following proposition establishes the exponential convergence of the Markov chain. 
\begin{proposition}\label{prop:crcolmc}
Assume $f$ satisfies assumption \ref{assum:Cov} and
\[
h\leq \frac{\gamma \mu \min\{\phi_i\}}{312+12\gamma+8L+532L^2}\,,
\]
then there are constants $R_1>0,r_1>1$, such that for any $(x^0,v^0)\in\mathbb{R}^{2d}$
\begin{equation}\label{eqn:converge2}
\sup_{A\in\mathcal{B}(\mathbb{R}^{2d})}\left|\Xi^{md}((x^0,v^0),A)-\int_Ap(x,v)\rd x\rd v\right|\leq \left(|x^0-x^*|^2+|x^0-x^*+v^0|^2+1\right)R_1r^{-m}_1\,,
\end{equation}
where $x^*$ is the minimal point of $f(x)$ and $\mathcal{B}(\mathbb{R}^{2d})$ is the Borel set in $\mathbb{R}^{2d}$
\end{proposition}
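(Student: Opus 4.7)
I would follow the Harris-type framework of Mattingly--Stuart--Higham (already invoked in the excerpt for Theorem~\ref{thm:contiummRCULMC}), which reduces geometric ergodicity in total variation to verifying (i) a geometric drift (Lyapunov) condition and (ii) a minorization condition on level sets of the Lyapunov function. The candidate Lyapunov function is
\[
V(x,v) \;=\; a|x-x^*|^2 + b|x-x^*+v|^2 + 1,
\]
with positive constants $a,b$ tuned so that the quadratic form is positive definite and the cross-terms induced by the underdamped dynamics are absorbed. The twist $x-x^*+v$ is the standard choice used in \cite{DALALYAN20195278,Cheng2017UnderdampedLM}, chosen because it couples position and momentum in the exact way needed to combine strong convexity with the friction term.

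For the drift estimate I would compute $\mathbb{E}[V(X^{m+1},V^{m+1})\mid\mathcal{F}^m]$ directly from the exact single-step update (the Gaussian law with mean and covariance given in \eqref{distributionofZ3}, but evaluated at the continuous process rather than the Euler proxy). Conditional on $r^m=i$, only the $i$-th coordinate moves, and averaging over $r^m\sim\Phi$ with the choice $h_i = h/\phi_i$ recovers the full-gradient step weighted by $h$. Using $\mu$-strong convexity through $\nabla f(x)^\top(x-x^*)\geq\mu|x-x^*|^2$ to handle the drift, the Lipschitz bound $|\nabla f(x)|\leq L|x-x^*|$ to bound the squared drift terms, and elementary Taylor expansions of $1-e^{-2h_i}$, $h_i-\tfrac12(1-e^{-2h_i})$, etc.\ up to $O(h_i^3)$, I expect to arrive at an inequality of the form
\[
\mathbb{E}[V(X^{m+1},V^{m+1})\mid\mathcal{F}^m] \;\leq\; (1-\eta h)\,V(X^m,V^m) + Ch,
\]
for some $\eta>0$ depending on $\gamma,\mu$ and constants $C$ depending on $\gamma,L$. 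The precise denominator $312+12\gamma+8L+532L^2$ in the stepsize restriction is exactly what is needed for the sign of all collected quadratic/cross error terms to come out correctly; this bookkeeping, while routine, is the technical core of the argument.

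For the minorization condition, a single step of RC-ULMC is degenerate (only two real coordinates of $(x,v)$ become Gaussian, the rest are frozen), so I would work at the level of $\Xi^d$, which explains why the statement is given for $\Xi^{md}$. Condition on the event $E$ that $(r^0,\ldots,r^{d-1})$ is a permutation of $\{1,\ldots,d\}$: this has probability at least $d!\prod_i\phi_i>0$. On $E$, each of the $d$ coordinates of $x$ and $v$ gets updated exactly once by a Gaussian with strictly positive-definite covariance, and by unrolling the updates through a triangular change of variables one checks that $(X^d,V^d)$ has a strictly positive, continuous density on $\mathbb{R}^{2d}$ whose drift on any sublevel set $\{V\leq K\}$ is uniformly bounded. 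Consequently $\Xi^d((x,v),\cdot)\geq \delta\nu(\cdot)$ on $\{V\leq K\}$ for some $\delta>0$ and probability measure $\nu$.

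Combining the iterated drift estimate $\mathbb{E}V(X^d,V^d)\leq (1-\eta h)^d V(X^0,V^0)+O(dCh)$ with the minorization on level sets, the Harris theorem of \citep{MATTINGLY2002185} delivers a unique invariant measure together with a quantitative TV contraction $\|\Xi^{md}((x^0,v^0),\cdot)-\pi\|_{TV}\leq V(x^0,v^0)R_1 r_1^{-m}$. Invariance of $p(x,v)\propto\exp(-(f(x)+|v|^2/(2\gamma)))$ (so that $\pi=p$) can be read off from the reversibility of the exact sub-dynamics \eqref{eqn:ULDSDE2continummx}--\eqref{eqn:ULDSDE2continummxvfixed} in the selected coordinate: the marginal Gibbs measure in one coordinate is preserved by underdamped Langevin in that coordinate, and the law in frozen coordinates is trivially preserved; averaging over $\Phi$ preserves the joint Gibbs measure. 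The bound in \eqref{eqn:converge2} then follows, since $V(x^0,v^0)\lesssim |x^0-x^*|^2+|x^0-x^*+v^0|^2+1$. The main obstacle I anticipate is the drift step: because only one coordinate is updated per iteration, several cancellations that are automatic in the synchronous ULMC analysis of \cite{dalalyan2018sampling} now appear only after the expectation over $r^m$, so every cross-term of the form $(x_i^{m+1}-x_i^m)(v_i^{m+1}-v_i^m)$ must be tracked with its exact $h_i = h/\phi_i$ dependence to produce the $\min\{\phi_i\}$ appearing in the admissible stepsize.
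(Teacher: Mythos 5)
Your proposal is correct and follows essentially the same route as the paper: the same Lyapunov function $|x-x^*|^2+|x-x^*+v|^2+1$ (the paper fixes $a=b=1$), a drift estimate for the exact one-step kernel averaged over $r^m\sim\Phi$, a minorization for the $d$-step kernel obtained by conditioning on a favorable realization of $(r^0,\dots,r^{d-1})$ so that every coordinate pair is smoothed (the paper conditions on the identity ordering and invokes the positive continuous density of the resulting kernel via Lemma~2.3 of \citep{MATTINGLY2002185}), and then the Harris-type theorem applied to the $d$-step chain. The only caveat is that the exact single-coordinate update is not Gaussian (the drift $\partial_i f(X(s))$ varies along the path), so the drift bookkeeping requires path-supremum estimates as in the paper's Lemma~E.1 rather than the closed-form moments of \eqref{distributionofZ3}; this does not change the structure of your argument.
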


We postpone the proof of the proposition to Section~\ref{proofofeqnconverge2} and first use the proposition to prove Theorem~\ref{thm:contiummRCULMC}:

\begin{proof}[Proof of Theorem \ref{thm:contiummRCULMC}]
First, if the distribution of $(X^m,V^m)$ is induced by $p$, then the marginal distribution of $(X_{r^m}(t),V_{r^m}(t))$ is preserved in \eqref{eqn:ULDSDE2continummx},\eqref{eqn:ULDSDE2continummv}. Since $X^{m+1}_i=X^m_i,V^{m+1}_i=V^m_i$ for $i\neq r^m$, we obtain $(X^{m+1},V^{m+1})\sim p$, meaning $p$ is the invariant measure of this Markov chain, concluding the first bullet point.

Since $q_0$ has finite second moment, multiply $q_0$ on both sides of \eqref{eqn:converge2} and integrate, we obtain
\[
\int_{\mathbb{R}^{2d}} |q_{md}(x,v)-p(x,v)|\rd x\rd v\leq C_0r^{-m}_1\,,
\]
where $C_0$ is a constant independent of $m$.

According to It\^o's formula and \eqref{eqn:ULDSDE2continummx}-\eqref{eqn:ULDSDE2continummxvfixed}, the boundedness of the second moment is preserved in each iteration. In particular:
\[
\EE\left(|X^{m+1}_{r^m}|^2+|V^{m+1}_{r^m}|^2|(X^m_{r^m},V^m_{r^m},r^m)\right)\leq c_1\EE\left(|X^{m}_{r^m}|^2+|V^{m}_{r^m}|^2|\right)+c_2\,,
\]
where $c_{1,2}$ are constants depending on $\gamma,L,x^*$. Since $q_0$ has finite second moments, $q_i$ has second moments. Therefore, multiply $q_i$ on both sides of \eqref{eqn:converge2} and integrate, we further have for $0\leq i\leq d-1$
\[
\int_{\mathbb{R}^{2d}} |q_{md+i}(x,v)-p(x,v)|\rd x\rd v\leq C_ir^{-m}_1\,,
\]
where $C_i$ is a constant independent of $m$. This proves \eqref{eqn:converge3} if we choose $R=(\max_i C_i)r_1$ and $r=r^{1/d}_1$.
\end{proof}

\subsection{Proof of Proposition~\ref{prop:crcolmc}}\label{proofofeqnconverge2}
To prove the proposition, we rely on the following result from \citep{MATTINGLY2002185}.
\begin{theorem}\label{thm:Mat}[\citet[Theorem 2.5]{MATTINGLY2002185}] 
Let $\{X^n\}^{\infty}_{n=0}$ denote the Markov chain on $\mathbb{R}^{2d}$ with transition kernel $\Xi$ and filtration $\mathcal{F}^n$. Let $\{X^n\}^{\infty}_{n=0}$ satisfy the following two conditions:
\begin{enumerate}[wide,labelwidth=0pt,labelindent=0pt,topsep=0pt,itemsep=1pt]
\item[Lynapunov condition:] There is a function $L:\mathbb{R}^{2d}\rightarrow [1,\infty)$, with $lim_{x\rightarrow\infty} L(x)=\infty$, and real number $\alpha\in(0,1)$, and $\beta\in [0,\infty)$ such that
\[
\EE\left(L(X^{n+1})\;\middle|\;\mathcal{F}^n\right)\leq \alpha L(X^{n})+\beta\,.
\]
\item[Minorization condition:] Define set $C\subset\mathbb{R}^{2d}$ 
\[
C=\left\{x\;\middle|\;x\in\mathbb{R}^d,\,L(x)\leq \frac{2\beta}{\gamma-\alpha}\right\}\,,
\]
for some $\gamma\in(\alpha^{1/2},1)$ and $L(x)$ comes from Lynapunov condition, there exists an $\eta>0$, and a probability measure $\mu$, with $\mu(C)=1$, such that for all $A\in\mathcal{B}(\mathbb{R}^{2d})$ and $x\in C$
\[
\Xi(x,A)\geq \eta \mu(A)\,.
\]
\end{enumerate}
Then the Markov chain $\{X^n\}^\infty_{n=0}$ has a unique invariant measure $\pi$. Furthermore, there is a constant $r\in(0,1)$ and $R\in(0,\infty)$ such that, for any $x_0\in\mathbb{R}^{2d}$:
\begin{equation}\label{eqn:mat}
\sup_{A\in\mathcal{B}(\mathbb{R}^{2d})}\left|\Xi^{m}(x^0,A)-\pi(A)\right|\leq L(x_0)Rr^{-m}\,.
\end{equation}
\end{theorem}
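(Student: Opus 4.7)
The plan is to apply Mattingly's theorem (Theorem~\ref{thm:Mat}) to the $d$-step transition kernel $\Xi^d$. Guided by the weight on the right-hand side of \eqref{eqn:converge2}, I take the Lyapunov function
\[
L(x,v) := |x-x^*|^2 + |x-x^*+v|^2 + 1,
\]
which is coercive on $\mathbb{R}^{2d}$. It then remains to verify (i) a drift estimate for $\Xi^d$ with $L$, (ii) a minorization of $\Xi^d$ on a sublevel set of $L$, and (iii) the identification $\pi = p$ of the resulting invariant measure.

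For the drift, I will first prove a one-step bound
\[
\EE\bigl[L(X^{n+1},V^{n+1}) \,\bigl|\, \mathcal{F}^n\bigr] \leq \alpha_1 L(X^n,V^n) + \beta_1, \qquad \alpha_1 \in (0,1),\ \beta_1 < \infty,
\]
and iterate it $d$ times to obtain the Lyapunov condition for $\Xi^d$ with $\alpha = \alpha_1^d < 1$ and $\beta = \beta_1/(1-\alpha_1)$. Conditional on the coordinate choice $r^n = i$, the pair $(X^{n+1}_i, V^{n+1}_i)$ is Gaussian with mean and covariance given by the exact one-coordinate flow analogous to \eqref{distributionofZ3}, while the other coordinates are frozen. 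Expanding $e^{-2 h_i}$ in $h_i = h/\phi_i$, using strong convexity via $\langle \nabla f(x), x-x^*\rangle \geq \mu|x-x^*|^2$ together with $|\nabla f(x)|\leq L|x-x^*|$ from Lipschitzness, and averaging over $r^n$ with weights $\phi_i$, the factors $1/\phi_i$ cancel from the leading-order drift to yield a uniform contraction of order $1 - c\gamma\mu h$ on both quadratic pieces of $L$, together with a $\gamma h$-bounded additive constant from the Brownian noise. The stated threshold on $h$ is calibrated so that the $h^2/\phi_i^2$ remainders (carrying factors of $L^2$, $L$, $\gamma$) are absorbed into the contraction.

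For the minorization, take $C := \{L \leq 2\beta/(\gamma_0 - \alpha)\}$ with $\gamma_0 \in (\alpha^{1/2},1)$, which is compact. Starting from $(x,v)\in C$, the event that $(r^0,\ldots,r^{d-1})$ forms a permutation of $\{1,\ldots,d\}$ has probability at least $d!\prod_i \phi_i>0$. Conditional on any fixed permutation ordering, the law of $(X^d,V^d)$ is a $2d$-dimensional Gaussian whose mean is bounded as $(x,v)$ ranges over $C$, and whose covariance, block-diagonal in the permutation basis with each $2\times 2$ block non-degenerate by \eqref{distributionofZ3}, is uniformly positive definite. Hence $\Xi^d((x,v),\cdot)$ admits a density uniformly bounded below by a positive constant on any prescribed compact subset of $\mathbb{R}^{2d}$, which supplies the minorization $\Xi^d((x,v),A)\geq \eta\,\mu(A)$ for a fixed probability measure $\mu$. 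Mattingly's theorem then yields a unique invariant measure $\pi$ and constants $R_1, r_1>1$ with the required geometric bound on $\Xi^{md}$. Because $p(x,v)\propto\exp(-(f(x)+|v|^2/(2\gamma)))$ is invariant under the one-step kernel $\Xi$ (the exact one-coordinate underdamped flow preserves $p$), it is invariant under $\Xi^d$ as well, so uniqueness forces $\pi=p$, and substituting $L(x^0,v^0)$ gives \eqref{eqn:converge2}.

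The main obstacle is the one-step drift bound. The form of the Lyapunov function is forced: the cross term $\langle v,\nabla f(x)\rangle$ arising from the Hamiltonian portion of the update is not sign-definite, so one must couple $|x-x^*|^2$ and $|v|^2$ through the mixed quadratic $|x-x^*+v|^2$ in order to obtain a clean contraction after taking conditional expectation. The technical core of the argument lies in expanding the exact exponential propagators and tracking all $h_i^2$ remainders so that the $\phi_i$-cancellation after averaging over $r^n$ is explicit; this bookkeeping is precisely what produces the specific constants $312+12\gamma+8L+532L^2$ appearing in the hypothesis on $h$.
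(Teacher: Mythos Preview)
Two issues.

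\textbf{Statement mismatch.} Theorem~\ref{thm:Mat} is not proved in the paper at all; it is quoted verbatim from \cite{MATTINGLY2002185} as a black box. What you have written is a proof sketch of Proposition~\ref{prop:crcolmc} (and through it, Theorem~\ref{thm:contiummRCULMC}), which \emph{applies} Theorem~\ref{thm:Mat} to the $d$-step chain. So your target is off by one level.

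\textbf{A genuine gap in your argument for Proposition~\ref{prop:crcolmc}.} Your drift and minorization steps both rely on the claim that, conditional on $r^n=i$ and $(X^n,V^n)$, the pair $(X^{n+1}_i,V^{n+1}_i)$ is Gaussian with mean and covariance ``analogous to \eqref{distributionofZ3}.'' That is false for the chain in question. Proposition~\ref{prop:crcolmc} (and Theorem~\ref{thm:contiummRCULMC}) concern the SDE chain \eqref{eqn:ULDSDE2continummx}--\eqref{eqn:ULDSDE2continummxvfixed}, where the drift is $\int e^{-2(t-s)}\partial_i f(X(s))\,\rd s$ with $X(s)$ the \emph{running} process, not $\partial_i f(X^n)$ frozen at the left endpoint. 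Unless $f$ is quadratic, the conditional law is not Gaussian, its covariance is not given by \eqref{distributionofZ3}, and in your minorization argument the law after any fixed permutation of coordinate updates is not a $2d$-dimensional Gaussian with block-diagonal covariance.

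The paper's route repairs exactly this. For the Lyapunov step (Lemma~\ref{lem:convergenceulmclemma}) it does not compute Gaussian moments; it expands $L(X^{m+1},W^{m+1})$ with $W=X+V$ from \eqref{eqn:equationsxsw}, isolates the leading $O(h_i)$ terms (which after averaging in $r^m$ produce the matrix inequality \eqref{eqn:Iab}--\eqref{Iinequality} and hence the contraction $1-\gamma\mu h/2$), and controls the nonlinear residual $\partial_i f(X(s))-\partial_i f(X^m)$ via the pathwise sup-bounds of Lemma~\ref{lemma:thmconvergence}. For the minorization (Lemma~\ref{lemma:converge2}) the paper does not use a Gaussian density either: it lower-bounds $\Xi^d$ by $\prod_i\phi_i$ times the kernel $\Xi_2$ of a chain that cycles deterministically through coordinates $1,\dots,d$ via the \emph{SDE} flows, and then appeals to the fact that $\Xi_2$ has a continuous positive transition density (a smoothing property of the nondegenerate diffusion, invoked through \cite[Lemma~2.3]{MATTINGLY2002185}). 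Your permutation idea is in the same spirit, but the Gaussian calculation you build on it is not available here and would need to be replaced by an SDE density argument.
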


Comparing this theorem and the Proposition~\ref{prop:crcolmc}, we essentially need to prove the $d$-step chain $\left\{(X^{md},V^{md})\right\}^\infty_{m=0}$ satisfies both conditions. To do so, we first claim two results.
\begin{lemma}[Lyapunov condition]\label{lem:convergenceulmclemma} Assume $f$ satisfies Assumption \ref{assum:Cov} and 
\[
\gamma\leq\frac{1}{L},\quad h\leq \frac{\gamma \mu \min\{\phi_i\}}{312+12\gamma+8L+432L^2}\,.
\]
Let
\[
L(x,v)=|x-x^*|^2+|x-x^*+v|^2+1\,,
\]
then we have
\begin{equation}\label{eqn:lyconditionlemma}
\EE\left(L(X^{m+1},V^{m+1})\;\middle|\;\mathcal{F}^m\right)\leq \alpha L(X^{m},V^{m})+\beta
\end{equation}
with 
\[
\alpha=1-\gamma \mu h/2,\qquad \beta=(3780\gamma +\gamma \mu/2)h.
\]
\end{lemma}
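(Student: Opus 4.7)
The plan is to compute $\EE(L(X^{m+1},V^{m+1})\mid\MCF^m)$ by conditioning on the coordinate choice $r^m$ and averaging with the probabilities $\phi_i$. Because \eqref{eqn:ULDSDE2continummxvfixed} freezes every coordinate $i\neq r^m$, only the updated coordinate contributes, so writing $Y^m_i:=X^m_i-x^*_i$ and $\Delta_i:=[(Y^{m+1}_i)^2-(Y^m_i)^2]+[(Y^{m+1}_i+V^{m+1}_i)^2-(Y^m_i+V^m_i)^2]$, one has
\begin{equation*}
\EE(L(X^{m+1},V^{m+1})\mid\MCF^m)=L(X^m,V^m)+\sum_{i=1}^d\phi_i\,\EE(\Delta_i\mid\MCF^m,r^m=i).
\end{equation*}

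First I would expand $X^{m+1}_i$ and $V^{m+1}_i$ via the closed-form SDE solutions \eqref{eqn:ULDSDE2continummx}--\eqref{eqn:ULDSDE2continummv} at $t=T^{m+1}$, $h_i=h/\phi_i$. The It\^o integrals are zero-mean Gaussians whose variances come out of \eqref{distributionofZ3} and contribute only $O(\gamma h_i)$ to $\EE(\Delta_i\mid\cdot)$. The drift involves the gradient integral $\int_{T^m}^{T^{m+1}}(\cdots)\partial_i f(X(s))\rd s$, which I would approximate by its frozen version with $X(s)$ replaced by $X^m$. The discrepancy is controlled by the directional Lipschitz inequality \eqref{GradientLipcoord} once I establish the short-time $L^2$-bound
\begin{equation*}
\EE|X_i(s)-X^m_i|^2\leq C\bigl(h_i^2(V^m_i)^2+\gamma^2 h_i^4|\partial_i f(X^m)|^2+\gamma h_i^3\bigr),\qquad s\in[T^m,T^{m+1}],
\end{equation*}
which follows from \eqref{eqn:ULDSDE2continummx} and Cauchy--Schwarz on a short interval. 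Taylor-expanding $1-e^{-2h_i}=2h_i+O(h_i^2)$ and $e^{-2h_i}=1-2h_i+O(h_i^2)$ in the closed-form expressions then extracts the leading drift in $\EE(\Delta_i\mid\cdot)$ as $-2h_i(V^m_i)^2-2\gamma h_i(Y^m_i+V^m_i)\,\partial_i f(X^m)$, plus correction terms of order $h_i^2$, $L_i h_i^2$, and $\gamma h_i$.

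The decisive step is summing over $i$: because $h_i\phi_i=h$ identically, the weighted drift becomes
\begin{equation*}
-2h|V^m|^2-2\gamma h(X^m-x^*+V^m)\cdot\nabla f(X^m),
\end{equation*}
which no longer carries any $\phi_i^{-1}$ factor. Strong convexity \eqref{Convexity} together with $\nabla f(x^*)=0$ gives $\nabla f(X^m)\cdot(X^m-x^*)\geq\mu|X^m-x^*|^2$ and $|\nabla f(X^m)|\leq L|X^m-x^*|$, so after Young's inequality on the cross term $-2\gamma h V^m\cdot\nabla f(X^m)$ I recover a net contraction of size $\gamma\mu h$ on $L(X^m,V^m)$. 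The remaining error terms, weighted by $\phi_i$, produce contributions of order $\phi_i h_i^2=h^2/\phi_i$, $\phi_i L^2 h_i^3$, and the $O(\gamma h_i)$ noise variances; the hypothesis $h\leq\gamma\mu\min_i\phi_i/(312+12\gamma+8L+432L^2)$ is precisely what allows each such piece to be absorbed into $(\gamma\mu h/2)L(X^m,V^m)$ or into the additive constant $\beta$.

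The main obstacle is the bookkeeping of these error terms: the $\phi_i^{-1}$ factors hidden in $h_i$ can a priori blow up when $\phi_i$ is small, and only by pairing them with the higher powers of $h_i$ already present (and hence using the $\min_i\phi_i$ appearing in the stepsize restriction) can one close the inequality. Extra care is needed for the mixed terms in the expansion of $(Y^{m+1}_i+V^{m+1}_i)^2$, where the $O(h_i^2)$ remainder from the exponentials and the $L$-Lipschitz error from replacing $\partial_i f(X(s))$ by $\partial_i f(X^m)$ both appear multiplied by $|Y^m_i+V^m_i|$; these are handled with Young's inequality, pushing them against $|X^m-x^*|^2$ and the $(\gamma\mu h/2)|X^m-x^*|^2$ budget, which yields the claimed $\alpha=1-\gamma\mu h/2$ and the constant $\beta=(3780\gamma+\gamma\mu/2)h$.
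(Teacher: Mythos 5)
Your overall architecture matches the paper's: condition on $r^m$, pass to the coordinates $(Y_i, Y_i+V_i)$ (the paper's $w=x-x^*+v$ change of variables), expand the increments from the closed-form SDE solutions, control the gradient-freezing error via \eqref{GradientLipcoord} and a short-time $L^2$ bound (the paper's Lemma E.1), and use $h_i\phi_i=h$ so the leading drift loses its $\phi_i^{-1}$ factors. Your identification of the leading drift as $-2h|V^m|^2-2\gamma h\langle X^m-x^*+V^m,\nabla f(X^m)\rangle$ is also correct.

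The gap is in the decisive step, where you bound this drift by ``strong convexity plus Young's inequality on the cross term $-2\gamma h\,V^m\cdot\nabla f(X^m)$.'' Writing $Y=X^m-x^*$, your tools give
\[
D\;\leq\;-2h|V|^2-2\gamma\mu h|Y|^2+2\gamma Lh\,|V|\,|Y|\,,
\]
and the right-hand side is a negative definite form in $(|V|,|Y|)$ only when $(\gamma L)^2<4\gamma\mu$, i.e. $\gamma\lesssim \mu/L^2=1/(\kappa L)$. Under the lemma's actual hypothesis $\gamma\leq 1/L$, this fails as soon as $\kappa>4$: whatever weight you choose in Young's inequality, either the $|Y|^2$ coefficient or the $|V|^2$ coefficient becomes positive. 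So your route cannot even establish negativity of the drift, let alone a contraction of size $\gamma\mu h$, except at the cost of shrinking $\gamma$ by a factor of $\kappa$ (which would degrade the rate $\gamma\mu$ from $1/\kappa$ to $1/\kappa^2$).

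The paper avoids this by never decoupling the cross term: it writes the entire drift as $2h\,(Y,W)^\top Q\,(Y,W)$ with $W=Y+V$, $\nabla f(X^m)=\mathcal{H}_f Y$, and
\[
Q=\begin{bmatrix}-I_d & I_d-\tfrac{\gamma\mathcal{H}_f}{2}\\ I_d-\tfrac{\gamma\mathcal{H}_f}{2} & -I_d\end{bmatrix}\,,
\]
whose eigenvalues are $-\gamma\Lambda_j/2$ and $-2+\gamma\Lambda_j/2$. The contraction comes from the exact cancellation $-1+(1-\gamma\Lambda_j/2)=-\gamma\Lambda_j/2$ between the diagonal and off-diagonal blocks, which is precisely the structure that a triangle/Young estimate of the off-diagonal term destroys. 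You need this eigenvalue computation (or an equivalent exact completion of squares in the pair $(Y,W)$) to close the argument under $\gamma\leq 1/L$; the rest of your bookkeeping of the $O(h_i^2)$ and noise terms is consistent with what the paper does.
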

\begin{lemma}[Minorization condition]\label{lemma:converge2}
Under conditions of Lemma \ref{lem:convergenceulmclemma}, define
\[
C=\left\{(x,v)\;\middle|\;L(x,v)\leq \frac{2\beta}{\gamma-\alpha}\right\}\,,
\]
then there exists an $\eta>0$, and a probability measure $\mathcal{M}$, with $\mathcal{M}(C)=1$, such that
\begin{equation}\label{minorizationcondition}
\Xi^d(x,A)\geq \eta \mathcal{M}(A),\quad \forall A\in\mathcal{B}(\mathbb{R}^{2d}),x\in C\,.
\end{equation}
\end{lemma}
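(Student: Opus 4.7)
The plan is to implement a Doeblin-style minorization: in $d$ consecutive RC-ULMC steps there is positive probability that every coordinate is selected exactly once, and conditioned on such an event the law of $(X^d,V^d)$ has a strictly positive, jointly continuous Lebesgue density on $\mathbb{R}^{2d}$. Uniform positivity on the compact set $C$ will then yield the desired lower bound. Concretely, I fix the permutation $\sigma=(1,2,\dots,d)$ and consider the event $E=\{r^0=1,r^1=2,\dots,r^{d-1}=d\}$, which has probability $\mathbb{P}(E)=\prod_{i=1}^d\phi_i>0$. On $E$ coordinate $i$ is touched only at step $i-1$, so $X^d_i=X^i_i$ and $V^d_i=V^i_i$, giving a smooth bijection between the $2d$-dimensional Gaussian noise vector driving the algorithm and $(X^d,V^d)\in\mathbb{R}^{2d}$.

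Using this bijection and the Euler update \eqref{distributionofZ3}, the intermediate state at step $k$ on $E$ is
\[
X^k=(y_1,\dots,y_k,x^0_{k+1},\dots,x^0_d),\qquad V^k_{k+1}=v^0_{k+1},
\]
and the next pair $(y_{k+1},w_{k+1})=(X^{k+1}_{k+1},V^{k+1}_{k+1})$ is drawn from a 2D Gaussian with mean $\mu_k=\mu_k(x^0,v^0,y_1,\dots,y_k)$ (a continuous function of its arguments through $\partial_{k+1}f$, which is Lipschitz by Assumption~\ref{assum:Cov}) and a deterministic $2\times 2$ covariance $\Sigma_{k+1}$ depending only on $\gamma$ and $h_{k+1}$. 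A standard change of variables then produces the product-form conditional density
\[
g(y,w\mid x^0,v^0)=\prod_{k=0}^{d-1}\varphi_{\mu_k,\Sigma_{k+1}}(y_{k+1},w_{k+1}),
\]
where $\varphi_{\mu,\Sigma}$ is the 2D Gaussian density. Each $\Sigma_{k+1}$ is strictly positive definite whenever $h_{k+1}>0$ (it is the covariance of a Brownian-driven Ornstein--Uhlenbeck process over a positive time interval; a direct computation from \eqref{distributionofZ3} shows its determinant is strictly positive). Hence $g$ is jointly continuous in $(x^0,v^0,y,w)$ and strictly positive everywhere.

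To conclude, since $L(x,v)\to\infty$ as $|x|+|v|\to\infty$, the sublevel set $C$ is compact (and I may assume it has positive Lebesgue measure, else the claim is vacuous). By continuity,
\[
\delta\;:=\;\inf_{(x^0,v^0)\in C,\,(y,w)\in C} g(y,w\mid x^0,v^0)\;>\;0.
\]
Define the probability measure $\mathcal{M}(A):=\mathrm{vol}(A\cap C)/\mathrm{vol}(C)$, so that $\mathcal{M}(C)=1$, and set $\eta:=\delta\cdot\mathrm{vol}(C)\cdot\prod_{i=1}^d\phi_i>0$. For any $A\in\mathcal{B}(\mathbb{R}^{2d})$ and any $(x^0,v^0)\in C$,
\[
\Xi^d\bigl((x^0,v^0),A\bigr)\;\geq\;\mathbb{P}(E)\int_{A\cap C} g(y,w\mid x^0,v^0)\,\mathrm{d}y\,\mathrm{d}w\;\geq\;\eta\,\mathcal{M}(A),
\]
which is precisely \eqref{minorizationcondition}. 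The main technical obstacle is the middle step: unravelling the nonlinear iteration on $E$ to exhibit the product-Gaussian structure of $g$, tracking how $\mu_k$ depends on the already-sampled coordinates $y_{\leq k}$ and on the still-frozen initial coordinates $x^0_{>k},v^0_{>k}$, and verifying joint continuity and positivity. The positive-definiteness of each $\Sigma_{k+1}$ is then immediate from its Brownian-motion origin and does not require delicate estimates.
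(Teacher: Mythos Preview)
Your overall strategy coincides with the paper's: restrict to the event that the first $d$ coordinate draws are exactly $(1,2,\dots,d)$ (probability $\prod_i\phi_i$), show the conditional $d$-step transition has a strictly positive continuous density, and extract a uniform lower bound over the compact set $C$. The paper packages this by defining an auxiliary kernel $\Xi_2$ (the $d$-step transition with deterministic coordinate order $1,\dots,d$), asserting that $\Xi_2$ has a continuous positive density, and invoking Lemma~2.3 of \cite{MATTINGLY2002185}; the bound $\Xi^d\geq(\prod_i\phi_i)\Xi_2$ then yields \eqref{minorizationcondition}.

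The gap is in your middle step. The kernel $\Xi$ in this section is that of the \emph{continuous} dynamics \eqref{eqn:ULDSDE2continummx}--\eqref{eqn:ULDSDE2continummxvfixed}, not the Euler-discretized algorithm; see the identification $X^m=X(T^m)$, $V^m=V(T^m)$ just above Theorem~\ref{thm:contiummRCULMC} and the definition of $\Xi$ preceding Proposition~\ref{prop:crcolmc}. Conditioned on $r^m=i$, the pair $(X_i,V_i)$ follows a nonlinear kinetic Langevin SDE whose drift $-\gamma\,\partial_i f(X(s))$ involves the evolving $X_i(s)$, so the one-step law is not Gaussian and your product-of-Gaussians formula for $g$, together with the positive-definiteness check of $\Sigma_{k+1}$ from \eqref{distributionofZ3}, does not apply. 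The product structure you wrote for $g$ is still correct, but each factor must be replaced by the transition density of a two-dimensional hypoelliptic diffusion (underdamped Langevin in one spatial variable, with the remaining coordinates frozen as parameters); smoothness and strict positivity of those factors come from H\"ormander's theorem under the $C^2$ regularity in Assumption~\ref{assum:Cov}, and continuous dependence on the initial point and on the frozen parameters follows from standard SDE regularity. With that correction your explicit construction of $\mathcal{M}$ and $\eta$ goes through and in fact fleshes out what the paper leaves to \cite{MATTINGLY2002185}.
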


Now, we are ready to prove Proposition \ref{prop:crcolmc}.
\begin{proof}[Proof of Proposition \ref{prop:crcolmc}]
Define $(Y^m,Z^m) = (X^{md},V^{md})$, then $\left\{\left(Y^{m},Z^{m}\right)\right\}^\infty_{m=0}$ is a Markov chain with transition kernel $\widetilde{\Xi}=\Xi^d$ and filtration $\widetilde{\mathcal{F}}^m=\mathcal{F}^{md}$. We will prove $\left\{\left(Y^{m},Z^{m}\right)\right\}^\infty_{m=0}$ satisfies the conditions in Theorem~\ref{thm:Mat} with $L(x,v) =|x-x^*|^2+|x-x^*+v|^2+1$.

We now show $\left\{(Y^m,Z^m)\right\}^\infty_{m=0}$ satisfies conditions in Theorem~\ref{thm:Mat} with $L(x,v) =|x-x^*|^2+|x-x^*+v|^2+1$, $\alpha = \alpha_1^d$ and $\beta=d\beta_1$, and $\pi$ is induced by $p$. Indeed, we use Lemma \ref{lem:convergenceulmclemma} \eqref{eqn:lyconditionlemma} iteratively for $d$ times:
\[
\EE\left(L\left(Y^{m+1},Z^{m+1}\right)\;\middle|\;\widetilde{\mathcal{F}}^{m}\right)\leq \alpha_1^d L\left(Y^{m},Z^{m}\right)+d\beta_1\,,
\]
which implies $\left\{\left(Y^{m},Z^{m}\right)\right\}^\infty_{m=0}$ satisfies Lyapunov condition in Theorem \ref{thm:Mat} with $\alpha = \alpha_1^d$ and $\beta=d\beta_1$. Next, since $\widetilde{\Xi}=\Xi^d$, Lemma \ref{lemma:converge2} directly implies the minorization condition. Therefore, by Theorem~\ref{thm:Mat}:
\begin{equation*}
\sup_{A\in\mathcal{B}(\mathbb{R}^{2d})}\left|\widetilde{\Xi}^{m}((x^0,v^0),A)-\pi(A)\right|\leq L(x^0,v^0)Rr^{-m}\,,
\end{equation*}
concluding the proposition by substituting $\widetilde{\Xi}=\Xi^d$ and $\pi(A) = \int_Ap(x,v)\rd{x}\rd{v}$.
\end{proof}



\begin{proof}[Proof of Lemma \ref{lem:convergenceulmclemma}]
For simplicity, we omit $\mathcal{F}^n$ and assume $x^*=\vec{0}$. Define 
\[
w=x+v,\quad W(T^m)=X(T^m)+V(T^m),\quad W^m=X^m+V^m\,,
\]
then we can write $L(x,v)$ as
\[
L(x,w)=|x|^2+|w|^2+1\,.
\]
We will prove \eqref{eqn:lyconditionlemma} in this case. 

We first seperate $\EE\left(L(X^{m+1},W^{m+1})\right)$
\[
\EE\left(L(X^{m+1},W^{m+1})\right)=\sum^d_{i=1}\phi_i\EE\left(L(X^{m+1},W^{m+1})\;\middle|\;r^m=i\right)
\]
According to~\eqref{eqn:ULDSDE2continummx},\eqref{eqn:ULDSDE2continummv}, under condition $r^m=i$, we have
\begin{equation}\label{eqn:equationsxsw}
\left\{\begin{aligned}
X^{m+1}_{i}-X^m_{i}=&\frac{1-e^{-2h_{i}}}{2}W^m_{i}-\frac{1-e^{-2h_{i}}}{2}X^m_{i}+\frac{\gamma}{2}\int^{T^m+h_{i}}_{T^m}\left(1-e^{-2(T^m+h_{i}-s)}\right)\partial_{i} f(X(s))\rd s\\
&+\sqrt{\gamma}\int^{T^m+h_{i}}_{T^m}\left(1-e^{-2(T^m+h_{i}-s)}\right)\rd B_s\\
W^{m+1}_{i}-W^m_{i}=&\frac{1-e^{-2h_{i}}}{2}X^m_{i}-\frac{1-e^{-2h_{i}}}{2}W^m_{i}-\frac{\gamma}{2}\int^{T^m+h_{i}}_{T^m}\left(1+e^{-2(T^m+h_{i}-s)}\right)\partial_{i}f(X(s)) \rd s\\
&+\sqrt{4\gamma}\int^{T^m+h_{i}}_{T^m}\left(1+e^{-2(T^m+h_{i}-s)}\right)\rd B_s \\
\end{aligned}\right.
\end{equation}
Since
\[
\begin{aligned}
\EE\left(L(X^{m+1},W^{m+1})\right)=&L(X^m,W^m)+2\left\langle X^m,X^{m+1}-X^m\right\rangle+\left\langle W^m,W^{m+1}-W^m\right\rangle\\
&+\left|X^{m+1}-X^m\right|^2+\left|W^{m+1}-W^m\right|^2\,.
\end{aligned}
\]
we have
\begin{equation}\label{eqn:taylor}
\begin{aligned}
\EE\left(L(X^{m+1},W^{m+1})\;\middle|\;r^m=i\right)=&L(X^m,W^m)+2\EE\left[X^m_i\left(X^{m+1}_{i}-X^m_{i}\right)+W^m_i\left(W^{m+1}_{i}-W^m_{i}\right)\;\middle|\;r^m=i\right]\\
&+\EE\left[\left(X^{m+1}_{i}-X^m_{i}\right)^2+\left(W^{m+1}_{i}-W^m_{i}\right)^2\;\middle|\;r^m=i\right]\,.
\end{aligned}
\end{equation}

We deal with second term and third term in \eqref{eqn:taylor} separately. First, use \eqref{eqn:equationsxsw}, we can write
\begin{align*}
    & \EE\left[2X^m_i\left(X^{m+1}_{i}-X^m_{i}\right)\;\middle|\;r^m =i\right]\\
    & =2X^m_i\left[\frac{1-e^{-2h_{i}}}{2}W^m_{i}-\frac{1-e^{-2h_{i}}}{2}X^m_{i}\right]+\gamma\left(h_{i}-\frac{1-e^{-2h_i}}{2}\right)X^m_i\partial_{i} f(X^m_i)\\
& \quad +\gamma\EE\left[\int^{T^m+h_{i}}_{T^m}\left(1-e^{-2(T^m+h_{i}-s)}\right)X^m_i\left(\partial_{i} f(X(s))-\partial_{i} f(X^m)\right)\rd s\;\middle|\;r^m=i\right]
\end{align*}
and
\begin{align*}
& \EE\left[2W^m_i\left(W^{m+1}_{i}-W^m_{i}\right)\;\middle|\;r^m=i\right]\\
& =2W^m_i\left[\frac{1-e^{-2h_{i}}}{2}X^m_{i}-\frac{1-e^{-2h_{i}}}{2}W^m_{i}\right]-\gamma\left(h_{i}+\frac{1-e^{-2h_i}}{2}\right)W^m_i\partial_{i} f(X^m_i)\\
& \quad -\gamma\EE\left[\int^{T^m+h_{i}}_{T^m}\left(1+e^{-2(T^m+h_{i}-s)}\right)W^m_i\left(\partial_{i} f(X(s))-\partial_{i} f(X^m)\right)\rd s\;\middle|\;r^m=i\right]
\end{align*}
Since $h_i\leq \frac{1}{20}$, we have
\[
|1-e^{-2h_i}|<2h_i+2h^2_i\,,
\]
which implies
\begin{align} 
\nonumber
& \EE\left[2X^m_i\left(X^{m+1}_{i}-X^m_{i}\right)\mid r^m=i\right] \\
\nonumber
& \leq 2h_iX^m_i\left[W^m_{i}-X^m_{i}\right] +4h^2_i\left[|W^m_{i}|^2+2|X^m_{i}|^2\right]+2\gamma h^2_i \left(|X^m_i|^2+|\partial_{i} f(X^m_i)|^2\right)\\
\label{eqn:Lxi}
& \quad +2\gamma h^2_i|X^m_i|^2+2\EE\left(\sup_{T^m\leq t\leq T^m+h_i}\left|\partial_{i} f(X(t))-\partial_{i} f(X^m)\right|^2\;\middle|\;r^m=i\right)
\end{align}
and
\begin{align}
\nonumber
& \EE\left[2W^m_i\left(W^{m+1}_{i}-W^m_{i}\right)\mid r^m=i\right] \\
\nonumber
& \leq 2h_iW^m_i\left[X^m_{i}-W^m_{i}\right]-2\gamma h_i W^m_i\partial_{i} f(X^m_i) +4h^2_i\left[2|W^m_{i}|^2+|X^m_{i}|^2\right]+2\gamma h^2_i \left(|W^m_i|^2+|\partial_{i} f(X^m_i)|^2\right)\\
\label{eqn:Lwi}
& \quad +4\gamma h^2_i|W^m_i|^2+2\EE\left(\sup_{T^m\leq t\leq T^m+h_i}\left|\partial_{i} f(X(t))-\partial_{i} f(X^m)\right|^2\;\middle|\;r^m=i\right)
\end{align}
From Lemma \ref{lemma:thmconvergence} \eqref{equ:boundofsupx}, we have
\begin{align*}
& \EE\left(\sup_{T^m\leq t\leq T^m+h_i}\left|\partial_{i} f(X(t))-\partial_{i} f(X^m_i)\right|^2\;\middle|\;r^m=i\right) \\
&\leq L^2_i \EE\left(\sup_{T^m\leq t\leq T^m+h_i}\left|X_i(t)-X^m_i\right|^2\;\middle|\;r^m=i\right)\\
&\leq L^2_i\left[54h^2_i\left(|X^m_i|^2+|W^m_i|^2\right)+9\gamma^2h^2_i|\partial_if(X^m)|^2+540\gamma h^3_i\right]\,.
\end{align*}
By substituting into \eqref{eqn:Lxi} and \eqref{eqn:Lwi}, we find that the first term can be bounded  as follows:
\begin{align}
\nonumber
&\EE\left[2X^m_i\left(X^{m+1}_{i}-X^m_{i}\right)\;\middle|\;r^m=i\right]+\EE\left[2W^m_i\left(W^{m+1}_{i}-W^m_{i}\right)\;\middle|\;r^m=i\right]\\
\nonumber
& \leq 2h_iX^m_i\left[W^m_{i}-X^m_{i}\right]+2h_iW^m_i\left[X^m_{i}-W^m_{i}\right]-2\gamma h_i W^m_i\partial_{i} f(X^m)\\
\nonumber
&\quad +h^2_i\left(12+6\gamma+216L^2_i\right)\left[|W^m_{i}|^2+|X^m_{i}|^2\right]+h^2_i\left(4\gamma+18\gamma^2\right)|\partial_{i} f(X^m)|^2 +2160\gamma h^3_i\\
\nonumber
& \leq 2h_iX^m_i\left[W^m_{i}-X^m_{i}\right]+2h_iW^m_i\left[X^m_{i}-W^m_{i}\right]-2\gamma h_i W^m_i\partial_{i} f(X^m)\\
\label{eqn:taylor1}
& \quad +h^2_i\left(30+6\gamma+4L_i+216L^2_i\right)\left[|W^m_{i}|^2+|X^m_{i}|^2\right]+2160\gamma h^3_i\,,
\end{align}
where in the last inequality we use
\[
|\partial_{i} f(X^m)|^2\leq L^2_i|X^m_i|^2 
\]
since $x^*=\vec{0}$ and $\|\partial_{ii}f\|_{\infty}\leq L_i$.

Next, we deal with third term in \eqref{eqn:taylor}, notice Lemma \ref{lemma:thmconvergence} \eqref{equ:boundofsupx}, \eqref{equ:boundofsupw}, we have
\begin{align}
\nonumber
&\EE\left[\left(X^{m+1}_{i}-X^m_{i}\right)^2+\left(W^{m+1}_{i}-W^m_{i}\right)^2\;\middle|\;r^m=i\right]\\
\nonumber
& \leq  108h^2_i\left(|X^m_i|^2+|W^m_i|^2\right)+18\gamma^2h^2_i|\partial_if(X^m)|^2+1620\gamma h_i\\
& \leq  126h^2_i\left(|X^m_i|^2+|W^m_i|^2\right)+1620\gamma h_i\,, \label{eqn:taylor2}
\end{align}
where we use $|\partial_{i} f(X^m)|^2\leq L^2_i|X^m_i|^2$ again.

By substituting \eqref{eqn:taylor1},\eqref{eqn:taylor2} into \eqref{eqn:taylor}, we obtain
\begin{align*}
& \EE\left(L(X^{m+1},W^{m+1})\;\middle|\;r^m=i\right) \\
& \leq L(X^m,W^m)+2h_iX^m_i\left[W^m_{i}-X^m_{i}\right]+2h_iW^m_i\left[X^m_{i}-W^m_{i}\right]-2\gamma h_i W^m_i\partial_{i} f(X^m)\\
& \quad +h^2_i\left(156+6\gamma+4L_i+216L^2_i\right)(L(X^m,W^m)-1)+3780\gamma h_i\,,
\end{align*}
which implies that 
\begin{align*}
& \EE\left(L(X^{m+1},W^{m+1})\right) \\
& \leq L(X^m,W^m)+2h\left\langle X^m,W^m-X^m\right\rangle +2h\left\langle W^m,X^m-W^m\right\rangle-2\gamma h \left\langle W^m,\nabla f(X^m)\right\rangle\\
& \quad +\frac{\left(156+6\gamma+4L+216L^2\right)h^2}{\min\{\phi_i\}}(L(X^m,W^m)-1)+3780\gamma h\,.
\end{align*}
Similar to \eqref{eqn:Iab}-\eqref{Iinequality}, we have
\[
2h\left\langle X^m,W^m-X^m\right\rangle +2h\left\langle W^m,X^m-W^m\right\rangle-2\gamma h \left\langle W^m,\nabla f(X^m)\right\rangle\leq -\gamma \mu h \left(|X^m|^2+|W^m|^2\right)\,,
\]
which implies
\[
L(X^m,W^m)-1+2h\left\langle X^m,W^m-X^m\right\rangle +2h\left\langle W^m,X^m-W^m\right\rangle-2\gamma h \left\langle W^m,\nabla f(X^m)\right\rangle\leq (1-\gamma\mu h)(L(X^m,W^m)-1)\,.
\]
Since $h\leq \frac{\gamma \mu \min\{\phi_i\}}{312+12\gamma+8L+432L^2}$, we finally prove \eqref{eqn:lyconditionlemma}.
\end{proof}
\begin{proof}[Proof of Lemma \ref{lemma:converge2}]
To prove \eqref{minorizationcondition}, we define another Markov chain, for fixed $\left(\wsx^m,\wsv^m\right)$, the $\left(\wsx^{m+1},\wsv^{m+1}\right)$ is produced by the following coupled SDEs: Define
\[
\widetilde{T}^n=\sum^{n}_{i=1}h_i,\quad \widetilde{T}^0=0,
\]
then for $\widetilde{T}^n\leq t\leq \widetilde{T}^{n+1}$ and $0\leq n\leq d-1$
\[
\left\{\begin{aligned}
\wsv_{i}(t)=&\wsv_{i}\left(\widetilde{T}^n\right)e^{-2(t-\widetilde{T}^n)}-\gamma\int^{t}_{\widetilde{T}^n}e^{-2(t-s)}\partial_{i}f\left(\wsx(s)\right) \rd s+\sqrt{4\gamma}\int^{t}_{\widetilde{T}^n}e^{-2(t-s)}\rd B_s \\
\wsx_{i}(t)=&\wsx_{i}\left(\widetilde{T}^n\right)+\frac{1-e^{-2t}}{2}\wsv_{i}\left(\widetilde{T}^n\right)-\frac{\gamma}{2}\int^t_{\widetilde{T}^n}\left(1-e^{-2(t-s)}\right)\partial_{i} f\left(\wsx(s)\right)\rd s\\
&+\sqrt{\gamma}\int^t_{\widetilde{T}^n}\left(1-e^{-2(t-s)}\right)\rd B_s
\end{aligned}\right.
\]
and $\wsx_{i}(t)=\wsx_i\left(\widetilde{T}^n\right),\wsv_{i}(t)=\wsv_i\left(\widetilde{T}^n\right)$ for $i\neq n$ with initial condition $\wsx(0)=\wsx^n$, $\wsv(0)=\wsv^n$. We set $\left(\wsx^{n+1},\wsv^{n+1}\right)=\left(\wsx\left(\widetilde{T}^d\right),\wsv\left(\widetilde{T}^d\right)\right)$. We denote the transition kernel by $\Xi_2$, then we have the following properties: 
\begin{itemize}
\item For any $x\in C$ and $A\in\mathcal{B}(\mathbb{R}^{2d})$, we have
\[
\Xi^d(x,A)\geq \Pi^{d}_{i=1}\phi_i \Xi_2(x,A)>0\,.
\]
\item $\Xi_2$ has a continuous postive density.
\end{itemize}
Since the new transition kernel $\Xi_2$ has a continuous postive density, according to Lemma 2.3 in \cite{MATTINGLY2002185}, there exists an $\eta>0$, and a probability measure $\mathcal{M}$, with $\mathcal{M}(C)=1$, such that
\[
\Xi_2(x,A)>\eta \mathcal{M}(A),\quad \forall A\in\mathcal{B}(\mathbb{R}^{2d}),x\in C\,,
\]
which implies
\[
\Xi^d(x,A)\geq \Pi^{d}_{i=1}\phi_i  \Xi_2(x,A)>\Pi^{d}_{i=1}\phi_i  \eta \mathcal{M}(A),\quad \forall A\in\mathcal{B}(\mathbb{R}^{2d}),x\in C\,.
\]
This proves \eqref{minorizationcondition}.
\end{proof}
\section{Proof of Theorem \ref{thm:rculmc}}\label{sec:proofofthm:rculmc}
Recall
\[
T^m=\sum^{m-1}_{n=0} h_{r^n}\,.
\]

According to algorithm \ref{alg:RCD-OULMC}, RC-ULMC can be seen as drawing $(x^0,v^0)$ from distribution induced by $q_0$, and
update $(x^m,v^m)$ using the following coupled SDEs for $T^m<h\leq T^{m+1}$:
\begin{equation}\label{eqn:ULDSDE2SAGA}
\left\{\begin{aligned}
&\mathrm{V}_{r^m}(t)=v^m_{r^m}e^{-2(t-T^m)}-\gamma\partial_{r^m}f(x^m) \int^{t}_{T^m}e^{-2(t-s)}\rd s+\sqrt{4\gamma}e^{-2 (t-T^m)}\int^{t}_{T^m}e^{2 s}\rd B_s \,,\\
&\mathrm{X}_{r^m}(t)=x^m_{r^m}+\int^{t}_{T^m} \mathrm{V}_{r^m}(s)\rd s\,,
\end{aligned}\right.\,
\end{equation}
and $\mathrm{X}_{i}(t)=x^m_i,\mathrm{V}_{i}(t)=v^m_i$ for $i\neq r^m$, where $B_s$ is a one dimensional Brownian motion. And we let $(x^{m+1},v^{m+1})=(\mathrm{X}(T^{m+1}),\mathrm{V}(T^{m+1}))$. 

Define another trajectory of sampling by setting  $(\widetilde{x}^0,\wv^0)$ to be drawn from distribution induced by $p$ and generating $\left(\wrx(t),\wrv(t)\right)$ according to \eqref{eqn:ULDSDE2continummx}-\eqref{eqn:ULDSDE2continummxvfixed} with $\left(\wrx(0),\wrv(0)\right)=(\widetilde{x}^0,\wv^0)$. Denote 
\begin{equation}\label{eqn:wxwvm}
    \wx^m=\wrx(T^m),\ \wv^m=\wrv(T^m),
\end{equation}
it was proved in Theorem \ref{thm:contiummRCULMC} that $(\wx^m,\wv^m)$ can be seen as drawn from distribution induced by $p$ for all $m\geq 0$.

Now, we define $w^m=x^m+v^m$ and $\ww^m=\wx^m+\wv^m$, and denote $u_m(x,w)$ the probability density of $(x^m,w^m)$ and $u^\ast(x,w)$ the probability density of $(x,w)$ if $(x,v=w-x)$ is distributed according to density function $p$. From~\cite{Cheng2017UnderdampedLM}, we have:
\begin{equation}\label{trivialinequlaitySAGA}
 |x^m-x|^2+|v^m-v|^2\leq 4(|x^m-x|^2+|w^m-w|^2)\leq 16(|x^m-x|^2+|v^m-v|^2)\,\\
 \end{equation}
 and
 \begin{equation}\label{trivialinequlaitySAGA2}
  W^2_2(q_{m},p)\leq  4W^2_2(u_{m},u^*)\leq  16W^2_2(q_{m},p)\,.
\end{equation}
Therefore, quantifying the convergence from $q_m$ to $p$ is the same as showing the convergence from $u_m$ to $u^\ast$.

We then also define $\Delta^m$ 
\begin{equation}\label{eqn:deltaulmc}
\Delta^m = \sqrt{|\wx^m-x^m|^2+|\ww^m-w^m|^2}\,
\end{equation}
and pick $\left(\wx^0,\wv^0\right)$ such that
\[
W^2_2(u_0,u^*)=\EE|\Delta^0|^2\,.
\]
Since $(\wx^m,\wv^m)\sim p$, we have $W^2_2(u_m,u^\ast)\leq\EE|\Delta^m|^2$ and we only need to bound $\EE|\Delta^m|^2$. 

Now, we give the following iteration formula for $\EE|\Delta^m|^2$:
\begin{proposition}\label{prop:rculmc} Under conditions of Theorem \ref{thm:rculmc}, assume $\{(x^m,v^m)\}$ is defined in \eqref{eqn:ULDSDE2SAGA}, $\{(\wx^m,\wv^m)\}$ is defined in \eqref{eqn:wxwvm} and $\{\Delta^m\}$ comes from \eqref{eqn:deltaulmc},
\begin{equation}\label{prop:iterationulmc}
\begin{aligned}
\EE|\Delta^{m+1}|^2\leq \left(1-\frac{\gamma\mu h}{4}\right)\EE|\Delta^m|^2+\frac{6\gamma^2h^3}{\mu}\sum^d_{i=1}\frac{L^2_i}{\phi^2_i}\,.
\end{aligned}
\end{equation}
\end{proposition}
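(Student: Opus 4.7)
\textbf{Proof plan for Proposition~\ref{prop:rculmc}.} The strategy is a one-step synchronous coupling estimate. I would couple the two chains $\{(x^m,v^m)\}$ from RC-ULMC and $\{(\wx^m,\wv^m)\}$ from the continuous SDE~\eqref{eqn:ULDSDE2continummx}--\eqref{eqn:ULDSDE2continummxvfixed} by driving them with the same coordinate choice $r^m$ and the same Brownian path; under this coupling the stochastic integrals in~\eqref{eqn:ULDSDE2SAGA} cancel exactly against those in~\eqref{eqn:ULDSDE2continummx}--\eqref{eqn:ULDSDE2continummv}. The first clean observation is that the only coordinate that differs between $(x^{m+1},w^{m+1})$ and $(\wx^{m+1},\ww^{m+1})$ beyond their already existing discrepancy is $i=r^m$: for $j\neq i$ both chains freeze by~\eqref{eqn:ULDSDE2continummxvfixed}, so
\[
|\Delta^{m+1}|^2 - |\Delta^m|^2 = \bigl(|x_i^{m+1}-\wx_i^{m+1}|^2 + |w_i^{m+1}-\ww_i^{m+1}|^2\bigr) - \bigl(|x_i^{m}-\wx_i^{m}|^2 + |w_i^{m}-\ww_i^{m}|^2\bigr).
\]

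Next, I would subtract the two integral representations entrywise on coordinate $i$. After noise cancellation, $x_i^{m+1}-\wx_i^{m+1}$ and $w_i^{m+1}-\ww_i^{m+1}$ are deterministic affine expressions in $x_i^m-\wx_i^m$, $w_i^m-\ww_i^m$, and the integrated gradient gap
\[
\int_{T^m}^{T^{m+1}} K(T^{m+1}-s)\bigl[\partial_i f(x^m) - \partial_i f(\wrx(s))\bigr]\rd s,
\]
with kernels $K(\cdot)$ given by $(1-e^{-2(\cdot)})/2$ and $(1+e^{-2(\cdot)})/2$ respectively. I then split the integrand as
\[
\partial_i f(x^m)-\partial_i f(\wrx(s)) = \bigl[\partial_i f(x^m)-\partial_i f(\wx^m)\bigr] + \bigl[\partial_i f(\wx^m)-\partial_i f(\wrx(s))\bigr].
\]
The first bracket is a ``coupling'' piece that sees the full state $(x^m,\wx^m)$ and, combined with the $w$-equation cross term $\langle w_i^m-\ww_i^m, \partial_if(x^m)-\partial_if(\wx^m)\rangle$, produces the contraction through strong convexity of $f$ after summing in $i$ (in the same spirit as the Lyapunov calculation in the proof of Lemma~\ref{lem:convergenceulmclemma}, now carried out between two synchronously coupled trajectories). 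The second bracket is a pure discretization piece, controllable by the directional Lipschitz bound $L_i$ and the displacement estimates for $\wrx(s)-\wx^m$ available from Lemma~\ref{lemma:thmconvergence} (used inside the same proof), yielding $O(L_i^2 h_i^3)$ after integration and squaring.

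Combining these with Young's inequality $(a+b)^2\le(1+\alpha)a^2+(1+\alpha^{-1})b^2$ at $\alpha\sim\gamma\mu h_i$, Taylor-expanding $1-e^{-2h_i}$ and keeping terms up to $O(h_i^3)$, and using the smallness hypothesis~\eqref{eqn:cond_dis} to absorb the $O(h_i^2)$ non-sign-definite remainders, I expect a per-coordinate bound of the schematic form
\[
\EE\bigl[|x_i^{m+1}-\wx_i^{m+1}|^2 + |w_i^{m+1}-\ww_i^{m+1}|^2 \bigm| r^m=i\bigr]
\le \bigl(1 - \tfrac{\gamma\mu h_i}{2}\bigr)\bigl(|x_i^m-\wx_i^m|^2+|w_i^m-\ww_i^m|^2\bigr)
+ C\frac{\gamma^2 L_i^2 h_i^3}{\mu},
\]
possibly with a benign $O(h_i^2)$ cross term in $|x^m-\wx^m|^2$ that averages into the total $|\Delta^m|^2$. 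Finally, averaging over $r^m=i$ with weight $\phi_i$ and using $h_i=h/\phi_i$ converts the contraction into $-\tfrac{\gamma\mu h}{2}\sum_i(|x_i^m-\wx_i^m|^2+|w_i^m-\ww_i^m|^2)=-\tfrac{\gamma\mu h}{2}|\Delta^m|^2$, and converts the error into $C\gamma^2 h^3\sum_i L_i^2/\phi_i^2/\mu$. Taking total expectation gives~\eqref{prop:iterationulmc}; the factor $1/4$ (rather than $1/2$) absorbs the residual $O(h^2)$ terms.

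The main obstacle is the bookkeeping in the contraction step: the piece $\partial_i f(x^m)-\partial_i f(\wx^m)$ depends on all coordinates of the mismatch, not just coordinate $i$, so strong convexity has to be applied globally after averaging in $r^m$ rather than coordinatewise. This is what forces the particular weighting $\phi_i$ to appear on the ``good'' side as $\sum_i\phi_i(\cdot)$ matching the coordinate-Lipschitz structure on the ``bad'' side as $\sum_i L_i^2 h_i^3$, and it is the step that is easy to get off by constant factors; the smallness assumption on $h$ in~\eqref{eqn:cond_dis} is what lets us afford these constants while still retaining the stated $1-\gamma\mu h/4$ contraction.
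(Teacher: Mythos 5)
Your plan is correct and follows essentially the same route as the paper: synchronous coupling in both the coordinate index and the Brownian path, a per-coordinate split into leading affine terms plus discretization remainders, Young's inequality with parameter of order $\gamma\mu h_i$, and --- crucially, as you note --- extracting the contraction only after averaging over $r^m$ with weights $\phi_i$, via the mean-value theorem and a block quadratic form whose largest eigenvalue is $-\gamma\mu/2$ (the paper's Lemma~\ref{lem:ACrculmc}). The only small deviation is that for the displacement $\wrx(s)-\wx^m$ the paper exploits stationarity of the auxiliary trajectory to get the clean bound $\int\EE|\wrx_i(t)-\wx^m_i|^2\,\rd t\le \gamma h_i^3/3$ (Lemma~\ref{lem:XCulmc}) rather than the state-dependent estimate of Lemma~\ref{lemma:thmconvergence} that you cite, but this does not change the structure or the order of the resulting error term.
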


We prove Propositon \ref{prop:rculmc} in Section \ref{sec:proofofproprculmc}. Now, we are ready to prove the Theorem \ref{thm:rculmc}:
\begin{proof}[Proof of Theorem \ref{thm:rculmc}]
Using \eqref{prop:iterationulmc} iteratively, we have
\[
\EE|\Delta^{m}|^2\leq \left(1-\frac{\mu\gamma h}{4}\right)^m\EE\left|\Delta^0\right|^2+\frac{24\gamma h^2}{\mu^2}\sum^d_{i=1}\frac{L^2_i}{\phi^2_i}
\leq \exp\left(-\frac{\mu\gamma mh}{4}\right)\EE\left|\Delta^0\right|^2+\frac{24\gamma h^2}{\mu^2}\sum^d_{i=1}\frac{L^2_i}{\phi^2_i}\,.
\]
Taking square root on both sides, and use \eqref{trivialinequlaitySAGA2}, we proves \eqref{thm:iterationulmc}.
\end{proof}
\subsection{Proof of Proposition \ref{prop:rculmc}}\label{sec:proofofproprculmc}
The Proposition \ref{prop:rculmc} is a direct result of the following lemma:

This lemma analyzes the terms in Proposition \ref{prop:rculmc} component-wisely.
\begin{lemma}\label{lem:rculmc}
Assume $f$ satisfies assumption \ref{assum:Cov}, if $\{(x^m,v^m)\}$ is defined in \eqref{eqn:ULDSDE2SAGA}, $\{(\wx^m,\wv^m)\}$ is defined in \eqref{eqn:wxwvm} and $\{\Delta^m\}$ comes from \eqref{eqn:deltaulmc},
then for any $m\geq 0$ and $i=1,2,\dots,d$:
\begin{equation}\label{eqn:resultoflemulmc}
\begin{aligned}
\EE|\Delta^{m+1}_i|^2\leq &\left(1+\frac{\gamma\mu h}{2}+\frac{20h^2}{\phi_i}\right)\EE|\Delta^m_i|^2+\frac{10\gamma^2h^2}{\phi_i}\EE\left(\left|\partial_i f(\wx^m)-\partial_i f(x^m)\right|^2\right)\\
&+2\phi_i\EE \mathrm{K}_i+\frac{6\gamma^2 L^2_ih^3}{\mu \phi^2_i}\,,
\end{aligned}
\end{equation}
where
\begin{equation}\label{eqn:KSAGA}
\mathrm{K}_i=\left(A^m_i-(\ww^m_i-w^m_i)\right)(\ww^m_i-w^m_i)+\left(C^m_i-(\wx^m_i-x^m_i)\right)(\wx^m_i-x^m_i)\,,
\end{equation}
and
\begin{align}\label{eqn:ASAGA}
A^m_i& =(h_i+e^{-2h_i})(\ww^m_i-w^m_i)+(1-h_i-e^{-2h_i})(\wx^m_i-x^m_i)
-\frac{\gamma\left(1-e^{-2h_i}\right)}{2}\left[\partial_i f(\wx^m)-\partial_i f(x^m)\right]\,, \\
\label{eqn:CSAGA}
C^m_i& =(1-h_i)(\wx^m_i-x^m_i)+h_i(\ww^m_i-w^m_i)\,,
\end{align}
\end{lemma}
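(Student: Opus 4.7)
The plan is to prove the component-wise bound by conditioning on the coordinate $r^m$ and using synchronous coupling of the Brownian motion between the two trajectories $(x^m,v^m)$ and $(\wx^m,\wv^m)$. Since $r^m$ is independent of everything else and the $i$-th coordinates are frozen unless $r^m=i$, I would split
\[
\EE|\Delta^{m+1}_i|^2 = (1-\phi_i)\EE|\Delta^m_i|^2 + \phi_i\, \EE\bigl[|\wx^{m+1}_i - x^{m+1}_i|^2 + |\ww^{m+1}_i - w^{m+1}_i|^2 \,\big|\, r^m = i\bigr],
\]
and concentrate all the analysis on the conditional expectation.

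Conditioning on $r^m = i$, I would subtract the integral formulas \eqref{eqn:ULDSDE2continummx}--\eqref{eqn:ULDSDE2continummv} (continuum trajectory $\wrx,\wrv$) from the Euler update \eqref{eqn:ULDSDE2SAGA} (discrete trajectory $\mathrm{X},\mathrm{V}$). Because the same $B_s$ drives both dynamics, the stochastic integrals cancel and the difference is deterministic conditional on $\mathcal{F}^m$ and $r^m$. Using $v=w-x$ and splitting the gradient difference as
\[
\partial_i f(\wrx(s)) - \partial_i f(x^m) = \bigl[\partial_i f(\wrx(s)) - \partial_i f(\wx^m)\bigr] + \bigl[\partial_i f(\wx^m) - \partial_i f(x^m)\bigr],
\]
the second (constant-in-$s$) piece combined with a Taylor expansion $\tfrac{1-e^{-2h_i}}{2} = h_i + O(h_i^2)$ reproduces exactly the definitions of $A^m_i$ and $C^m_i$ in \eqref{eqn:ASAGA}--\eqref{eqn:CSAGA}. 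I would therefore write $\ww^{m+1}_i - w^{m+1}_i = A^m_i + R^A_i$ and $\wx^{m+1}_i - x^{m+1}_i = C^m_i + R^C_i$, where $R^A_i, R^C_i$ collect the $O(h_i^2)$ Taylor residuals and the integrals of $\partial_i f(\wrx(s)) - \partial_i f(\wx^m)$.

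The key algebraic identity is
\[
|A^m_i|^2 + |C^m_i|^2 = |\Delta^m_i|^2 + 2K_i + \bigl|A^m_i - (\ww^m_i - w^m_i)\bigr|^2 + \bigl|C^m_i - (\wx^m_i - x^m_i)\bigr|^2,
\]
which extracts the cross term $2K_i$ from \eqref{eqn:KSAGA}. A Young inequality $|a+r|^2 \le (1+\alpha)|a|^2 + (1+\alpha^{-1})|r|^2$ applied to $A^m_i+R^A_i$ and $C^m_i+R^C_i$, with $\alpha$ chosen proportional to $\gamma\mu h$, then bounds the conditional expectation by $|\Delta^m_i|^2 + 2K_i$ plus quadratic residuals. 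Direct inspection of $A^m_i - (\ww^m_i-w^m_i)$ and $C^m_i - (\wx^m_i-x^m_i)$ shows these are $O(h_i)$ combinations of $\Delta^m_i$ and $\partial_i f(\wx^m) - \partial_i f(x^m)$; after multiplying by $\phi_i$ and using $h_i = h/\phi_i$ from \eqref{condition:pranh}, these contribute precisely the $\tfrac{20h^2}{\phi_i}\EE|\Delta^m_i|^2$ and $\tfrac{10\gamma^2 h^2}{\phi_i}\EE|\partial_i f(\wx^m)-\partial_i f(x^m)|^2$ terms in \eqref{eqn:resultoflemulmc}.

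The main obstacle will be controlling $R^A_i$ and $R^C_i$ arising from $\int_{T^m}^{T^m+h_i}(\cdot)\bigl[\partial_i f(\wrx(s)) - \partial_i f(\wx^m)\bigr]\rd s$. Here I would apply the directional Lipschitz bound \eqref{GradientLipcoord} to get
\[
\bigl|\partial_i f(\wrx(s)) - \partial_i f(\wx^m)\bigr| \le L_i\, \sup_{T^m\le s\le T^m+h_i}|\wrx_i(s) - \wx^m_i|,
\]
then invoke Lemma \ref{lemma:thmconvergence} \eqref{equ:boundofsupx} to bound $\EE\sup|\wrx_i(s)-\wx^m_i|^2$ in terms of $h_i^2$ times low-order moments plus a $\gamma h_i^3$ Brownian-fluctuation contribution. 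Multiplied by the $(1+\alpha^{-1}) = O(1/(\gamma\mu h))$ prefactor from Young's inequality and by $\phi_i$, the $\gamma h_i^3$ piece produces the $\tfrac{6\gamma^2 L_i^2 h^3}{\mu\phi_i^2}$ term, while the lower-order parts get absorbed into the $|\Delta^m_i|^2$ and gradient-difference coefficients. Carefully tracking constants through the Young inequality, the Taylor remainders, and the sup-bound is the bookkeeping crux; the identity isolating $2K_i$ is what makes the whole step yield an inequality amenable to the strong-convexity argument used in Proposition~\ref{prop:rculmc}.
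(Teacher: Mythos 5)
Your overall architecture matches the paper's: the conditioning split on $r^m=i$, the decomposition of the updated differences into $A^m_i$ (resp.\ $C^m_i$) plus a residual, the identity
$\EE(|A^m_i|^2+|C^m_i|^2)=\EE|\Delta^m_i|^2+\EE(|A^m_i-(\ww^m_i-w^m_i)|^2+|C^m_i-(\wx^m_i-x^m_i)|^2)+2\EE \mathrm{K}_i$,
and Young's inequality with $a\sim\gamma\mu h_i$ are all exactly the steps in the paper's proof (the residuals $R^A_i,R^C_i$ are the paper's $B^m_i,D^m_i$).

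There is, however, a genuine gap in how you control the residuals. You propose to bound $\EE\sup_s|\wrx_i(s)-\wx^m_i|^2$ via Lemma~\ref{lemma:thmconvergence}, \eqref{equ:boundofsupx}. That bound has the form $54h_i^2(|X^m_i|^2+|W^m_i|^2)+9\gamma^2h_i^2|\partial_i f(X^m)|^2+540\gamma h_i^3$: its leading terms are second moments of a \emph{single} trajectory, not of the difference between the two trajectories. Your claim that ``the lower-order parts get absorbed into the $|\Delta^m_i|^2$ and gradient-difference coefficients'' fails here, because $\EE|\wsx^m_i|^2$ and $\EE|\partial_i f(\wx^m)|^2$ do not vanish at coupling and cannot be dominated by $\EE|\Delta^m_i|^2$ or $\EE|\partial_i f(\wx^m)-\partial_i f(x^m)|^2$; after the $(1+a^{-1})\phi_i$ prefactor they would leave an additive term of order $\gamma L_i^2 h^3\mu^{-1}\phi_i^{-2}\,\EE(|\wsx^m_i|^2+\cdots)$, which is not the moment-free constant $\frac{6\gamma^2L_i^2h^3}{\mu\phi_i^2}$ asserted in \eqref{eqn:resultoflemulmc} and would degrade the dimension dependence of Theorem~\ref{thm:rculmc} once summed over $i$. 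The paper avoids this with Lemma~\ref{lem:XCulmc}: since $(\wrx,\wrv)$ evolves at stationarity, $\wrv_i(s)$ is exactly $N(0,\gamma)$ for every $s$, so $\EE\int_{T^m}^{T^{m+1}}|\wrx_i(s)-\wx^m_i|^2\rd s\le \gamma h_i^3/3$ with a purely numerical constant and no state moments --- that stationarity argument is the missing idea. A secondary bookkeeping point you omit: Young's inequality produces $2(1+a)\phi_i\EE\mathrm{K}_i$, not $2\phi_i\EE\mathrm{K}_i$; the paper needs the extra estimates \eqref{eqn:boundofKifirst}--\eqref{eqn:boundofKisecond} to absorb the surplus $2a\phi_i\EE\mathrm{K}_i$ into the $\EE|\Delta^m_i|^2$ and gradient-difference coefficients before arriving at \eqref{eqn:resultoflemulmc}.
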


To show Proposition~\ref{prop:rculmc} amounts to summing up all components in Lemma~\ref{lem:rculmc}. In particular, as will be shown in Lemma A.5.3, the third term in \eqref{eqn:resultoflemulmc} will contribute a negative $\EE|\Delta^m_i|^2$ term. If it dominates $\frac{\gamma\mu h}{2}$ in the coefficient of the first term, the decay of the error is expected.

\begin{proof}[Proof of Lemma \ref{lem:rculmc}]
In the $m$-th time step, we have
\[
\mathbb{P}(r^m=i)=\phi_i,\quad \mathbb{P}(r^m\neq i)=1-\phi_i\,.
\]
This implies
\begin{equation}\label{eqn:pickrulmc}
\begin{aligned}
\EE|\Delta^{m+1}_i|^2&=\phi_i\EE\left(\left.|\Delta^{m+1}_i|^2\right|r^m=i\right)+(1-\phi_i)\EE\left(\left.|\Delta^{m+1}_i|^2\right|r^m\neq i\right)\\&=\phi_i\EE\left(\left.|\Delta^{m+1}_i|^2\right|r^m=i\right)+(1-\phi_i)\EE\left|\Delta^{m}_i\right|^2\,.
\end{aligned}
\end{equation}
It suffices to bound the first term of \eqref{eqn:pickrulmc}. Under condition $r^m=i$, we first divide $|\Delta^{m+1}_i|^2$ into different parts under the condition $r^m=i$, and compare \eqref{eqn:ULDSDE2SAGA} and \eqref{eqn:ULDSDE2continummx}-\eqref{eqn:ULDSDE2continummxvfixed} for:
\[
\begin{aligned}
|\Delta^{m+1}_i|^2=&\left|(\wv^m_i-v^m_i)e^{-2h_i}+(\wx^m_i-x^m_i)+\int^{T^{m}+h_i}_{T^m}\left(\wrv_i(s)-\rv_i(s)\right)\rd s\right.\\
&\left.-\gamma\int^{T^{m}+h_i}_{T^m}e^{-2(T^{m}+h_i-s)}\left[\partial_i f\left(\wrx(s)\right)-\partial_i f(x^m)\right]\rd s\right|^2\\
&+\left|
(\wx^m_i-x^m_i)+\int^{T^{m}+h_i}_{T^m}\left(\wrv_i(s)-\rv_i(s)\right)\rd s
\right|^2\\
=&\left|\mathrm{I}^m_i\right|^2+\left|\mathrm{J}^m_i\right|^2\,,
\end{aligned}
\]
where we denote $\mathrm{I}^m_i$ and $\mathrm{J}^m_i$ the quantities in the first and second absolute value signs above respectively.

We try to bound $\EE\left(\left|\mathrm{I}^{m}_i\right|^2+\left|\mathrm{J}^m_i\right|^2\right)$ using $\EE|\Delta^m_i|^2$. We first try to seperate out ($x^m_i,\widetilde{x}^m_i,v^m_i,\widetilde{v}^m_i$) from $\mathrm{I}^m_i$ and $\mathrm{J}^m_i$. Denote
\begin{equation}\label{eqn:BSAGA}
\begin{aligned}
B^m_i=&\int^{T^{m}+h_i}_{T^m}\left(\wrv_i(s)-\rv_i(s)-(\wv^m_i-v^m_i)\right)\rd s\\
&-\gamma\int^{T^{m}+h_i}_{T^m}e^{-2(T^{m}+h_i-s)}\left[\partial_i f\left(\wrx(s)\right)-\partial_i f(\wx^m)\right]\rd s\,,
\end{aligned}
\end{equation}
and
\begin{equation}\label{eqn:DSAGA}
D^m_i=\int^{T^{m}+h_i}_{T^m}\left(\wrv_i(s)-\rv_i(s)-(\wv^m_i-v^m_i)\right)\rd s\,,
\end{equation}
according to the definition of $A^m_i$ and $C^m_i$ in~\eqref{eqn:ASAGA},\eqref{eqn:CSAGA}, we have:
\[
\mathrm{I}^{m}_i=A^m_i+B^m_i,\quad \mathrm{J}^m_i=C^m_i+D^m_i\,.
\]

Use Young's inequality, for any $a>0$, we have
\begin{equation}\label{IJ}
|\mathrm{I}^{m}_i|^2+|\mathrm{J}^{m}_i|^2\leq (1+a)\left(|A^{m}_i|^2+|C^{m}_i|^2\right)+\left(1+\frac{1}{a}\right)\left(|B^{m}_i|^2+|D^{m}_i|^2\right)\,.
\end{equation}

According to Lemma \ref{lem:BDrculmc} \eqref{bound:Brculmc},\eqref{bound:Drculmc}, we have
\begin{equation}\label{bound:Brculmc1}
\begin{aligned}
\EE\left(\left.|B^m_i|^2\right| r^m=i\right)\leq &\frac{16h^4_i}{3}\EE\left|\wv^m_i-v^m_i\right|^2+\frac{8\gamma^2 h^4_i}{3}\EE\left(\left|\partial_i f(\wx^m)-\partial_i f(x^m)\right|^2\right)+\frac{6\gamma^3L^2_i h^4_i}{5}\,,
\end{aligned}
\end{equation}
\begin{equation}\label{bound:Drculmc1}
\begin{aligned}
\EE\left(\left.|D^m_i|^2\right| r^m=i\right)\leq &\frac{8h^4_i}{3}\EE\left|\wv^m_i-v^m_i\right|^2+\frac{4\gamma^2 h^4_i}{3}\EE\left(\left|\partial_i f(\wx^m)-\partial_i f(x^m)\right|^2\right)+\frac{4\gamma^3 L^2_i h^6_i}{15}\,.
\end{aligned}
\end{equation}
Since $h_i<1/240$, we obtain
\begin{equation}\label{bound:BDrculmc}
\begin{aligned}
\EE\left(\left.|B^{m}_i|^2+|D^{m}_i|^2\right| r^m=i\right)\leq &8h^4_i\EE\left|\wv^m_i-v^m_i\right|^2+4\gamma^2 h^4_i\EE\left(\left|\partial_i f(\wx^m)-\partial_i f(x^m)\right|^2\right)\\
&+\frac{7\gamma^3L^2_i h^4_i}{5}\,.
\end{aligned}
\end{equation}
Substituting \eqref{bound:BDrculmc} into \eqref{IJ}, we have
\begin{equation}\label{Deltam+1firstpart}
\begin{aligned}
\EE\left(|\Delta^{m+1}_i|^2|r^m=i\right)\leq &(1+a)\EE\left(\left.|A^m_i|^2+|C^m_i|^2\right|r^m=i\right)\\
&+\left(1+\frac{1}{a}\right)8h^4_i\EE\left|\wv^m_i-v^m_i\right|^2+\left(1+\frac{1}{a}\right)4\gamma^2 h^4_i\EE\left(\left|\partial_i f(\wx^m)-\partial_i f(x^m)\right|^2\right)\\
&+\left(1+\frac{1}{a}\right)\frac{7\gamma^3L^2_i h^4_i}{5}\,.
\end{aligned}
\end{equation}

Considering
\begin{equation}\label{eqn:AiplucCi}
\begin{aligned}
\EE\left(\left.|A^m_i|^2+|C^m_i|^2\right|r^m=i\right)=&\EE|\Delta^m_i|^2+\EE\left(\left|A^m_i-(\ww^m_i-w^m_i)\right|^2+\left|C^m_i-(\wx^m_i-x^m_i)\right|^2\right)+2\EE\mathrm{K}_i\,,\\
\end{aligned}
\end{equation}
we need to give bounds to the second and the third terms. According to the definition of $A^m_i$ and $C^m_i$ in~\eqref{eqn:ASAGA},\eqref{eqn:CSAGA}, we have:
\begin{equation}\label{eqn:boundofAminusw}
\begin{aligned}
\EE\left(\left|A^m_i-(\ww^m_i-w^m_i)\right|^2\right)&\leq 4(1-h_i-e^{-2h_i})^2 \EE|\Delta^m_i|^2+2\gamma^2h^2_i\EE|\partial_i f(\wx^m)-\partial_i f(x^m)|^2\\
&\leq 4h^2_i \EE|\Delta^m_i|^2+2\gamma^2h^2_i\EE|\partial_i f(\wx^m)-\partial_i f(x^m)|^2\,,\\
\EE\left(\left|C^m_i-(\wx^m_i-x^m_i)\right|^2\right)&\leq 2h^2_i\EE|\Delta^m_i|^2\,,
\end{aligned}
\end{equation}
and thus:
\begin{equation}\label{eqn:AiplucCi2}
\begin{aligned}
\EE\left(\left.|A^m_i|^2+|C^m_i|^2\right|r^m=i\right)\leq& (1+6h^2_i)\EE|\Delta^m_i|^2+2\gamma^2h^2_i\EE|\partial_i f(\wx^m)-\partial_i f(x^m)|^2+2\EE\mathrm{K}_i\,.
\end{aligned}
\end{equation}

Substituting \eqref{eqn:AiplucCi2} into \eqref{Deltam+1firstpart} and using \eqref{eqn:pickrulmc}, we have
\begin{equation}\label{Deltam+1final1}
\begin{aligned}
\EE|\Delta^{m+1}_i|^2\leq &(1+a\phi_i+6h^2_i\phi_i(1+a))\EE|\Delta^m_i|^2+\left(1+\frac{1}{a}\right)\phi_i8h^4_i\EE\left|\wv^m_i-v^m_i\right|^2\\
&+\left[(1+a)\phi_i2\gamma^2h_i^2+\left(1+\frac{1}{a}\right)\phi_i4\gamma^2 h^4_i\right]\EE\left(\left|\partial_i f(\wx^m)-\partial_i f(x^m)\right|^2\right)\\
&+2(1+a)\phi_i\EE\mathrm{K}_i+\left(1+\frac{1}{a}\right)\phi_i\frac{7\gamma^3L^2_i h^4_i}{5}\,.
\end{aligned}
\end{equation}
To find a good choice of $a$, we cite the estimate in Lemma \ref{lem:ACrculmc} that states the fourth term of \eqref{Deltam+1final1}, when summed up in index $i$, will contribute $-\gamma \mu h(1+a)\EE|\Delta|^2$:
\begin{equation}\label{boundofKi}
    \sum^d_{i=1}\phi_i\EE\mathrm{K}_i\leq \left(-\frac{\gamma \mu h}{2}+\frac{3h^2}{\min\{\phi_i\}}\right)\EE|\Delta^m|^2\,.
\end{equation}
Therefore, the coefficient in the first line should not exceed the $1+\gamma\mu h$, which puts a good choice of $a$ to be
\[
a=\frac{\gamma \mu h_i}{2}=\frac{\gamma\mu h}{2\phi_i}<1\,,
\] 
which also implies
\[
1+\frac{1}{a}\leq \frac{4}{\gamma\mu h_i}\,.
\]
Substituting this into \eqref{Deltam+1final1} and using $\EE\left|\wv^m_i-v^m_i\right|\leq 2\EE|\Delta^m|^2$, $h_i\phi_i=h$, we have
\begin{equation}\label{Deltam+1final12}
\begin{aligned}
\EE|\Delta^{m+1}_i|^2\leq &\left(1+\frac{\gamma\mu h}{2}+\frac{12h^2}{\phi_i}+\frac{32h^3}{\gamma\mu \phi^2_i}\right)\EE|\Delta^m_i|^2\\
&+\left(\frac{4\gamma^2h^2}{\phi_i}+\frac{16\gamma h^3}{\mu \phi^2_i}\right)\EE\left(\left|\partial_i f(\wx^m)-\partial_i f(x^m)\right|^2\right)\\
&+2\left(\phi_i+\frac{\gamma\mu h}{2}\right)\EE\mathrm{K}_i+\frac{6\gamma^2 L^2_ih^3}{\mu \phi^2_i}\,.
\end{aligned}
\end{equation}
To further control the $\EE\mathrm{K}_i$ term, we note, using~\eqref{eqn:boundofAminusw} again:
\begin{equation}\label{eqn:boundofKifirst}
   \begin{aligned}
&\EE\left[\left(A^m_i-(\ww^m_i-w^m_i)\right)(\ww^m_i-w^m_i)\right]\\
& \qquad \qquad \leq \frac{1}{2}\left(\frac{\phi_i}{h}\EE\left|A^m_i-(\ww^m_i-w^m_i)\right|^2+\frac{h}{\phi_i}\EE\left|\ww^m_i-w^m_i\right|^2\right)\\
& \qquad \qquad \leq \frac{5h}{2\phi_i}\EE|\Delta^m_i|^2+\frac{\gamma^2h}{\phi_i}\EE|\partial_i f(\wx^m)-\partial_i f(x^m)|^2\,,
\end{aligned} 
\end{equation}
\begin{equation}\label{eqn:boundofKisecond}
\begin{aligned}
&\EE\left[\left(C^m_i-(\wx^m_i-x^m_i)\right)(\wx^m_i-x^m_i)\right]\\
& \qquad \qquad \leq \frac{1}{2}\left(\frac{\phi
_i}{h}\EE\left|C^m_i-(\wx^m_i-x^m_i)\right|^2+\frac{h}{\phi_i}\EE\left|\wx^m_i-x^m_i\right|^2\right)\\
& \qquad \qquad \leq \frac{3h}{2\phi_i}\EE|\Delta^m_i|^2\,.
\end{aligned}
\end{equation}


Therefore we finally have 
\begin{align*}
\EE|\Delta^{m+1}_i|^2\leq &\left(1+\frac{\gamma\mu h}{2}+\frac{16h^2}{\phi_i}+\frac{32h^3}{\gamma\mu \phi^2_i}\right)\EE|\Delta^m_i|^2\\
&+\left(\frac{5\gamma^2h^2}{\phi_i}+\frac{16\gamma h^3}{\mu \phi^2_i}\right)\EE\left(\left|\partial_i f(\wx^m)-\partial_i f(x^m)\right|^2\right)+2\phi_i\mathrm{K}_i+\frac{6\gamma^2 L^2_ih^3}{\mu \phi^2_i}\,.
\end{align*}
Use $h<\frac{\gamma\mu \min\{\phi_i\}}{240}$ and $h/\phi_i<1$, we have \eqref{eqn:resultoflemulmc}.
\end{proof}

Now, we are ready to prove Proposition \ref{prop:rculmc}.

\begin{proof}[Proof of Proposion \ref{prop:rculmc}]
Use Lemma \ref{lem:rculmc} and sum \eqref{eqn:resultoflemulmc} up, we obtain
\begin{equation}\label{proofofpropulmc1}
\begin{aligned}
\EE|\Delta^{m+1}|^2\leq &\left(1+\frac{\gamma\mu h}{2}+\frac{20h^2}{\min\{\phi_i\}}\right)\EE|\Delta^m|^2+\frac{10\gamma^2h^2}{\min\{\phi_i\}}\EE\left(\left|\nabla f(\wx^m)-\nabla f(x^m)\right|^2\right)\\
&+2\sum^d_{i=1} \phi_i\EE \mathrm{K}_i+\frac{6\gamma^2h^3}{\mu}\sum^d_{i=1}\frac{L^2_i}{\phi^2_i}\\
\leq &\left(1+\frac{\gamma\mu h}{2}+\frac{30h^2}{\min\{\phi_i\}}\right)\EE|\Delta^m|^2+2\sum^d_{i=1} \phi_i\EE \mathrm{K}_i+\frac{6\gamma^2h^3}{\mu}\sum^d_{i=1}\frac{L^2_i}{\phi^2_i}\,,
\end{aligned}
\end{equation}
where we use $f$ is $L$-Lipschitz in the second inequality and $\gamma L\leq 1$. Then, we use Lemma \ref{lem:ACrculmc} \eqref{ACboundSAGA} for the second and third term in \eqref{proofofpropulmc1}:
\[
\begin{aligned}
\sum^d_{i=1}\phi_i \EE \mathrm{K}_i
\leq \left(-\frac{\gamma \mu h}{2}+\frac{3h^2}{\min\{\phi_i\}}\right)\EE|\Delta^m|^2\,.
\end{aligned}
\]
Substituting into \eqref{proofofpropulmc1}, we have
\[
\begin{aligned}
\EE|\Delta^{m+1}|^2\leq &\left(1-\frac{\gamma\mu h}{2}+\frac{30h^2}{\min\{\phi_i\}}+\frac{6h^2}{\min\{\phi_i\}}\right)\EE|\Delta^m|^2+\frac{6\gamma^2h^3}{\mu}\sum^d_{i=1}\frac{L^2_i}{\phi^2_i}\\
\leq &\left(1-\frac{\gamma\mu h}{4}\right)\EE|\Delta^m|^2+\frac{6\gamma^2h^3}{\mu}\sum^d_{i=1}\frac{L^2_i}{\phi^2_i}\,,
\end{aligned}
\]
where we use $h<\frac{\gamma \mu\min\{\phi_i\}}{240}$, which proves \eqref{prop:iterationulmc}.
\end{proof}

\section{Proof of Proposition \ref{prop:badexampleW22}}\label{sec:proofofprop:badexampleW22}
In this section, we prove Proposition \ref{prop:badexampleW22}.

Fisrt, we define $w^m=x^m+v^m$, and denote $u_m(x,w)$ the probability density of $(x^m,w^m)$ and $u^\ast(x,w)$ the probability density of $(x,w)$ if $(x,v=w-x)$ is distributed according to density function $p$. Recall \eqref{trivialinequlaitySAGA2}, we just need to give a lower bound for $W_2^2(u_m,u^\ast)$.

\begin{proof}[Proof of Proposition \ref{prop:badexampleW22}] 
We first notice
\begin{equation}\label{iterationW2badbound}
\begin{aligned}
W_2(u_{m},u^{\ast})\geq&\sqrt{\int |w|^2u_m(x,w)\rd w\rd x}-\sqrt{\int|w|^2u^{\ast}(x,w)\rd w\rd x}\\
=&\sqrt{\int |w|^2u_m(x,w)\rd w\rd x}-\sqrt{2d}=\sqrt{\EE|w^m|^2}-\sqrt{2d}\\
=&\frac{\EE|w^m|^2 -2d}{\sqrt{\EE|w^m|^2}+\sqrt{2d}}\,,
\end{aligned}
\end{equation}
where $\EE$ takes all randomness into account. This implies to prove \eqref{eqn:badexampleW2bound2}, it suffices to find a lower bound for second moment of $w^m$. Indeed, in the end, we will show that
\begin{equation}\label{ddd}
W_2(u_m\,,u^\ast)\geq \frac{\left(1-2h+2.9dh^2\right)^m}{800^2}\frac{d}{2}+\frac{d^{3/2}h}{80-116dh}\,,
\end{equation}
and thus
\[
W_2(q_m,p)\geq \frac{\left(1-2h+2.9dh^2\right)^m}{800^2}\frac{d}{8}+\frac{d^{3/2}h}{320-464dh}\,,
\]
proving the statement of the theorem. To show~\eqref{ddd}, we first note, that in this example:
\[
L = 1,\quad \mu=1
\]
and by direct calculation:
\begin{equation}\label{bbb}
W_2(q_0,p)=\frac{\sqrt{d}}{400}\,,\quad \EE|x^0|^2=\frac{160001}{160000}d\,,\quad \EE|\omega^0|^2=\EE|x^0|^2+\EE|v^0|^2=\frac{160001}{160000}d+d=\frac{320001}{160000}d\,,
\end{equation}
then we divide the proof into several steps:
\begin{itemize}
\item \textbf{First step:} \emph{a priori moment estimates}

According to \eqref{bbb}, use Theorem \ref{thm:rculmc} \eqref{thm:iterationulmc}, we have for any $m\geq 0$
\[
W_2(q_m,p)\leq \frac{\sqrt{d}}{100}+\frac{\sqrt{d}}{100}=\frac{\sqrt{d}}{50}\,,\quad W_2(u_m,u^*)\leq 4W_2(q_m,p)\leq\frac{2\sqrt{d}}{25}\,,
\]
Similar to \eqref{iterationW2badbound}, we have
\[
W_2(q_m,p)\geq |\sqrt{\EE|x^m|^2}-\sqrt{d}|\,,\quad W_2(u_m,u^*)\geq |\sqrt{\EE|w^m|^2}-\sqrt{2d}|
\]
which implies
\begin{equation}\label{lowerboundforx}
\sqrt{\EE|x^m|^2}\leq \frac{51\sqrt{d}}{50},\quad \sqrt{\EE|w^m|^2}\leq \left(\sqrt{2}+\frac{2}{25}\right)\sqrt{d}\,
\end{equation}
for any $m\geq 0$. 

\item \textbf{Second step:} \emph{Iteration formula of $\EE|w^m|^2$.}

By the special structure of $p$, we can calculate the second moment explicitly. Since $f(x)$ can be written as
\[
f(x)=\sum^d_{i=1}\frac{|x_i|^2}{2}\,,
\]
in each step of RC-ULMC, according to Algorithm \ref{alg:RCD-OULMC}, for each $m\geq 0$, we have for any $1\leq i\leq d$
\begin{equation}\label{eqn:EWm+1i}
\EE|w^{m+1}_i|^2=\frac{1}{d}\EE\left(|w^{m+1}_i|^2\;\middle|\;r^m=i\right)+\left(1-\frac{1}{d}\right)\EE\left(|w^{m}_i|^2\right)
\end{equation}
Under condition $r^m=i$, we have
\begin{equation}\label{distributionofZ1}
\begin{aligned}
&\EE \left(x^{m+1}_i|(x^m,v^m,r^m)\right)=x^m_i+\frac{1}{2}\left(1-e^{-2dh}\right)v^m_i-\frac{1}{2}\left(dh-\frac{1}{2}\left(1-e^{-2dh}\right)\right)x^m_i,\\
&\EE \left(v^{m+1}_i|(x^m,v^m,r^m)\right)=v^m_ie^{-2dh}-\frac{1}{2}\left(1-e^{-2dh}\right)x^m_i\,,\\
&\EE \left(w^{m+1}_i|(x^m,v^m,r^m)\right)=\frac{1}{2}\left(1+e^{-2dh}\right)w^m_i-\frac{1}{2}\left(dh-\frac{1}{2}\left(1-e^{-2dh}\right)\right)x^m_i\,,\\
&\Var\left(x^{m+1}_i|(x^m,v^m,r^m)\right)=dh-\frac{3}{4}-\frac{1}{4}e^{-4dh}+e^{-2dh}\,,\\
&\Var\left(v^{m+1}_i|(x^m,v^m,r^m)\right)=1-e^{-4dh}\,,\\
&\Cov\left((x^{m+1}_i\,,v^{m+1}_i)|(x^m,v^m,r^m)\right)=\frac{1}{2}\left[1+e^{-4dh}-2e^{-2dh}\right]\,.
\end{aligned}
\end{equation}

Now, since $dh\leq \frac{1}{10^8}$, we can replace $e^{-2dh}$ and $e^{-4dh}$ by their Taylor expansion:
\begin{equation}\label{taylorofe}
e^{-2dh}=1-2dh+2d^2h^2+D_1h^3,\quad e^{-4dh}=1-4dh+8d^2h^2+D_2h^3\,,
\end{equation}
where $D_1,D_2$ are negative constants depends on $h$ and satisfy
\[
|D_1|<10d^3,\quad |D_2|<100d^3\,.
\]
Substituting \eqref{taylorofe} into \eqref{distributionofZ1}, we have
\begin{equation}\label{distributionofZ2}
\begin{aligned}
&\EE \left(w^{m+1}_i|(x^m,w^m,r^m)\right)=\left(1-dh+d^2h^2+\frac{D_1h^3}{2}\right)w^m_i-\left(\frac{d^2h^2}{2}+\frac{D_1h^3}{4}\right)x^m_i\,,\\
&\mathrm{Var}\left(x^{m+1}_i|(x^m,v^m,r^m)\right)=\left(D_1-\frac{D_2}{4}\right)h^3\,,\\
&\Var\left(v^{m+1}_i|(x^m,v^m,r^m)\right)=4dh-8d^2h^2-D_2h^3\,,\\
&\Cov\left((x^{m+1}_i\,,v^{m+1}_i)|(x^m,v^m,r^m)\right)=2d^2h^2+\frac{\left(D_2-2D_1\right)h^3}{2}\,.
\end{aligned}
\end{equation}
The last three equalities in \eqref{distributionofZ2} implies
\[
\begin{aligned}
\Var\left(w^{m+1}_i|(x^m,v^m,r^m)\right)=&\Var\left(x^{m+1}_i|(x^m,v^m,r^m)\right)+\Var\left(v^{m+1}_i|(x^m,v^m,r^m)\right)\\
&+2\Cov\left(x^{m+1}_i\,,v^{m+1}_i)|(x^m,v^m,r^m)\right)\\
=&4dh-4d^2h^2-\left(D_1+\frac{D_2}{4}\right)h^3\,.
\end{aligned}
\]
Then, we can calculate the formula for $\EE\left(|\omega^{m+1}_i|^2\;\middle|\;r^m=i\right)$:
\[
\begin{aligned}
&\EE\left(|\omega^{m+1}_i|^2\;\middle|\;r^m=i\right)\\
=&\EE_{x^m,w^m}\left(\left.|\omega^{m+1}_i|^2\right|(x^m,v^m,r^m=i)\right)\\
=&\EE_{x^m,w^m}\left(\left|\EE \left(w^{m+1}_i|(x^m,v^m,r^m=i)\right)\right|^2+\Var\left(w^{m+1}_i|(x^m,v^m,r^m=i)\right)\right)\\
=&\left(1-dh+d^2h^2+\frac{D_1h^3}{2}\right)^2\EE|w^m_i|^2+\left(\frac{d^2h^2}{2}+\frac{D_1h^3}{4}\right)^2\EE|x^m_i|^2\\
&-2\left(1-dh+d^2h^2+\frac{D_1h^3}{2}\right)\left(\frac{d^2h^2}{2}+\frac{D_1h^3}{4}\right)\EE\left\langle w^m_i,x^m_i\right\rangle\\
&+4dh-4d^2h^2-\left(D_1+\frac{D_2}{4}\right)h^3\,.
\end{aligned}
\]
Sum them up with $i$ and use \eqref{eqn:EWm+1i}, we finally obtain an iteration formula for $\EE|w^m|^2$:
\begin{equation}\label{omegamiexpectationomega}
\begin{aligned}
\EE|\omega^{m+1}|^2=&\left[1-\frac{1}{d}+\frac{1}{d}\left(1-dh+d^2h^2+\frac{D_1h^3}{2}\right)^2\right]\EE|w^m|^2+\frac{1}{d}
\left(\frac{d^2h^2}{2}+\frac{D_1h^3}{4}\right)^2\EE|x^m|^2\\
&-\frac{2}{d}\left(1-dh+d^2h^2+\frac{D_1h^3}{2}\right)\left(\frac{d^2h^2}{2}+\frac{D_1h^3}{4}\right)\EE\left\langle w^m,x^m\right\rangle\\
&+4dh-4d^2h^2-\left(D_1+\frac{D_2}{4}\right)h^3\,.
\end{aligned}
\end{equation}

\item \textbf{Third step:} \emph{Lower bound for $W_2(u_{m},u^*)$}

Use \eqref{lowerboundforx}, since $D_1<0$, $h<\frac{1}{10^8d}<\frac{d^2}{10^7|D_1|}$, we have
\begin{equation}\label{inequalfirst}
\left(1-dh+d^2h^2+\frac{D_1h^3}{2}\right)^2\geq 1-2dh+2.9d^2h^2\,,
\end{equation}
and
\begin{equation}\label{inequalsecond}
\left|\left(\frac{d^2h^2}{2}+\frac{D_1h^3}{4}\right)\EE\left\langle w^m,x^m\right\rangle\right|\leq \frac{d^2h^2}{2}\left(\EE|w^m|^2\EE|x^m|^2\right)^{1/2}\leq 0.85d^3h^2\,.
\end{equation}

Substituting \eqref{inequalfirst} and \eqref{inequalsecond} into \eqref{omegamiexpectationomega}, we have
\begin{equation}\label{lowerboundofomegam0}
\EE|\omega^{m+1}|^2\geq\left(1-2h+2.9dh^2\right)\EE|w^m|^2+4dh-5.7d^2h^2\,.
\end{equation}
According to \eqref{bbb}, use \eqref{lowerboundofomegam0} iteratively, we have
\begin{equation}\label{lowerboundofomegam}
\begin{aligned}
\EE|\omega^{m}|^2&\geq\frac{\left(1-2h+2.9dh^2\right)^m320001}{160000}d+\left(1-(1-2h+2.9dh^2)^m\right)\frac{4d-5.7d^2h}{2-2.9dh}\\
&=\left(1-2h+2.9dh^2\right)^m\left[\frac{320001}{160000}d-\frac{4d-5.7d^2h}{2-2.9dh}\right]+\frac{4d-5.7d^2h}{2-2.9dh}\\
&\geq \left(1-2h\right)^m\frac{d}{320000}+\frac{4d-5.7d^2h}{2-2.9dh}\,,
\end{aligned}
\end{equation}
where we use $h<\frac{1}{10^8d}$ to obtain $\frac{4d-5.7d^2h}{2-2.9dh}<\frac{640001}{320000}d$ in the last inequality.

Substituting \eqref{lowerboundofomegam} into \eqref{iterationW2badbound}, we further have
\[
\begin{aligned}
W_2(u_{m},u^*)&\geq \frac{\left(1-2h\right)^m\frac{d}{320000}+\frac{4d-5.7d^2h}{2-2.9dh}-2d}{\sqrt{\left(1-2h\right)^m\frac{d}{320000}+\frac{4d-5.7d^2h}{2-2.9dh}}+\sqrt{2d}}\\
&\geq \frac{\left(1-2h\right)^m\frac{d}{320000}+\frac{0.1d^2h}{2-2.9dh}}{4\sqrt{d}}\\
&\geq \frac{\left(1-2h\right)^m}{800^2}\frac{d}{2}+\frac{d^{3/2}h}{80-116dh}\,,
\end{aligned}
\]
\end{itemize}
\end{proof}
\section{Key lemmas in the proof of Theorem \ref{lem:convergenceulmclemma}}
Consider \eqref{eqn:equationsxsw}, then we have the following lemma:
\begin{lemma}\label{lemma:thmconvergence} If $h_i\leq \frac{1}{20}$ and $\gamma\leq \frac{1}{L}$, we have
\begin{align}
&\EE\left(\sup_{t\in[T^m,T^m+h_i]}|X_i(t)-X_i(T^m)|^2\;\middle|\;\mathcal{F}^n\right)\leq 54h^2_i\left(|X^m_i|^2+|W^m_i|^2\right)+9\gamma^2h^2_i|\partial_if(X^m)|^2+540\gamma h^3_i\label{equ:boundofsupx}\\
&\EE\left(\sup_{t\in[T^m,T^m+h_i]}|W_i(t)-W_i(T^m)|^2\;\middle|\;\mathcal{F}^n\right)\leq 54h^2_i\left(|X^m_i|^2+|W^m_i|^2\right)+9\gamma^2h^2_i|\partial_if(X^m)|^2+1080\gamma h_i\label{equ:boundofsupw}\\
&\EE\left(\sup_{t\in[T^m,T^m+h_i]}|\partial_if(X(t))|^2\;\middle|\;\mathcal{F}^n\right)\leq 3|\partial_if(X^m)|^2+16L^2_ih^2\left(|X^m_i|^2+|W^m_i|^2\right)+160L_ih^3\label{equ:boundofsupf}
\end{align} 
\end{lemma}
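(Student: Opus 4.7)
The plan is to derive pathwise integral representations for $X_i(t)-X^m_i$ and $W_i(t)-W^m_i$ valid for every $t\in[T^m,T^m+h_i]$, analogous to~\eqref{eqn:equationsxsw} but with $h_i$ replaced by $t-T^m$. Since only coordinate $i$ evolves on this interval, the $X$-equation reads
\begin{equation*}
X_i(t)-X^m_i = \tfrac{1-e^{-2(t-T^m)}}{2}(W^m_i-X^m_i) - \tfrac{\gamma}{2}\int_{T^m}^t\bigl(1-e^{-2(t-s)}\bigr)\partial_i f(X(s))\rd s + \sqrt{\gamma}\int_{T^m}^t\bigl(1-e^{-2(t-s)}\bigr)\rd B_s,
\end{equation*}
and the $W$-equation is similar but with $(1+e^{-2(t-s)})$ in place of $(1-e^{-2(t-s)})$ and Brownian coefficient $\sqrt{4\gamma}$.

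Next I would split $\partial_i f(X(s)) = \partial_i f(X^m) + [\partial_i f(X(s))-\partial_i f(X^m)]$. Because $X_j(s)=X^m_j$ for $j\neq i$ throughout $[T^m,T^m+h_i]$, the directional Lipschitz bound~\eqref{GradientLipcoord} gives $|\partial_i f(X(s))-\partial_i f(X^m)|\leq L_i|X_i(s)-X^m_i|$. The integral representations become four-term sums (linear drift, a contribution proportional to $\partial_i f(X^m)$, a remainder controlled by $L_i|X_i(s)-X^m_i|$, and a Brownian integral), and I would apply $|a+b+c+d|^2\leq 4(a^2+b^2+c^2+d^2)$ before taking the conditional expectation of the supremum over $t$.

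The deterministic parts are handled via $|1-e^{-2h}|\leq 2h$ and Cauchy--Schwarz, producing the $54h_i^2(|X^m_i|^2+|W^m_i|^2)$ and $9\gamma^2 h_i^2|\partial_i f(X^m)|^2$ contributions. The Brownian supremum is controlled by the Doob/BDG inequality; the key observation is that in the $X$-equation the integrand $\sqrt{\gamma}(1-e^{-2(t-s)})$ is $O(\sqrt{\gamma}h_i)$, so its $L^2$ bound is $O(\gamma h_i^3)$, yielding the $540\gamma h_i^3$ constant in~\eqref{equ:boundofsupx}, while in the $W$-equation the integrand $\sqrt{4\gamma}(1+e^{-2(t-s)})$ is only $O(\sqrt{\gamma})$ and gives the $1080\gamma h_i$ constant in~\eqref{equ:boundofsupw}.

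The main obstacle is the self-referential remainder involving $|\partial_i f(X(s))-\partial_i f(X^m)|$: after the Lipschitz step and Cauchy--Schwarz, it contributes a term of the form $C\gamma^2 L_i^2 h_i^2 \cdot \EE\sup_{t\in[T^m,T^m+h_i]}|X_i(t)-X^m_i|^2$ on the right-hand side of the bound for~\eqref{equ:boundofsupx}. Using $h_i\leq 1/20$ together with the hypothesis $\gamma L\leq 1$ (so $\gamma L_i\leq 1$), this prefactor stays strictly below $1/2$ and can be absorbed into the left-hand side, producing~\eqref{equ:boundofsupx}. The bound~\eqref{equ:boundofsupw} follows by the same argument applied to the $W$-equation (no absorption is even needed, since the extra $|\partial_i f(X(s))-\partial_i f(X^m)|^2$ term already absorbs via the same smallness condition). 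Finally~\eqref{equ:boundofsupf} is immediate from $|\partial_i f(X(t))|^2 \leq 3|\partial_i f(X^m)|^2 + \tfrac{3}{2}L_i^2|X_i(t)-X^m_i|^2$ (Young with weight $2$) combined with~\eqref{equ:boundofsupx} and the elementary bound $\gamma\leq 1/L\leq 1/L_i$.
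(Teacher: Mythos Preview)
Your approach is correct and takes a genuinely different route from the paper's. The paper simply cites \cite[Lemma~6]{Shen2019TheRM} for \eqref{equ:boundofsupx} and \eqref{equ:boundofsupf} and then deduces \eqref{equ:boundofsupw} from \eqref{equ:boundofsupf}; you instead establish \eqref{equ:boundofsupx} directly via an absorption (Gr\"onwall-type) argument and then derive \eqref{equ:boundofsupw} and \eqref{equ:boundofsupf} as consequences of \eqref{equ:boundofsupx}. Your argument is fully self-contained, while the paper's is shorter given the external reference.

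Two small technical caveats. First, the stochastic integral $\int_{T^m}^t(1-e^{-2(t-s)})\rd B_s$ is \emph{not} a martingale in $t$ (the integrand depends on the upper limit), so ``Doob/BDG'' does not apply verbatim. The clean way to extract the $O(\gamma h_i^3)$ scaling is the integration-by-parts identity
\[
\int_{T^m}^t\bigl(1-e^{-2(t-s)}\bigr)\rd B_s \;=\; 2\int_{T^m}^t e^{-2(t-s)}\bigl(B_s-B_{T^m}\bigr)\rd s,
\]
after which the pathwise supremum is bounded by $2h_i\sup_{s}|B_s-B_{T^m}|$ and Doob on the Brownian motion gives the claimed order. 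Second, feeding \eqref{equ:boundofsupx} into your Young inequality does not recover the exact constants stated in \eqref{equ:boundofsupf}: the coefficient of $|\partial_i f(X^m)|^2$ comes out as $3+O(h_i^2)$ rather than exactly $3$. This is harmless for the rest of the paper, since \eqref{equ:boundofsupf} is used only to prove \eqref{equ:boundofsupw} and the slack there easily absorbs the discrepancy, but you should not claim to reproduce the constants as written.
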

\begin{proof}
First, we note \eqref{equ:boundofsupx},\eqref{equ:boundofsupf} are direct results from \cite{Shen2019TheRM} Lemma 6.

Consider \eqref{eqn:equationsxsw}, we have
\begin{equation}\label{eqn:lemmaA1}
\begin{aligned}
&\EE\left(\sup_{t\in[T^m,T^m+h_i]}|W_i(t)-W_i(T^m)|^2\;\middle|\;\mathcal{F}^n\right)\\
\leq&2\left(|X^m_i|^2+|W^m_i|^2\right)\\
&+\gamma^2\EE\left(\sup_{t\in[T^m,T^m+h_i]}\left|\int^{t}_{T^m}\left(1+e^{-2(t-s)}\right)\partial_{i}f(X(s)) \rd s\right|^2\;\middle|\;\mathcal{F}^n\right)\\
&+16\gamma\EE\left(\sup_{t\in[T^m,T^m+h_i]}\left|\int^{t}_{T^m}\left(1+e^{-2(t-s)}\right)\rd B_s\right|^2\;\middle|\;\mathcal{F}^n\right)\,,
\end{aligned}
\end{equation}
where the second term and third term can be further bounded by
\[
\EE\left(\sup_{t\in[T^m,T^m+h_i]}\left|\int^{t}_{T^m}\left(1+e^{-2(t-s)}\right)\partial_{i}f(X(s)) \rd s\right|^2\;\middle|\;\mathcal{F}^n\right)\leq 2h^2_i\EE\left(\sup_{t\in[T^m,T^m+h_i]}|\partial_if(X(t))|^2\;\middle|\;\mathcal{F}^n\right)
\]
and
\[
\EE\left(\sup_{t\in[T^m,T^m+h_i]}\left|\int^{t}_{T^m}\left(1+e^{-2(t-s)}\right)\rd B_s\right|^2\;\middle|\;\mathcal{F}^n\right)\leq 18h_i
\]
Substituting these into \eqref{eqn:lemmaA1}, we obtain \eqref{equ:boundofsupw}.
\end{proof}
\section{Key lemmas in the proof of Theorem \ref{thm:rculmc}}
In this section, we always assume $f$ and $h$ satisfy conditions in Theorem \ref{thm:rculmc}. And we use notations from Section \ref{sec:proofofthm:rculmc}.

\begin{lemma}\label{lem:XCulmc} For any $1\leq i\leq d$ and $m\geq 0$
\begin{equation}\label{xboundrcULMC}
\EE\left(\left.\int^{T^{m+1}}_{T^m}\left|\wrx_i(t)-\wx^m_i\right|^2\right|r^m=i\right)\rd t\leq \frac{h^3_i\gamma}{3}
\end{equation}
and
\begin{equation}\label{vboundrcULMC}
\begin{aligned}
&\EE\left(\int^{T^{m+1}}_{T^m}\left.\left|\left(\wrv_i(t)-\rv_i(t)\right)-\left(\wv^m_i-v^m_i\right)\right|^2\rd t\right| r^m=i\right)\\
\leq&\frac{8h^3_i}{3}\EE\left|\wv^m_i-v^m_i\right|^2+\frac{4\gamma^2 h^3_i}{3}\EE\left(\left|\partial_i f(\wx^m)-\partial_i f(x^m)\right|^2\right)+\frac{4\gamma^3 L^2_i h^5_i}{15}\,.
\end{aligned}
\end{equation}
\end{lemma}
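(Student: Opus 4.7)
The plan is to derive both bounds directly from the integral representations \eqref{eqn:ULDSDE2continummx}--\eqref{eqn:ULDSDE2continummv}, applied to the stationary auxiliary path $(\wrx,\wrv)$, and — for \eqref{vboundrcULMC} — also to the chain path $(\rx,\rv)$ synchronously coupled through the same Brownian motion so that the stochastic integrals cancel when we subtract.

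For \eqref{xboundrcULMC}, conditionally on $r^m=i$ only the $i$-th coordinate evolves, so
\[
\wrx_i(t)-\wx^m_i = \tfrac{1-e^{-2(t-T^m)}}{2}\wv^m_i - \tfrac{\gamma}{2}\!\int^t_{T^m}\!\bigl(1-e^{-2(t-s)}\bigr)\partial_i f(\wrx(s))\rd s + \sqrt{\gamma}\!\int^t_{T^m}\!\bigl(1-e^{-2(t-s)}\bigr)\rd B_s.
\]
Squaring, integrating over $t\in[T^m,T^m+h_i]$, and taking expectations, the three contributions split cleanly. The velocity term is controlled using $|1-e^{-2u}|\le 2u$ together with the stationarity identity $\EE|\wv^m_i|^2=\gamma$; the drift term is handled by Cauchy--Schwarz plus the integration-by-parts identity $\EE_p|\partial_i f|^2=\EE_p\partial_{ii}f\le L_i$; and the noise term is computed by It\^o isometry. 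Under the stepsize assumption of Theorem \ref{thm:rculmc}, the drift and noise contributions, which are of orders $\gamma^2 L_i h_i^5$ and $\gamma h_i^4$ respectively, are subdominant to the leading velocity contribution $\gamma h_i^3/3$.

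For \eqref{vboundrcULMC}, subtracting the integral equations for $\wrv_i$ and $\rv_i$ yields
\[
\wrv_i(t)-\rv_i(t) = (\wv^m_i-v^m_i)e^{-2(t-T^m)} - \gamma\!\int^t_{T^m}\!e^{-2(t-s)}\bigl[\partial_i f(\wrx(s))-\partial_i f(x^m)\bigr]\rd s,
\]
so that the quantity of interest, $(\wrv_i(t)-\rv_i(t))-(\wv^m_i-v^m_i)$, decomposes into a ``$e^{-2(t-T^m)}-1$'' prefactor on $(\wv^m_i-v^m_i)$ (bounded by $2(t-T^m)$) and a drift-difference integral. I split $\partial_i f(\wrx(s))-\partial_i f(x^m)$ as $[\partial_i f(\wrx(s))-\partial_i f(\wx^m)]+[\partial_i f(\wx^m)-\partial_i f(x^m)]$, apply the directional Lipschitz bound \eqref{GradientLipcoord} to the first bracket to get $L_i^2|\wrx_i(s)-\wx^m_i|^2$, and then invoke \eqref{xboundrcULMC} to integrate it out into the $\frac{4\gamma^3 L_i^2 h_i^5}{15}$ residual. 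A three-term Young's inequality on the outer square with exponents tuned to match the prefactors $8h_i^3/3$ and $4\gamma^2 h_i^3/3$, followed by integration over $[T^m,T^{m+1}]$, delivers \eqref{vboundrcULMC}.

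The main obstacle is the bookkeeping of Young's constants: the split must be tuned so that $\EE|\wv^m_i-v^m_i|^2$ picks up exactly $8h_i^3/3$, $\EE|\partial_i f(\wx^m)-\partial_i f(x^m)|^2$ exactly $4\gamma^2 h_i^3/3$, and the $L_i^2$ residual inherits the $4/15$ constant. A secondary but essential subtlety is that \eqref{xboundrcULMC} relies crucially on $(\wx^m,\wv^m)\sim p$: without both $\EE_p|\wv^m_i|^2=\gamma$ and $\EE_p|\partial_i f|^2\le L_i$, the right-hand side of \eqref{xboundrcULMC} would have to carry explicit $\wv^m_i$ and $\partial_i f(\wx^m)$ dependence, and the subsequent estimates in Lemma \ref{lem:BDrculmc} would need to be restructured accordingly.
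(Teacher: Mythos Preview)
Your plan for \eqref{vboundrcULMC} coincides with the paper's: write $(\wrv_i(t)-\rv_i(t))-(\wv^m_i-v^m_i)$ as the sum of the $(\,e^{-2(t-T^m)}-1\,)(\wv^m_i-v^m_i)$ piece and the drift integral, split $\partial_i f(\wrx(s))-\partial_i f(x^m)$ through $\wx^m$, apply \eqref{GradientLipcoord}, and feed in \eqref{xboundrcULMC}. The paper uses two successive applications of $(a+b)^2\le 2a^2+2b^2$ rather than a single three-term Young, which is what produces the exact prefactors $8/3$, $4/3$, $4/15$.

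For \eqref{xboundrcULMC}, however, your route does not deliver the stated constant $\gamma h_i^3/3$. In the three-term representation from \eqref{eqn:ULDSDE2continummx}, the velocity piece alone already gives $\bigl(\tfrac{1-e^{-2(t-T^m)}}{2}\bigr)^2\EE|\wv^m_i|^2\le\gamma(t-T^m)^2$, which integrates to exactly $\gamma h_i^3/3$; so there is no slack left for the other terms. The assertion that ``the three contributions split cleanly'' is not justified: the cross term between the velocity piece and the drift integral does not vanish, nor does the cross term between the drift integral and the stochastic integral (the drift integrand $\partial_i f(\wrx(s))$ is not $\mathcal{F}_{T^m}$-measurable). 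Any Young-type split then inflates the leading $\gamma h_i^3/3$ by a multiplicative constant, and the ``subdominance'' of the $O(\gamma h_i^4)$ and $O(\gamma^2 L_i h_i^5)$ pieces cannot undo that inflation. Since the residual $\tfrac{4\gamma^3L_i^2h_i^5}{15}$ in \eqref{vboundrcULMC} is obtained by substituting \eqref{xboundrcULMC} into the drift estimate, that constant would fail too.

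The fix --- and this is what the paper does --- is to bypass \eqref{eqn:ULDSDE2continummx} entirely and use the primitive identity $\wrx_i(t)-\wx^m_i=\int_{T^m}^{t}\wrv_i(s)\rd s$. Cauchy--Schwarz then gives
\[
\EE\bigl|\wrx_i(t)-\wx^m_i\bigr|^2 \le (t-T^m)\int_{T^m}^{t}\EE\bigl|\wrv_i(s)\bigr|^2\rd s = \gamma\,(t-T^m)^2,
\]
since by Theorem~\ref{thm:contiummRCULMC} the law of $(\wrx(s),\wrv(s))$ is $p$ for \emph{every} $s\in[T^m,T^{m+1}]$, not only at $s=T^m$; hence $\EE|\wrv_i(s)|^2=\gamma$ throughout. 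Integrating over $t$ yields exactly $\gamma h_i^3/3$ with no drift or noise bookkeeping (and in particular no need for the identity $\EE_p|\partial_i f|^2\le L_i$). The point your sketch underuses is precisely this path-law stationarity of $\wrv_i$.
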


\begin{lemma}\label{lem:BDrculmc} For $B^m_i,D^m_i$ defined in \eqref{eqn:BSAGA},\eqref{eqn:DSAGA}, we have
\begin{align}\label{bound:Brculmc}
\EE\left(\left.|B^m_i|^2\right| r^m=i\right) & \leq \frac{16h^4_i}{3}\EE\left|\wv^m_i-v^m_i\right|^2+\frac{8\gamma^2 h^4_i}{3}\EE\left(\left|\partial_i f(\wx^m)-\partial_i f(x^m)\right|^2\right)+\frac{6\gamma^3L^2_ih^4_i}{5}\,, \\
\label{bound:Drculmc}
\EE\left(\left.|D^m_i|^2\right| r^m=i\right)& \leq\frac{8h^4_i}{3}\EE\left|\wv^m_i-v^m_i\right|^2+\frac{4\gamma^2 h^4_i}{3}\EE\left(\left|\partial_i f(\wx^m)-\partial_i f(x^m)\right|^2\right)+\frac{4\gamma^3 L^2_i h^6_i}{15}\,.
\end{align}
\end{lemma}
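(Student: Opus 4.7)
The key observation is that $D^m_i$ is exactly the first of the two integral terms composing $B^m_i$, so I would bound $D^m_i$ first and then derive the bound on $B^m_i$ by combining this with an estimate on the second term using Young's inequality.

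For $D^m_i$, I would start by applying Cauchy--Schwarz (equivalently, Jensen's inequality) in time to pull the square inside the integral:
\[
|D^m_i|^2 \le h_i \int^{T^m+h_i}_{T^m} \left|\wrv_i(s)-\rv_i(s) - (\wv^m_i-v^m_i)\right|^2 \rd s.
\]
Taking the conditional expectation on $\{r^m=i\}$ and applying Lemma~\ref{lem:XCulmc}, specifically~\eqref{vboundrcULMC}, immediately yields~\eqref{bound:Drculmc} with the extra factor $h_i$ boosting the $h_i^3$ into $h_i^4$ and the $h_i^5$ into $h_i^6$.

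For $B^m_i = D^m_i - \gamma \int^{T^m+h_i}_{T^m} e^{-2(T^m+h_i-s)}[\partial_i f(\wrx(s)) - \partial_i f(\wx^m)]\rd s$, I would use $(a+b)^2 \le 2a^2 + 2b^2$ to separate the already-controlled $D^m_i$ piece from the drift-difference piece:
\[
|B^m_i|^2 \le 2|D^m_i|^2 + 2\gamma^2 \left|\int^{T^m+h_i}_{T^m} e^{-2(T^m+h_i-s)}[\partial_i f(\wrx(s)) - \partial_i f(\wx^m)]\rd s\right|^2.
\]
For the drift term I would again apply Cauchy--Schwarz, noting that $\int^{T^m+h_i}_{T^m} e^{-4(T^m+h_i-s)} \rd s \le h_i$. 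Under $\{r^m=i\}$, $\wrx(s)$ only differs from $\wx^m$ in coordinate $i$, so the directional Lipschitz property~\eqref{GradientLipcoord} gives $|\partial_i f(\wrx(s)) - \partial_i f(\wx^m)| \le L_i |\wrx_i(s) - \wx^m_i|$. Then~\eqref{xboundrcULMC} from Lemma~\ref{lem:XCulmc} controls $\EE \int |\wrx_i(s) - \wx^m_i|^2 \rd s$ by $\gamma h_i^3/3$, producing a drift contribution of order $\gamma^3 L_i^2 h_i^4/3$.

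Adding the two pieces, the $|D^m_i|^2$ contribution supplies $\frac{16 h_i^4}{3}$, $\frac{8\gamma^2 h_i^4}{3}$, and $\frac{8\gamma^3 L_i^2 h_i^6}{15}$ terms, while the drift piece supplies $\frac{2\gamma^3 L_i^2 h_i^4}{3}$; since $h_i$ is small (so the $h_i^6$ term is dominated by the $h_i^4$ term), the combined $\gamma^3 L_i^2$ coefficient is at most $\frac{6}{5}$, matching~\eqref{bound:Brculmc}. The main obstacle is purely bookkeeping the constants so that $\tfrac{2}{3} + \tfrac{8 h_i^2}{15} \le \tfrac{6}{5}$ under the standing smallness assumption on $h_i$ from~\eqref{eqn:cond_dis}; no new analytic idea is required beyond Cauchy--Schwarz, directional Lipschitz continuity, and the two a priori integral estimates in Lemma~\ref{lem:XCulmc}.
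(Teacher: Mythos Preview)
Your proposal is correct and follows essentially the same approach as the paper: split $B^m_i$ into the $D^m_i$ piece and the drift-difference piece via $(a+b)^2\le 2a^2+2b^2$, control $D^m_i$ by Cauchy--Schwarz in time together with~\eqref{vboundrcULMC}, control the drift piece by Cauchy--Schwarz, the directional Lipschitz bound~\eqref{GradientLipcoord}, and~\eqref{xboundrcULMC}, and finally absorb the $h_i^6$ term into the $h_i^4$ term using the smallness of $h_i$. The only cosmetic difference is that the paper presents the $B^m_i$ bound first and notes that~\eqref{bound:Drculmc} is a direct consequence of the first piece, whereas you do $D^m_i$ first; the analytic content is identical.
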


\begin{lemma}\label{lem:ACrculmc}
Assume $\mathrm{K}^m_i$ is defined in \eqref{eqn:KSAGA}, then 
\begin{equation}\label{ACboundSAGA}
\begin{aligned}
\sum^d_{i=1}\phi_i \EE\mathrm{K}^m_i\leq \left(-\frac{\gamma \mu h}{2}+\frac{3h^2}{\min\{\phi_i\}}\right)\EE|\Delta^m|^2
\end{aligned}
\end{equation}
\end{lemma}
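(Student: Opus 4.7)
The plan is to reduce $\sum_i \phi_i \EE\mathrm{K}_i$ to an expression involving $|\wx^m - x^m|^2$, $|\ww^m - w^m|^2$, and the gradient difference, then exploit strong convexity and gradient Lipschitzness together with the stepsize restriction.

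First I would substitute the explicit definitions \eqref{eqn:ASAGA}, \eqref{eqn:CSAGA} of $A^m_i$ and $C^m_i$ into the definition \eqref{eqn:KSAGA} of $\mathrm{K}^m_i$. Direct algebra gives
\begin{equation*}
\mathrm{K}^m_i = -\beta_i(\ww^m_i-w^m_i)^2 - h_i(\wx^m_i - x^m_i)^2 + (1-e^{-2h_i})(\wx^m_i - x^m_i)(\ww^m_i - w^m_i) - \tfrac{\gamma(1-e^{-2h_i})}{2}\bigl(\partial_if(\wx^m) - \partial_if(x^m)\bigr)(\ww^m_i - w^m_i),
\end{equation*}
where $\beta_i := 1 - h_i - e^{-2h_i}$. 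The two key algebraic facts are that $h_i + \beta_i = 1 - e^{-2h_i}$, which produces the symmetric cross term, and that $\phi_ih_i = h$ is independent of $i$.

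Next I would multiply by $\phi_i$ and sum. Taylor expansion of the exponential gives $\beta_i = h_i - 2h_i^2 + O(h_i^3)$ and $1-e^{-2h_i} = 2h_i - 2h_i^2 + O(h_i^3)$, so $\phi_i\beta_i = h + O(h^2/\phi_i)$ and $\phi_i(1-e^{-2h_i}) = 2h + O(h^2/\phi_i)$, uniformly since $h_i \le 1/240$. Collecting the leading parts and using the identity $-|\ww-w|^2 - |\wx-x|^2 + 2\langle \wx-x,\ww-w\rangle = -|\wv-v|^2$, I obtain
\begin{equation*}
\sum_i \phi_i \EE\mathrm{K}^m_i \;\le\; -h\,\EE|\wv^m-v^m|^2 - \gamma h\,\EE\langle \nabla f(\wx^m) - \nabla f(x^m),\;\ww^m - w^m\rangle + R,
\end{equation*}
where the residual $R$ is $O(h^2/\min\phi_i)\,\EE|\Delta^m|^2$, controlled by Cauchy--Schwarz and $|\nabla f(\wx^m) - \nabla f(x^m)| \le L|\wx^m - x^m|$ with $\gamma L \le 1$.

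The main analytic step is then to convert these two leading-order terms into $-\tfrac{\gamma\mu h}{2}\,\EE|\Delta^m|^2$. Splitting $\ww^m - w^m = (\wx^m - x^m) + (\wv^m - v^m)$, I would apply the Nesterov co-coercivity estimate
\begin{equation*}
\langle \nabla f(\wx^m) - \nabla f(x^m),\,\wx^m - x^m\rangle \;\ge\; \tfrac{\mu L}{\mu+L}|\wx^m - x^m|^2 + \tfrac{1}{\mu+L}|\nabla f(\wx^m) - \nabla f(x^m)|^2
\end{equation*}
together with Young's inequality on the cross term $\langle \nabla f(\wx^m) - \nabla f(x^m),\,\wv^m - v^m\rangle$ with the balancing weight $1/(\mu+L)$ chosen so that the $|\nabla f(\wx^m) - \nabla f(x^m)|^2$ contributions cancel exactly. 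This yields a bound of the form $-\frac{\gamma\mu L h}{\mu+L}|\wx^m-x^m|^2 + \frac{\gamma(\mu+L)h}{4}|\wv^m - v^m|^2$, which combines with $-h|\wv^m-v^m|^2$ to give a coefficient on $|\wv^m-v^m|^2$ of at most $-h/2$ (since $\gamma(\mu+L)/4 \le 1/2$ under $\gamma \le 1/L$ and $\mu \le L$).

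The final, and subtlest, step is converting the bound $-\tfrac{h}{2}|\wv^m-v^m|^2 - \tfrac{\gamma\mu h}{2}|\wx^m-x^m|^2$ into $-\tfrac{\gamma\mu h}{2}|\Delta^m|^2$; this is where one must carefully exploit that $\gamma\mu \le 1$ so that $\tfrac{h}{2} \ge \tfrac{\gamma\mu h}{2}$, and account for the relation $|\ww^m-w^m|^2 \le 2|\wx^m-x^m|^2 + 2|\wv^m-v^m|^2$ while folding any mismatch into the residual error. I expect this to be the main obstacle: a crude application of $|a|^2 + |b|^2 \ge \tfrac{1}{3}|\Delta|^2$ loses a factor of three, so the proof must either sharpen the co-coercivity step or rearrange the quadratic form directly so that the negative $|\wv^m-v^m|^2$ contribution dominates $\tfrac{\gamma\mu h}{2}|\ww^m-w^m|^2$ with at most an $O(h^2/\min\phi_i)$ surplus. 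Once this is achieved, absorbing $R$ and the Taylor residuals into $\tfrac{3h^2}{\min\phi_i}\EE|\Delta^m|^2$ completes the bound \eqref{ACboundSAGA}.
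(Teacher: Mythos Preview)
Your decomposition into a leading-order term plus an $O(h^2/\min\phi_i)$ residual is correct and matches the paper's split $\mathrm{I}+\mathrm{II}$. The residual bound is fine. The gap is exactly where you flag it: the co-coercivity route for the leading term does not close, and the remedy you suggest (``rearrange the quadratic form directly'') is what the paper actually does --- but you do not carry it out, and the way you have set things up makes it hard to finish.

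Concretely, after your co-coercivity and Young steps you arrive at (up to constants) $-\tfrac{h}{2}|\wv^m-v^m|^2 - \tfrac{\gamma\mu h}{2}|\wx^m-x^m|^2$, and you then need this to dominate $-\tfrac{\gamma\mu h}{2}|\ww^m-w^m|^2$. When $\wv^m-v^m=0$ and $\wx^m-x^m\neq 0$ this fails outright: you have lost the extra factor of $2$ that the term $-\gamma\langle\nabla f(\wx^m)-\nabla f(x^m),\wx^m-x^m\rangle\le -\gamma\mu|\wx^m-x^m|^2$ provides before you weakened it via co-coercivity. No $O(h^2/\min\phi_i)$ slack can repair an $O(h)$ deficit, so ``folding the mismatch into the residual'' cannot work.

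The paper bypasses this by writing the leading term directly as a quadratic form in the $(a,b)=(\wx^m-x^m,\ww^m-w^m)$ coordinates: by the mean-value theorem $\nabla f(\wx^m)-\nabla f(x^m)=\mathcal{H}_f a$ with $\mu I\preceq\mathcal{H}_f\preceq LI$, so
\[
\mathrm{I}=h\,(a,b)^\top Q(a,b),\qquad Q=\begin{bmatrix}-I_d & I_d-\tfrac{\gamma}{2}\mathcal{H}_f\\ I_d-\tfrac{\gamma}{2}\mathcal{H}_f & -I_d\end{bmatrix}.
\]
The eigenvalues of $Q$ are $-\tfrac{\gamma\Lambda_j}{2}$ and $\tfrac{\gamma\Lambda_j}{2}-2$ where $\Lambda_j\in[\mu,L]$; since $\gamma L\le 1$, the largest is $-\tfrac{\gamma\mu}{2}$, giving $\mathrm{I}\le -\tfrac{\gamma\mu h}{2}(|a|^2+|b|^2)$ in one stroke. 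This is the missing idea: work in the $(x,w)$ variables throughout and diagonalize, rather than detouring through $(x,v)$ and co-coercivity.
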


\begin{proof}[Proof of Lemma \ref{lem:XCulmc}]
First we prove \eqref{xboundrcULMC}. According to \eqref{eqn:ULDSDE2continummx}-\eqref{eqn:ULDSDE2continummxvfixed}, we have
\begin{equation}
\begin{aligned}
\EE\left(\left.\int^{T^{m+1}}_{T^m}\left|\wrx_i(t)-\wx^m_i\right|^2\right|r^m=i\right)
& =\EE\left(\left.\int^{(T^{m+1}}_{T^m}\left|\int^{t}_{T^m} \wrv_i(s)ds\right|^2\rd t\right|r^m=i\right)\\
& \leq \int^{T^m+h_i}_{T^m}(t-T^m)\int^t_{T^m} \EE\left(\left.\left|\wrv_i(s)\right|^2\right|r^m=i\right)\rd s\rd t\\
& =\int |v_i|^2p(x,v)\rd x\rd v\int^{T^m+h_i}_{T^m}(t-T^m)^2\rd t=\frac{h^3_i\gamma}{3}\,,
\end{aligned}
\end{equation}
where in the first inequality we use $T^{m+1}=T^m+h_i$ under condition $r^m=i$ and H\"older's inequality, and for the second equality we use $p$ is a stationary distribution so that $\left(\wrx_t,\wrv_t\right)\sim p$ and $\wrv_t\sim \exp(-|v|^2/(2\gamma))$ for any $t$.

Second, to prove \eqref{vboundrcULMC}, using \eqref{eqn:ULDSDE2SAGA},\eqref{eqn:ULDSDE2continummx}-\eqref{eqn:ULDSDE2continummxvfixed}, we first rewrite $\left(\wrv_i(t)-\rv_i(t)\right)-\left(\wv^m_i-v^m_i\right)$ as
\begin{equation}\label{SSAGA}
\begin{aligned}
\left(\wrv_i(t)-\rv_i(t)\right)-\left(\wv^m_i-v^m_i\right)&=\left(\wv^m_i-v^m_i\right)(e^{-2(t-T^m)}-1)\\
& \quad -\gamma\int^t_{T^m}e^{-2(t-s)}\left[\partial_i f\left(\wrx(s)\right)-\partial_i f(x^m)\right]\rd s\\
&=\mathrm{I}(t)+\mathrm{II}(t)
\end{aligned}
\end{equation}
for $T^m< t\leq T^{m+1}$. Then we bound each term separately:
\begin{equation}\label{S1SAGA}
\begin{aligned}
\EE\left(\left.\int^{T^{m+1}}_{T^m}\left|\mathrm{I}(t)\right|^2\rd t\right|r^m=i\right)&\leq h_i\EE\left(\left.\int^{T^{m+1}}_{T^m} \left|\left(\wv^m_i-v^m_i\right)(e^{-2(t-T^m)}-1)\right|^2\rd t\right|r^m=i\right)\\
&\leq h_i\int^{T^{m}+h_i}_{T^m} (2(t-T^m))^2\EE\left(\left.\left|\wv^m_i-v^m_i\right|^2\right|r^m=i\right)\rd t\\
&\leq \frac{4h^3_i}{3}\EE\left|\wv^m_i-v^m_i\right|^2\,,
\end{aligned}
\end{equation}
where we use $T^{m+1}=T^m+h_i$ and H\"older's inequality in the first inequality and $1-e^{-x}<x$ in the second inequality.
\begin{equation}\label{S2SAGA}
\begin{aligned}
&\EE\left(\left.\int^{T^{m+1}}_{T^m}\left|\mathrm{II}(t)\right|^2\rd t\right|r^m=i\right)\\
& \quad \leq \gamma^2\EE\left(\left.\int^{T^{m+1}}_{T^m}\left|\int^t_{T^m}e^{-2(t-s)}\left[\partial_i f(\wrx(s))-\partial_i f(x^m)\right]\rd s\right|^2\rd t\right|r^m=i\right)\\
& \quad \leq 2\gamma^2\EE\left(\left.\int^{T^{m+1}}_{T^m}\left|\int^t_{T^m}e^{-2(t-s)}\left[\partial_i f(\wrx(s))-\partial_i f(\wx^m)\right]\rd s\right|^2\rd t\right|r^m=i\right)\\
&\qquad +2\gamma^2\EE\left(\left.\int^{T^{m+1}}_{T^m}\left|\int^t_{T^m}e^{-2(t-s)}\left[\partial_i f(\wx^m)-\partial_i f(x^m)\right]\rd s\right|^2\rd t\right|r^m=i\right)\\
& \quad \leq 2\gamma^2\int^{T^m+h_i}_{T^m}(t-T^m)\EE\left(\left.\int^t_{T^m}\left|\partial_i f(\wrx(s))-\partial_i f(\wx^m)\right|^2\rd s\right|r^m=i\right) \rd t\\
&\qquad +2\gamma^2\int^{T^m+h_i}_{T^m}(t-T^m)\EE\left(\left.\int^t_{T^m}\left|\partial_i f(\wx^m)-\partial_i f(x^m)\right|^2\rd s\right|r^m=i\right) \rd t\\
& \quad \stackrel{(I)}{\leq} 2\gamma^2 L^2_i\int^{T^m+h_i}_{T^m}(t-T^m)\EE\left(\left.\int^t_{T^m}\left|\wrx_i(s)-\wx^m_i\right|^2\rd s\right|r^m=i\right)  \rd t\\
&\qquad +2\gamma^2\int^{T^m+h_i}_{T^m}(t-T^m)\EE\left(\int^t_{T^m}\left|\partial_i f(\wx^m)-\partial_i f(x^m)\right|^2\rd s\right) \rd t\\
& \quad \stackrel{(II)}{\leq} 2\gamma^3 L^2_i\int^{T^m+h_i}_{T^m}\frac{(t-T^m)^4}{3}\rd t+2\gamma^2 \int^{T^m+h_i}_{T^m}(t-T^m)^2\rd t\EE\left(\left|\partial_i f(\wx^m)-\partial_i f(x^m)\right|^2\right)\\
&\; \quad \leq \frac{2\gamma^3 L^2_i h^5_i}{15}+\frac{2\gamma^2 h^3_i}{3}\EE\left(\left|\partial_i f(\wx^m)-\partial_i f(x^m)\right|^2\right)\,,
\end{aligned}
\end{equation}
where in (I) we use assumption \ref{assum:Cov} \eqref{GradientLipcoord} and we use \eqref{xboundrcULMC} in (II).

Substituting \eqref{S1SAGA} and \eqref{S2SAGA} into the following inequality:
\begin{multline*}
\EE\left(\int^{T^{m+1}}_{T^m}\left.\left|\left(\wrv_i(t)-\rv_i(t)\right)-\left(\wv^m_i-v^m_i\right)\right|^2\rd t\right| r^m=i\right)\\
\leq 2\EE\left(\left.\int^{T^{m+1}}_{T^m}\left|\mathrm{I}(t)\right|^2\rd t\right|r^m=i\right)+2\EE\left(\left.\int^{T^{m+1}}_{T^m}\left|\mathrm{II}(t)\right|^2\rd t\right|r^m=i\right),
\end{multline*}
we prove \eqref{vboundrcULMC}.
\end{proof}

\begin{proof}[Proof of Lemma \ref{lem:BDrculmc}]
First, because under condition $r^m=i$, $T^{m+1}=T^m+h_i$, we can separate $B^m$ into two parts:
\[
\begin{aligned}
\EE\left(\left.|B^m_i|^2\right|r^m=i\right)\leq &2\EE\left(\left.\left|\int^{T^{m+1}}_{T^m} \left(\wrv_i(t)-\rv_i(t)\right)-\left(\wv^m
_i-v^m_i\right)\rd t\right|^2\right|r^m=i\right)\\
&+2\EE\left(\left.\left|\gamma\int^{T^{m+1}}_{T^m}e^{-2((m+1)h-t)}\left[\partial_i f(\wrx(t))-\partial_i f(\wx^m)\right]\rd t\right|^2\right|r^m=i\right)\,.
\end{aligned}
\]
We bound the two terms on the rhs:
\begin{equation}\label{V1boundSAGA}
\begin{aligned} 
&\EE\left(\left.\left|\int^{T^{m+1}}_{T^m} \left(\wrv_i(t)-\rv_i(t)\right)-\left(\wv^m_i-v^m_i\right)\rd t\right|^2\right|r^m=i\right)\\
\leq &h_i\EE\left(\left.\int^{T^{m+1}}_{T^m}\left|\left(\wrv_i(t)-\rv_i(t)\right)-\left(\wv^m_i-v^m_i\right)\right|^2\rd t\right|r^m=i\right)\\
\leq &\frac{8h^4_i}{3}\EE\left|\wv^m_i-v^m_i\right|^2+\frac{4\gamma^2 h^4_i}{3}\EE\left(\left|\partial_i f(\wx^m)-\partial_i f(x^m)\right|^2\right)+\frac{4\gamma^3 L^2_i h^6_i}{15}\,,
\end{aligned}
\end{equation}
where we use Lemma \ref{lem:XCulmc} \eqref{vboundrcULMC} in the second inequality.
\begin{equation}\label{V2boundSAGA}
\begin{aligned} 
&\EE\left(\left.\left|\gamma\int^{T^{m+1}}_{T^m}e^{-2(T^{m+1}-t)}\left[\partial_i f(\wrx(t))-\partial_i f(\wx^m)\right]\rd t\right|^2\right|r^m=i\right)\\
\leq &h_i\gamma^2\EE\left(\left.\int^{T^{m+1}}_{T^m}\left|e^{-2(T^{m+1}-t)}\left[\partial_i f(\wrx(t))-\partial_i f(\wx^m)\right]\right|^2\rd t\right|r^m=i\right)\\
\leq &h_i\gamma^2L^2_i\EE\left(\left.\int^{T^{m+1}}_{T^m}\left|\wrx_t-\wx^m\right|^2\rd t\right|r^m=i\right)\\
\leq &\frac{\gamma^3L^2_ih^4_i}{3}\,,
\end{aligned}
\end{equation}
where we use Lemma \ref{lem:XCulmc} \eqref{xboundrcULMC} in the last two inequalities. 

Combine \eqref{V1boundSAGA} and \eqref{V2boundSAGA} together, we finally have
\[
\EE\left(\left.|B^m_i|^2\right|r^m=i\right)\leq \frac{16h^4_i}{3}\EE\left|\wv^m_i-v^m_i\right|^2+\frac{8\gamma^2 h^4_i}{3}\EE\left(\left|\partial_i f(\wx^m)-\partial_i f(x^m)\right|^2\right)+\frac{8\gamma^3 L^2_i h^6_i}{15}+\frac{2\gamma^3L^2_ih^4_i}{3}\,,
\]
which implies \eqref{bound:Brculmc} if we further use $h<1$.

Next, estimation of $\EE\left(\left.|D^m_i|^2\right|r^m=i\right)$ is a direct result of \eqref{V1boundSAGA}.
\end{proof}

\begin{proof}[Proof of Lemma \ref{lem:ACrculmc}]
 Recall
\[
\begin{aligned}
\sum^d_{i=1}\phi_i \mathrm{K}^m_i=&\sum^d_{i=1} \phi_i(1-h_i-e^{-2h_i})\left[(\wx^m_i-x^m_i)-(\ww^m_i-w^m_i)\right](\ww^m_i-w^m_i)\\
&-\frac{\gamma \phi_i(1-e^{-2h_i})}{2}\sum^d_{i=1}\left[\partial_if(\wx^m_i)-\partial_if(x^m_i)\right](\ww^m_i-w^m_i)\\
&+\sum^d_{i=1} \phi_ih_i\left[(\ww^m_i-w^m_i)-(\wx^m_i-x^m_i)\right](\wx^m_i-x^m_i)\\
\end{aligned}
\]
Since $h_i<1/20$, we have
\[
|1-2h_i-e^{-2h_i}|<2h^2_i\,,
\]
which implies
\[
\begin{aligned}
\sum^d_{i=1}\phi_i\mathrm{K}^m_i=&\sum^d_{i=1} \phi_ih_i\left[(\wx^m_i-x^m_i)-(\ww^m_i-w^m_i)\right](\ww^m_i-w^m_i)-\gamma \phi_ih_i\sum^d_{i=1}\left[\partial_if(\wx^m_i)-\partial_if(x^m_i)\right](\ww^m_i-w^m_i)\\
&+\sum^d_{i=1} \phi_ih_i\left[(\ww^m_i-w^m_i)-(\wx^m_i-x^m_i)\right](\wx^m_i-x^m_i)\\
&+\sum^d_{i=1} C_i\phi_ih^2_i\left[(\wx^m_i-x^m_i)-(\ww^m_i-w^m_i)\right](\ww^m_i-w^m_i)\\
&-\frac{\gamma C_i\phi_i}{2} h^2_i\sum^d_{i=1}\left[\partial_if(\wx^m_i)-\partial_if(x^m_i)\right](\ww^m_i-w^m_i)\\
=&\mathrm{I}+\mathrm{II}
\end{aligned}
\]
where $|C_i|<2$,
\[
\begin{aligned}
\mathrm{I}=&\sum^d_{i=1} \phi_ih_i\left[(\wx^m_i-x^m_i)-(\ww^m_i-w^m_i)\right](\ww^m_i-w^m_i)-\gamma \phi_ih_i\sum^d_{i=1}\left[\partial_if(\wx^m_i)-\partial_if(x^m_i)\right](\ww^m_i-w^m_i)\\
&+\sum^d_{i=1} \phi_ih_i\left[(\ww^m_i-w^m_i)-(\wx^m_i-x^m_i)\right](\wx^m_i-x^m_i)
\end{aligned}
\]
and
\[
\mathrm{II}=\sum^d_{i=1} C_i\phi_ih^2_i\left[(\wx^m_i-x^m_i)-(\ww^m_i-w^m_i)\right](\ww^m_i-w^m_i)-\frac{\gamma C_i\phi_ih^2_i}{2} \sum^d_{i=1}\left[\partial_if(\wx^m_i)-\partial_if(x^m_i)\right](\ww^m_i-w^m_i)
\]

Let $\wx^m-x^m=a$, $\ww^m-w^m=b$, we first bound $|\mathrm{II}|$, since $\phi_ih^2_i=\frac{h^2}{\phi_i}$, we have
\begin{equation}\label{IIinequality}
\begin{aligned}
|\mathrm{II}|\leq &\frac{4h^2}{\min\{\phi_i\}}\left|\sum^d_{i=1} \left[(\wx^m_i-x^m_i)-(\ww^m_i-w^m_i)\right](\ww^m_i-w^m_i)\right|\\
&+\frac{2\gamma h^2}{\min\{\phi_i\}}\left|\sum^d_{i=1}\left[\partial_if(\wx^m_i)-\partial_if(x^m_i)\right](\ww^m_i-w^m_i)\right|\\
\leq &\frac{4h^2}{\min\{\phi_i\}}|\left\langle a-b,b\right\rangle|+\frac{2\gamma h^2}{\min\{\phi_i\}}|\left\langle\nabla f(\wx^m_i)-\nabla f(x^m_i),b\right\rangle|\\
\leq &\frac{3h^2}{\min\{\phi_i\}}\left(|a|^2+|b|^2\right)\,,
\end{aligned}
\end{equation}
where we use Assumption \ref{assum:Cov} \eqref{GradientLip} and $\gamma L\leq 1$ in the last inequality.

Next, we deal with $\mathrm{I}$, since $\phi_ih_i=h$, we have
\[
\mathrm{I}=h\left\langle a-b,b\right\rangle-\gamma h \left\langle\nabla f(\wx^m_i)-\nabla f(x^m_i),b\right\rangle+h\left\langle b-a,a\right\rangle
\]
By mean-value theorem and Assumption \ref{assum:Cov}, there exists a matrix $\mu \mathrm{I}_d\leq \mathcal{H}_f\leq L\mathrm{I}_d$ such that 
\[
\nabla f(\wx^m_i)-\nabla f(x^m_i)=\mathcal{H}_fa\,,
\]
Therefore, we have
\begin{equation}\label{eqn:Iab}
\mathrm{I}=h\left[\left\langle a-b,b\right\rangle+\left\langle b-a,a\right\rangle-\gamma  \left\langle \mathcal{H}_fa,b\right\rangle\right]=h\left(a,b\right)^\top Q\left(a,b\right)\,,
\end{equation}
where
\[
Q=\begin{bmatrix}
-\mathrm{I}_{d} & \mathrm{I}_d-\frac{\gamma \mathcal{H}_f}{2}\\
 \mathrm{I}_d-\frac{\gamma \mathcal{H}_f}{2} & -\mathrm{I}_{d}
\end{bmatrix}
\]
Calculate the eigenvalue of $Q$, we need to solve
\[
\mathrm{det}\left\{(-1-\lambda)^2I_{d}-\left(I_{d}-\frac{\gamma hH}{2}\right)^2\right\}=0\,,
\]
which implies eigenvalues $\{\lambda_j\}^K_{j=1}$ solve
\[
(-1-\lambda_j)^2-\left(1-\frac{\gamma \Lambda_j}{2}\right)^2=0\,,
\]
which implies
\[
\lambda_j=-\frac{\gamma \Lambda_j}{2}\quad\text{or}\quad \lambda_j=\frac{\gamma \Lambda_j}{2}-2
\]
since $\gamma \Lambda_j\leq \gamma\Lambda\leq 1$, we have
\[
\lambda_{\max}(Q)\leq -\frac{\gamma\mu}{2}\,.
\]
Therefore, we have
\begin{equation}\label{Iinequality}
|\mathrm{I}|\leq -\frac{\gamma\mu h}{2}\left(|a|^2+|b|^2\right)\,.
\end{equation}
Combining with \eqref{Iinequality} and \eqref{IIinequality}, we prove \eqref{ACboundSAGA}.
\end{proof}

\end{document}